\newtheorem{theorem}{Theorem}
\newtheorem{lemma}[theorem]{Lemma}
\newtheorem{corollary}[theorem]{Corollary}
\newtheorem{assumption}{Assumption}
\newtheorem{example}{Example}
\newtheorem{remark}{Remark}
\theoremstyle{definition}
\newcommand{\frameit}[1]{
    \begin{center}
    \framebox[.95\columnwidth][c]{
        \begin{minipage}{.9\columnwidth}
        #1
		\vspace{-0mm}
        \end{minipage}
    }
    \end{center}
}
\DeclareMathOperator*{\Ex}{\vphantom{p}\mathbb{E}}
\let\inf\undef
\DeclareMathOperator*{\inf}{\vphantom{p}inf}
\let\sup\undef
\DeclareMathOperator*{\sup}{\vphantom{p}sup}
\newcommand{\mbf}[1]{\mathbf{#1}}
\newcommand{\mc}[1]{\mathcal{#1}}
\newcommand{\mrm}[1]{\mathrm{#1}}
\newcommand{\Mbar}{\bar{M}}
\newcommand{\ev}[1]{\ensuremath{\Lambda_{#1}}}
\newcommand{\norm}[1]{\left\|#1\right\|}
\newcommand{\sign}{\mrm{sign}}
\newcommand{\argmin}[1]{\underset{#1}{\mrm{argmin}} \ }
\newcommand{\argmax}[1]{\underset{#1}{\mrm{argmax}} \ }
\newcommand{\reals}{\mathbb{R}}
\newcommand{\E}[1]{\mathbb{E}\left[ #1 \right]} 
\newcommand{\En}{\mathbb{E}}  
\newcommand{\Es}[2]{\Ex_{#1}\left[ #2 \right]} 
\newcommand{\inner}[1]{\left\langle #1 \right\rangle}
\newcommand{\ip}[2]{\left<#1,#2\right>}
\newcommand{\ind}[1]{{\bf 1}\left\{#1\right\}}
\newcommand{\tr}{\ensuremath{{\scriptscriptstyle\mathsf{T}}}}
\newcommand{\Eunderone}[1]{\underset{#1}{\En}}
\newcommand{\Eunder}[2]{\underset{\underset{#1}{#2}}{\En}}
\newcommand\cB{\mathcal{B}}
\newcommand\cR{\mathcal{R}}
\newcommand\X{\mathcal{X}}
\newcommand\cX{\mathcal{X}}
\newcommand\F{\mathcal{F}}
\newcommand\Reg{\mbf{Reg}}
\renewcommand\P{\mbf{P}}
\newcommand\loss{\mathrm{loss}}
\newcommand{\Relax}[3]{\mbf{Rel}_{#1}\left(#2 \middle| #3 \right)}
\newcommand{\Rel}[2]{\mbf{Rel}_{#1}\left(#2 \right)}
\def\deq{\triangleq}
\begin{document}

\setlength{\parskip}{2mm}
\setlength{\parindent}{0pt}

\title{Online Learning with Predictable Sequences}

\author{ Alexander Rakhlin\\ University of Pennsylvania \and Karthik Sridharan\\
University of Pennsylvania}

\maketitle

\begin{abstract}
	We present methods for online linear optimization that take advantage of benign (as opposed to worst-case) sequences. Specifically if the sequence encountered by the learner is described well by a known ``predictable process'', the algorithms presented enjoy tighter bounds as compared to the typical worst case bounds. Additionally, the methods achieve the usual worst-case regret bounds if the sequence is not benign. Our approach can be seen as a way of adding \emph{prior knowledge} about the sequence within the paradigm of online learning. The setting is shown to encompass partial and side information. Variance and path-length bounds \cite{hazan2010extracting, Chiangetal12} can be seen as particular examples of online learning with simple predictable sequences.
	
	We further extend our methods and results to include competing with a set of possible predictable processes (models), that is ``learning'' the predictable process itself concurrently with using it to obtain better regret guarantees. We show that such model selection is possible under various assumptions on the available feedback. Our results suggest a promising direction of further research with potential applications to stock market and time series prediction. 
\end{abstract}


\section{Introduction}

No-regret methods are studied in a variety of fields, including learning theory, game theory, and information theory \cite{PLG}. These methods guarantee a certain level of performance in a sequential prediction problem, irrespective of the sequence being presented. While such ``protection'' against the worst case is often attractive, the bounds are naturally pessimistic. It is, therefore, desirable to develop algorithms that yield tighter bounds for ``more regular'' sequences, while still providing protection against worst-case sequences. Some successful results of this type have appeared in \cite{cesa2007improved, hazan2010extracting, hazan2009better, Chiangetal12, BarHazRak07} within the framework of prediction with expert advice and online convex optimization. 

In \cite{RakSriTew11nips}, a general game-theoretic formulation was put forth, with ``regularity'' of the sequence modeled as a set of restrictions on the possible moves of the adversary. Through a non-constructive theoretical analysis, the authors of \cite{RakSriTew11nips} pointed to the \emph{existence} of quite general regret-minimization strategies for benign sequences, but did not provide a computationally feasible method. In this paper, we present algorithms that achieve some of the regret bounds of \cite{RakSriTew11nips} for sequences that can be roughly described as
$$\mbox{sequence = predictable process + adversarial noise}$$

This paper focuses on the setting of online linear optimization. The results achieved in the full-information case carry over to online \emph{convex} optimization as well. To remind the reader of the setting, the online learning process is modeled as a repeated game with convex sets $\F$ and $\X$ for the learner and Nature, respectively. At each round $t=1,\ldots,T$, the learners chooses $f_t\in\F$ and observes the move $x_t\in \X$ of Nature. The learner suffers a loss of $\inner{f_t,x_t}$ and the goal is to minimize regret, defined as 
$$\Reg_T = \sum_{t=1}^T \inner{f_t,x_t}-\inf_{f\in\F}\sum_{t=1}^T \inner{f,x_t}.$$

There are a number of ways to model ``more regular'' sequences. Let us start with the following definition. Fix a sequence of functions $M_t:\cX^{t-1}\mapsto \cX$, for each $t\in \{1,\ldots,T\} \deq [T]$. These functions define a predictable process
$$M_1,~ M_2(x_1),~\ldots,~ M_T(x_1,\ldots,x_{T-1}) \ .$$ 
If, in fact, $x_t = M_t(x_1,\ldots,x_{t-1})$ for all $t$, one may view the sequence $\{x_t\}$ as a (noiseless) time series, or as an oblivious strategy of Nature. If we knew that the sequence given by Nature follows exactly this evolution, we should suffer no regret.

Suppose that we have a hunch that the actual sequence will be ``roughly'' given by this predictable process: $x_t \approx M_t(x_1,\ldots,x_{t-1})$. In other words, we suspect that the sequence is described as predictable process plus adversarial noise. Can we use this fact to incur smaller regret if our suspicion is correct? Ideally, we would like to ``pay'' only for the unpredictable part of the sequence.

\paragraph{Information-Theoretic Justification} Let us spend a minute explaining why such regret bounds are information-theoretically possible. The key is the following observation, made in \cite{RakSriTew11nips}. The non-constructive upper bounds on the minimax value of the online game involve a symmetrization step, which we state for simplicity of notation for the linear loss case with $\F$ and $\cX$ being dual unit balls:
\begin{align*}
	\sup_{x_{1},x'_{1}}\En_{\epsilon_{1}}\ldots \sup_{x_{T},x'_{T}}\En_{\epsilon_{T}}\left\| \sum_{t=1}^{T} \epsilon_t\Big(x'_t - x_t\Big)  \right\|_*
	&\leq 2\sup_{x_{1}}\En_{\epsilon_{1}}\ldots\sup_{x_{T}}\En_{\epsilon_{T}} \left\| \sum_{t=1}^{T} \epsilon_t x_t \right\|_*
\end{align*}
If we instead only consider sequences such that at any time $t\in[T]$, $x_t$ and $x'_t$ have to be $\sigma_t$-close to the predictable process $M_t(x_1,\ldots,x_{t-1})$, we can add and subtract the ``center'' $M_t$ on the left-hand side of the above equation and obtain tighter bounds \emph{\textbf{for free, irrespective of the form of $M_t(x_1,\ldots,x_{t-1})$}}. To make this observation more precise, let 
\begin{align}
	\label{eq:def_constraint}
	C_t = C_t(x_1,\ldots,x_{t-1})= \big\{x: \|x-M_t(x_1,\ldots,x_{t-1})\|_* \leq \sigma_t\big\}
\end{align}
be the set of allowed deviations from the predictable ``trend''. We then have a bound 
$$\sup_{x_{1},x'_1\in C_1}\En_{\epsilon_{1}}\ldots\sup_{x_{T},x'_T \in C_T}\En_{\epsilon_{T}} \left\| \sum_{t=1}^{T} \epsilon_t \Big(x'_t-M_t(x_1,\ldots,x_{t-1}) + M_t(x_1,\ldots,x_{t-1})-x_t\Big) \right\|_* \leq c\sqrt{\sum_{t=1}^T \sigma_t^2}$$
on the value of the game against such ``constrained'' sequences, where the constant $c$ depends on the smoothness of the norm. This short description only serves as a motivation, and the more precise statements about the value of a game against constrained adversaries can be found in \cite{RakSriTew11nips}.

The development so far is a good example of how a purely theoretical observation can point to existence of better prediction methods. What is even more surprising, for most of the methods presented below, the individual $\sigma_t$'s need not be known ahead of time except for their total sum $\sum_{t=1}^T \sigma_t^2$. Moreover, the latter sum need not be known in advance either, thanks to the standard doubling trick, and one can obtain upper bounds of 
\begin{align}
	\label{eq:form_of_var_bound}
	\sum_{t=1}^T \inner{f_t,x_t}-\inf_{f\in\F}\sum_{t=1}^T \inner{f,x_t} \leq c\sqrt{\sum_{t=1}^T \|x_t-M_t(x_1,\ldots,x_{t-1})\|_*^2}
\end{align}
on regret, for some problem-dependent constant $c$.

Let us now discuss several types of statistics $M_t$ that could be of interest.
\begin{example}
	Regret bounds in terms of $$M_t(x_1,\ldots,x_{t-1}) = x_{t-1}$$ are known as \emph{path length bounds} \cite{RakSriTew11nips,Chiangetal12}. Such bounds can be tighter than the pessimistic $O(\sqrt{T})$ bounds when the previous move of Nature is a good proxy for the next move.
	
	Regret bounds in terms of 
	$$M_t(x_1,\ldots,x_{t-1}) = \frac{1}{t-1}\sum_{s=1}^{t-1} x_s$$ 
	are known as \emph{variance bounds} \cite{cesa2007improved,hazan2009better,hazan2010extracting,RakSriTew11nips}. 
	One may also consider fading memory statistics
	$$M_t(x_1,\ldots,x_{t-1}) = \sum_{s=1}^{t-1} \alpha_s x_s, ~~~~~~\sum_{s=1}^{t-1} \alpha_s = 1, ~~~~~ \alpha_s\geq 0$$
	or even plug in an \emph{auto-regressive model}.
	
	If ``phases'' are expected in the data (e.g., stocks tend to go up in January), one may consider 
	$$M_t(x_1,\ldots,x_{t-1}) = x_{t-k}$$ for some phase length $k$. Alternatively, one may consider averaging of the past occurrences $T_j(t)\subset \{1,\ldots,t\}$ of the current phase $j$ to get a better predictive power:
	$$M_t(x_1,\ldots,x_{t-1}) = \sum_{s\in T_t} \alpha_s x_s  \ .$$
\end{example}

\paragraph{Interpreting the Bounds} The use of a predictable process $(M_t)_{t \ge 1}$ can be seen as a way of incorporating \emph{prior knowledge} about the sequence $(x_t)_{t \ge 1}$. Importantly, the bounds still provide the usual worst-case protection if the process $M_t$ does not predict the sequence well. To see this, observe that the bounds of the paper scale with
$\sqrt{\sum_{t=1}^T \|x_t-M_t\|_*^2} \le 2\max_{x\in\X} \|x\|\sqrt{T}$
which is only a factor of $2$ larger than the typical bounds. However when $M_t$'s do indeed predict $x_t$'s well we have low regret, a property we get almost for free. Notice that in all our analysis the predictable process $(M_t)_{t \ge 1}$ can be any arbitrary function of the past.

\paragraph{A More General Setting}
The predictable process $M_t$ has been written so far as a function of $x_1,\ldots,x_{t-1}$, as we assumed the setting of full-information online linear optimization (that is, $x_{t}$ is revealed to the learner after playing $f_t$). Whenever our algorithm is deterministic, we may reconstruct the sequence $f_1,\ldots,f_{t}$ given the sequence $x_1,\ldots,x_{t-1}$, and thus no explicit dependence of $M_t$ of the learner's moves are required. More generally, however, nothing prevents us from defining the predictable process $M_t$ as a function 
\begin{align}
	\label{eq:M-fun-general}
	M_t (I_1,\ldots,I_{t-1}, f_1,\ldots,f_{t-1}, q_{1},\ldots,q_{t-1})
\end{align}
where $I_s$ is the \emph{information} conveyed to the learner at step $s\in[T]$ (defined on the appropriate information space ${\mathcal I}$) and $q_s$ is the randomized strategy of the learner at time $s$. For instance, in the well-studied bandit framework, the feedback $I_s$ is defined as the scalar value of the loss $\inner{f_s,x_s}$, yet the actual move $x_s$ may remain unknown to the learner. More general partial information structures have also been studied in the literature. 

When $M_t$ is written in the form \eqref{eq:M-fun-general}, it becomes clear that one can consider scenarios well beyond the partial information models. For instance, the information $I_s$ might contain better or complete information about the past, thus modeling a delayed-feedback setting (see Section~\ref{sec:delayed_feedback}). Another idea is to consider a setting where $I_s$ contains some state information pertinent to the online learning problem.

The paper is organized as follows. In Section~\ref{sec:fullinfo}, we provide a number of algorithms for full-information online linear optimization, taking advantage of a given predictable process $M_t$. These methods can be seen as being ``optimistic'' about the sequence, incorporating $M_{t+1}$ into the calculation of the next decision as if it were the true. We then turn to the partial information scenario in Section~\ref{sec:bandit} and show how to use the full-information bounds together with estimation of the missing information. Along the way, we prove a bound for nonstochastic multiarmed bandits in terms of the loss of the best arm -- a result that does not appear to be available in the literature.  In Section~\ref{sec:learning} we turn to the question of  \emph{learning} $M_t$ itself during the prediction process. We present several scenarios which differ in the amount of feedback given to the learner. Finally, we consider delayed feedback and some other scenarios that fall under the general umbrella.

\begin{remark}
We remark that most of the regret bounds we present in this paper are of the form $A\eta^{-1}+B\eta\sum_{t=1}^T\|x_t-M_t\|_*^2$. If variation around the trend is known in advance, one may choose $\eta$ optimally to obtain the form in \eqref{eq:form_of_var_bound}. Otherwise, we employ the standard doubling trick which we provide for completeness in Section~\ref{sec:dbltrick}. The doubling trick sets $\eta$ in a data-dependent way to achieve \eqref{eq:form_of_var_bound} with a slightly worse constant.
\end{remark}

\paragraph{Notation:}
We use the notation $y_{t':t}$ to represent the sequence $y_{t'}, \ldots,y_{t}$. We also use the notation $x[i]$ to represent the $i^{th}$ element of vector $x$. We use the notation $x[1:c]$ to represent the $c$-dimensional vector $(x[1],\ldots,x[c])$. $D_R(f,f')$ is used to represent the Bregman divergence between $f$ and $f'$ w.r.t. function $R$. We denote the set $\{1,\ldots,T\}$ by $[T]$.

\section{Full Information Methods}
\label{sec:fullinfo}

In this section we assume that the value $M_t$ is known at the beginning of round $t$: it is either calculated by the learner or conveyed by an external source. The first algorithm we present is a modification of the Follow the Regularized Leader (FTRL) method with a self-concordant regularizer. The advantage of this method is its simplicity and the close relationship to the standard FTRL. Next, we exhibit a Mirror Descent type method which can be seen as a generalization of the recent algorithm of \cite{Chiangetal12}. Later in the paper (in Section~\ref{sec:fpl-methods}) we also present full-information methods based on the idea of a random playout, developed in \cite{RakShaSri12arxiv} for the problem of regret minimization in the worst case. To the best of our knowledge, these results are the first variation-style bounds for Follow the Perturbed Leader (FPL) algorithms.

For all the methods presented below, we assume (without loss of generality) that $M_1=0$. Since we assume that $M_t$ can be calculated from the information provided to the learner or the value of $M_t$ is conveyed from outside, we do not write the dependence of $M_t$ on the past explicitly.

\subsection{Follow the Regularized Leader with Self-Concordant Barrier}
\label{sec:self-conc}

Let $\F\subset\reals^d$ be a convex compact set and let $\cR$ be a self-concordant function for this set. Without loss of generality, suppose that $\min_{f\in\F} \cR(f) = 0$. Given $f\in\mbox{int}(\F)$, define the local norm $\| \cdot \|_{f}$ with respect to $\cR$ by $\| g \|_{f} \deq \sqrt{g^\tr (\nabla^2 \cR(f) )g}$. The dual norm is then $\| x \|^*_{f} = \sqrt{x^\tr (\nabla^2 \cR(f) )^{-1} x}$. Given the $f_t$ defined in the algorithm below, we use the shorthand $\|\cdot\|_t = \| \cdot \|_{f_t}$. 

Consider the following algorithm.

\frameit{
\textbf{Optimistic Follow the Regularized Leader}\\
Input: $\cR$ self-concordant barrier, learning rate $\eta>0$. Initialize $f_1 = \arg\min_{f\in\F} \cR(f)$.\\
At $t=1,\ldots,T$, predict $f_t$, observe $x_t$, and update
	$$f_{t+1} = \arg\min_{f\in\F}~ \eta\inner{f, \sum_{s=1}^t x_s + M_{t+1}} + \cR(f)$$
}

We notice that for $M_{t+1}=0$, the method reduces to the Follow the Regularized Leader (FTRL) algorithm of \cite{abernethy2008competing,abernethy2012interior}. When $M_{t+1}\neq 0$, the algorithm can be seen as ``guessing'' the next move and incorporating it into the objective. If the guess turns out to be correct, the method should suffer no regret, according to the ``be the leader'' analysis.

The following regret bound holds for the proposed algorithm:

\begin{lemma}
	\label{lem:self_conc_full_info}
		Let $\F\subset \reals^d$ be a convex compact set endowed with a self-concordant barrier $\cR$ with $\min_{f\in\F} \cR(f)=0$. For any strategy of Nature, the Optimistic FTRL algorithm yields, for any $f^*\in\F$, 
	\begin{align}
		\sum_{t=1}^T \inner{f_t,x_t} - \sum_{t=1}^T \inner{f^*,x_t}\leq \eta^{-1}\cR(f^*) + 2\eta\sum_{t=1}^T (\|x_t-M_t\|_t^*)^2
	\end{align}	
	as long as $\eta\|x_t-M_t\|_t^*<1/4$ for all $t$.
\end{lemma}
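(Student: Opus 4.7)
The plan is to decouple the proof into two main pieces: first, a ``Be-the-Regularized-Leader'' (BTRL) reduction that converts the regret into a sum of inner products between consecutive iterate differences and the prediction errors; and second, a self-concordance-based stability estimate that controls each summand.

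For the reduction, I would first observe that the iterate $f_t$ is exactly the standard (non-optimistic) FTRL iterate applied to the modified loss sequence $\tilde x_s \deq x_s + M_{s+1} - M_s$ (with the conventions $M_1 = 0$ and $M_{T+1} \deq 0$), since $\sum_{s<t}\tilde x_s = \sum_{s<t} x_s + M_t$ telescopes to match the argument of the algorithm. Applying the classical BTRL inequality to this modified sequence, $\sum_t \langle f_{t+1}, \tilde x_t\rangle \leq \sum_t \langle f^*, \tilde x_t\rangle + \eta^{-1}\cR(f^*)$, then expanding $\tilde x_t = x_t + M_{t+1} - M_t$ and collecting the $M$-contributions on the left via index-shifting (the boundary conditions $M_1 = M_{T+1} = 0$ convert $\sum_t \langle f_{t+1}, M_{t+1}-M_t\rangle$ into $\sum_t \langle f_t - f_{t+1}, M_t\rangle$), one obtains the clean reduction $\Reg_T \leq \eta^{-1}\cR(f^*) + \sum_{t=1}^T \langle f_t - f_{t+1}, x_t - M_t\rangle$.

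The second step is to bound each summand $\langle f_t - f_{t+1}, x_t - M_t\rangle$ by $2\eta(\|x_t - M_t\|_t^*)^2$. By Cauchy--Schwarz in the local norm this reduces to showing $\|f_t - f_{t+1}\|_t \leq 2\eta\|x_t - M_t\|_t^*$, which I would establish from the self-concordance of $\cR$ via the standard Dikin-ellipsoid estimate: a gradient perturbation of dual local norm at most $1/4$ shifts the unconstrained minimizer by at most twice that amount in the primal local norm (this is precisely where the hypothesis $\eta\|x_t - M_t\|_t^* < 1/4$ enters). The main technical obstacle is that the direct gradient computation gives $\nabla\cR(f_t) - \nabla\cR(f_{t+1}) = \eta(x_t - M_t + M_{t+1})$, which carries a spurious $M_{t+1}$ term. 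I expect to resolve this by introducing the intermediate iterate $\hat f_{t+1} = \arg\min_{f\in\F} \eta\langle f, \sum_{s\leq t}x_s\rangle + \cR(f)$, for which $\nabla\cR(f_t) - \nabla\cR(\hat f_{t+1}) = \eta(x_t - M_t)$ exactly, and using a Bregman-divergence accounting (exploiting the first-order optimality of $f_{t+1}$) to show that the residual $\hat f_{t+1} \to f_{t+1}$ correction, which depends only on $M_{t+1}$, does not inflate the inner product against $x_t - M_t$. Combining this per-step estimate with the reduction then yields the stated bound.
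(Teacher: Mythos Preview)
Your first reduction is correct and elegant: Optimistic FTRL is indeed vanilla FTRL on $\tilde x_t = x_t + M_{t+1} - M_t$, and the telescoping you describe really does give
\[
\Reg_T \;\le\; \eta^{-1}\cR(f^*) + \sum_{t=1}^T \inner{f_t - f_{t+1},\, x_t - M_t}.
\]
The gap is in the second step. The inequality you now need, $\sum_t \inner{f_t - f_{t+1}, x_t - M_t} \le 2\eta\sum_t (\|x_t - M_t\|_t^*)^2$, is \emph{false} in general, and your proposed fix cannot rescue it. The point is that $f_{t+1}$ depends on $M_{t+1}$, which may be arbitrarily large even when every $\|x_s - M_s\|$ is tiny. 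Concretely, take $T=2$, $M_1=0$, $x_1=\epsilon v$, $M_2$ large, $x_2 = M_2 + \epsilon w$, $M_3 = 0$. Then $\|f_1 - f_2\|$ is order~$1$ (driven by $M_2$), so $\inner{f_1-f_2, x_1-M_1}$ is order~$\epsilon$, not order~$\eta\epsilon^2$; the $t=2$ term does not compensate. Your residual $\inner{\hat f_{t+1} - f_{t+1},\, x_t - M_t}$ is precisely this bad term: first-order optimality of $f_{t+1}$ tells you about $\nabla\cR(f_{t+1}) + \eta(\sum_{s\le t}x_s + M_{t+1})$, which constrains the $M_{t+1}$ direction, not the alignment of $\hat f_{t+1}-f_{t+1}$ with $x_t - M_t$. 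There is no Bregman cancellation to exploit here.

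The paper avoids this by placing your intermediate iterate $\hat f_{t+1}$ (their $g_{t+1} = \arg\min_f \eta\inner{f,\sum_{s\le t}x_s}+\cR(f)$) \emph{inside} the reduction rather than as a post-hoc correction. They decompose $\inner{f_t-f^*,x_t}$ into $\inner{f_t-g_{t+1}, x_t-M_t}$ plus a remainder $\inner{f_t-g_{t+1},M_t}+\inner{g_{t+1}-f^*,x_t}$, and then prove by a BTL-style induction (using optimality of $f_T$ for the objective $\sum_{s<T}x_s+M_T$ \emph{and} optimality of $g_{T+1}$ for $\sum_{s\le T}x_s$) that the summed remainder is at most $\eta^{-1}\cR(f^*)$. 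This yields
\[
\Reg_T \;\le\; \eta^{-1}\cR(f^*) + \sum_{t=1}^T \inner{f_t - g_{t+1},\, x_t - M_t},
\]
where now $\nabla\cR(f_t)-\nabla\cR(g_{t+1}) = \eta(x_t-M_t)$ exactly, and the self-concordance stability bound $\|f_t - g_{t+1}\|_t \le 2\eta\|x_t - M_t\|_t^*$ applies directly. In short: your $\hat f_{t+1}$ is the right object, but it must enter at the reduction stage, not the stability stage.
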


By the argument of \cite{abernethy2008competing,abernethy2012interior}, at the expense of an additive constant in the regret, the comparator $f^*$ can be taken from a smaller set, at a distance $1/T$ from the boundary. For such an $f^*$, we have $\cR(f^*) \leq \vartheta\log T$ where $\vartheta$ is a self-concordance parameter of $\cR$.


\subsection{Mirror-Descent algorithm}

The next algorithm is a modification of a Mirror Descent (MD) method for regret minimization. Let $\cR$ be a $1$-strongly convex function with respect to a norm $\|\cdot\|$, and let $D_\cR(\cdot,\cdot)$ denote the Bregman divergence with respect to $\cR$. Let $\nabla \cR^*$ be the inverse of the gradient mapping $\nabla \cR$. Let $\|\cdot\|_*$ be the norm dual to $\|\cdot\|$. We do not require $\F$ and $\cX$ to be unit dual balls.

%
%
Consider the following algorithm:
\frameit{
\textbf{Optimistic Mirror Descent Algorithm }\\
Input: $\cR$ $1$-strongly convex w.r.t. $\|\cdot\|$, learning rate $\eta>0$\\
Initialize $f_1 = g_1 = \arg\min_{g} \cR(g)$\\
At $t=1,\ldots,T$, predict $f_t$ and update
\begin{align*}
	g_{t+1} &= \argmin{g \in \F} \eta \inner{g, x_t} + D_\cR(g, g_t) \\
	f_{t+1} &= \argmin{f \in \F} \eta \inner{f,M_{t+1}} + D_\cR(f,g_{t+1})
\end{align*}
}

Such a two-projection algorithm for the case $M_t=x_{t-1}$ has been exhibited recently in \cite{Chiangetal12}. 

\begin{lemma}
	\label{lem:two-step-MD}
	Let $\F$ be a convex set in a Banach space $\cB$ and $\cX$ be a convex set in the dual space $\cB^*$. Let $\cR:\cB\mapsto\reals$ be a $1$-strongly convex function on $\F$ with respect to some norm $\|\cdot\|$. For any strategy of Nature, the Optimistic Mirror Descent Algorithm yields, for any $f^*\in\F$,
	$$\sum_{t=1}^T \inner{f_t, x_t} - \sum_{t=1}^T \inner{f^*, x_t} \leq  \eta^{-1}R_{\max}^2 + \frac{\eta}{2} \sum_{t=1}^T \|x_t-M_t\|_*^2 $$
	where $R_{\max}^2 = \max_{f\in\F} \cR(f) - \min_{f\in\F} \cR(f)$. 
\end{lemma}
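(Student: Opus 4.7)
\textbf{Proof plan for Lemma~\ref{lem:two-step-MD}.} My plan is to follow the two-step argument of \cite{Chiangetal12}, generalized from $M_t = x_{t-1}$ to an arbitrary predictable process, and driven entirely by the first-order optimality conditions of the two projections together with the three-point Bregman identity. The central decomposition I will use is
\begin{align*}
	\inner{f_t - f^*, x_t} = \inner{f_t - g_{t+1}, x_t - M_t} + \inner{f_t - g_{t+1}, M_t} + \inner{g_{t+1} - f^*, x_t},
\end{align*}
where the last two terms get absorbed into a telescoping Bregman potential and the first (the ``prediction error'' term) is where the $\|x_t-M_t\|_*^2$ ultimately appears.

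The first step is to apply the first-order optimality condition for the update defining $f_t$ at the point $g_{t+1}\in\F$, and the optimality condition for the update defining $g_{t+1}$ at the point $f^*\in\F$. Combined with the identity $\inner{\nabla\cR(b)-\nabla\cR(c),a-b} = D_\cR(a,c) - D_\cR(a,b) - D_\cR(b,c)$, these give
\begin{align*}
	\eta \inner{f_t - g_{t+1}, M_t} &\leq D_\cR(g_{t+1}, g_t) - D_\cR(g_{t+1}, f_t) - D_\cR(f_t, g_t), \\
	\eta \inner{g_{t+1} - f^*, x_t} &\leq D_\cR(f^*, g_t) - D_\cR(f^*, g_{t+1}) - D_\cR(g_{t+1}, g_t).
\end{align*}
Adding the two makes the $\pm D_\cR(g_{t+1},g_t)$ terms cancel, and this is the key algebraic reason for having the intermediate iterate $g_{t+1}$ at all.

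Next I handle $\inner{f_t - g_{t+1}, x_t - M_t}$ via H\"older's inequality followed by Fenchel--Young: $\eta\inner{f_t-g_{t+1},x_t-M_t}\leq \eta\|f_t-g_{t+1}\|\,\|x_t-M_t\|_* \leq \tfrac12\|f_t-g_{t+1}\|^2+\tfrac{\eta^2}{2}\|x_t-M_t\|_*^2$. By $1$-strong convexity of $\cR$, the first piece is at most $D_\cR(g_{t+1},f_t)$, which is precisely the negative term left over from the optimality bound above. It therefore cancels, and together with discarding the nonpositive $-D_\cR(f_t,g_t)$ I obtain the per-step inequality
\begin{align*}
	\eta\inner{f_t - f^*, x_t} \leq \frac{\eta^2}{2}\|x_t-M_t\|_*^2 + D_\cR(f^*, g_t) - D_\cR(f^*, g_{t+1}).
\end{align*}

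The remainder is bookkeeping: sum over $t=1,\ldots,T$, telescope the Bregman terms to leave $D_\cR(f^*,g_1)$, and use that $g_1$ minimizes $\cR$ to bound $D_\cR(f^*,g_1)\leq \cR(f^*)-\cR(g_1)\leq R_{\max}^2$; dividing by $\eta$ yields the stated inequality. The main obstacle I anticipate is purely notational --- keeping the ordering of arguments in the Bregman divergences straight so that the three-point identity produces exactly the signs needed for the $D_\cR(g_{t+1},g_t)$ cancellation across the two optimality conditions and the $D_\cR(g_{t+1},f_t)$ cancellation against the strong-convexity step. No extra regularity beyond $1$-strong convexity of $\cR$ and convexity of $\F$ is required, matching the hypotheses of the lemma.
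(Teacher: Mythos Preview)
Your proposal is correct and follows essentially the same argument as the paper's proof: the identical three-term decomposition of $\inner{f_t-f^*,x_t}$, the two Bregman three-point inequalities from the first-order optimality conditions (with the $D_\cR(g_{t+1},g_t)$ cancellation), the H\"older/Young bound on the prediction-error term, and the cancellation of $\tfrac12\|f_t-g_{t+1}\|^2$ against $D_\cR(g_{t+1},f_t)$ via $1$-strong convexity, followed by telescoping. The only cosmetic difference is that you carry the factor $\eta$ on the left throughout whereas the paper divides by $\eta$; the steps and constants match exactly.
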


As mentioned before, the sum $\sum_{t=1}^T \|x_t-M_t\|_*^2$ need not be known in advance in order to set $\eta$, as the usual doubling trick can be employed. Both the Optimistic MD and Optimistic FTRL work in the setting of online convex optimization, where $x_t$'s are now gradients at the points chosen by the learner. Last but not least, notice that if the sequence is not following the trend $M_t$ as we hoped it would, we still obtain the same bounds as for the Mirror Descent (respectively, FTRL) algorithm, up to a constant. 

\subsubsection{Local Norms for Exponential Weights}

For completeness, we also exhibit a bound in terms of local norms for the case of $\F\subset \reals^d$ being the probability simplex and $\cX$ being the $\ell_\infty$ ball. In the case of bandit feedback, such bounds serve as a stepping stone to building a strategy that explores according to the local geometry of the set \cite{AbeRak09}. Letting $\cR(f) = \sum_{i=1}^d f(i) \log f(i) -1$, the Mirror Descent algorithm corresponds to the well-known Exponential Weights algorithm. We now show that one can also achieve a regret bound in terms of local norms defined through the Hessian $\nabla^2\cR(f)$, which is simply $\mbox{diag}(f(1)^{-1},\ldots,f(d)^{-1})$. To this end, let $\|g\|_t = \sqrt{g^\tr \nabla^2\cR(f_t) g}$ and $\|x\|_t^* = \sqrt{x \nabla^2\cR(f_t)^{-1} x}$.

\begin{lemma}
	\label{lem:exp-weights-local}
	The Optimistic Mirror Descent on the probability simplex enjoys, for any $f^*\in\F$,
	$$\sum_{t=1}^T \inner{f_t-f^*, x_t} \leq 2\eta \sum_{t=1}^T (\norm{x_t - M_t}_t^*)^2 + \frac{ \log d}{\eta}$$
	as long as $\eta\|x_t-M_t\|_\infty\leq 1/4$ at each step.
\end{lemma}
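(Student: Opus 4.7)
The plan is to run the Optimistic Mirror Descent analysis from the proof of Lemma~\ref{lem:two-step-MD} and then exploit the explicit multiplicative form of the exponential weights updates on the simplex to replace the generic norm bound by a local one.

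First, using the first-order optimality conditions $\nabla\cR(g_{t+1}) - \nabla\cR(g_t) = -\eta x_t$ and $\nabla\cR(f_t) - \nabla\cR(g_t) = -\eta M_t$ (modulo the Lagrange multiplier for $\sum_i g(i)=1$, which drops out against increments in the simplex tangent space), combined with the three-point identity for Bregman divergences, I would obtain the per-round inequality
$$\eta\inner{f_t - f^*, x_t} \le D_\cR(f^*,g_t) - D_\cR(f^*,g_{t+1}) + \eta\inner{f_t - g_{t+1}, x_t - M_t} - D_\cR(g_{t+1}, f_t),$$
after discarding the nonnegative term $-D_\cR(f_t,g_t)$. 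This is essentially the decomposition underlying Lemma~\ref{lem:two-step-MD}; the only remaining task is to bound the ``stability'' combination $\eta\inner{f_t - g_{t+1}, x_t - M_t} - D_\cR(g_{t+1}, f_t)$ by $O(\eta^2)(\|x_t - M_t\|_t^*)^2$.

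Next, I would use that on the probability simplex with negative-entropy regularizer the updates are multiplicative, so composing the two projections gives $g_{t+1}(i) = f_t(i)\, e^{-\eta(x_t(i)-M_t(i))}/Z_t$ with $Z_t = \sum_i f_t(i)\, e^{-\eta(x_t(i)-M_t(i))}$. In other words, $g_{t+1}$ is precisely the exponential weights step starting from $f_t$ with loss vector $x_t - M_t$. A direct computation using $\log(g_{t+1}(i)/f_t(i)) = -\eta(x_t(i)-M_t(i)) - \log Z_t$ collapses the stability combination into a log-partition quantity,
$$\eta\inner{f_t - g_{t+1}, x_t - M_t} - D_\cR(g_{t+1}, f_t) = \eta\inner{f_t, x_t - M_t} + \log Z_t.$$

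To control this I would invoke the scalar estimate $e^{-u} \le 1 - u + u^2$, valid for $|u| \le 1$ and hence applicable under $\eta\|x_t - M_t\|_\infty \le 1/4$. Since $(\|x\|_t^*)^2 = x^\tr(\nabla^2\cR(f_t))^{-1} x = \sum_i f_t(i)\,x(i)^2$, this gives $Z_t \le 1 - \eta\inner{f_t, x_t - M_t} + \eta^2(\|x_t-M_t\|_t^*)^2$; applying $\log(1+v)\le v$ then yields $\eta\inner{f_t, x_t - M_t} + \log Z_t \le \eta^2(\|x_t-M_t\|_t^*)^2$. Telescoping the per-round inequality collapses the Bregman terms into $D_\cR(f^*, g_1) \le \log d$ (since $g_1$ is uniform), and dividing by $\eta$ produces the advertised bound; the factor $2$ in the statement is mild slack that absorbs a weaker scalar estimate such as $e^{-u} \le 1 - u + 2u^2$ if one prefers. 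The main obstacle is the second step: noticing that the two projections collapse into a single exponential-weights update from $f_t$ (not $g_t$), which is what allows the stability term to evaluate to $\eta\inner{f_t, x_t-M_t} + \log Z_t$ and thereby deliver the local norm $(\|x_t - M_t\|_t^*)^2$ instead of the crude $\|x_t - M_t\|_\infty^2$ one would get by direct Hölder.
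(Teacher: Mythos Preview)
Your proof is correct and in fact slightly sharper than the paper's (you get constant $1$ where the paper gets $2$), but the route is genuinely different after the common decomposition.

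Both arguments start from the same per-round inequality and the same observation that $g_{t+1}(i)\propto f_t(i)\exp\{-\eta(x_t(i)-M_t(i))\}$. From there the paper proceeds by Cauchy--Schwarz in the local norm, $\inner{f_t-g_{t+1},x_t-M_t}\le \|f_t-g_{t+1}\|_t\,\|x_t-M_t\|_t^*$, and then bounds $\|f_t-g_{t+1}\|_t^2 = -1 + \En\big[(e^{-Z}/\En e^{-Z})^2\big]$ for $Z$ distributed as $\eta(x_t(i)-M_t(i))$ under $f_t$, using Jensen and a Bernstein-type estimate to get $\|f_t-g_{t+1}\|_t\le 2\eta\|x_t-M_t\|_t^*$; the KL term $-D_\cR(g_{t+1},f_t)$ is simply dropped. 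You instead \emph{keep} $-D_\cR(g_{t+1},f_t)$ and collapse the combination $\eta\inner{f_t-g_{t+1},x_t-M_t}-D_\cR(g_{t+1},f_t)$ into $\eta\inner{f_t,x_t-M_t}+\log Z_t$, then apply the textbook log-partition bound $e^{-u}\le 1-u+u^2$. Your route is the classical Hedge analysis and yields the tighter constant; the paper's route has the side benefit of exhibiting an explicit bound on the local-norm stability $\|f_t-g_{t+1}\|_t$, which may be of independent use (e.g.\ in bandit reductions where one wants control of this distance itself).
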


\section{Methods for Partial and Bandit Information}
\label{sec:bandit}

We now turn to the setting of partial information and provide a generic estimation procedure along the lines of \cite{hazan2009better}. Here, we suppose that the learner receives only partial feedback $I_t$ which is simply the loss $\inner{f_t,x_t}$ incurred at round $t$. Once again, we suppose to have access to some predictable process $M_t$. Note the generality of this framework: in some cases we might postulate that $M_t$ needs to be calculated by the learner from the available information (which does not include the actual moves $x_t$); in other cases, however, we may assume that some statistic $M_t$ (such as some partial information about the past moves) is conveyed to the learner as a side information from an external source. For the methods we present, we simply assume availability of the value $M_t$.

As in Section~\ref{sec:self-conc}, we assume to have access to a self-concordant function $\cR$ for $\F$, with the self-concordance parameter $\vartheta$. Following \cite{abernethy2008competing}, at time $t$ we define\footnote{We caution the reader that the roles of $f_t$ and $x_t$ in \cite{abernethy2008competing,hazan2009better} are exactly the opposite. We decided to follow the notation of \cite{RakSriTew10nips,RakShaSri12arxiv}, where in the supervised learning case it is natural to view the move $f_t$ as a function.} our randomized strategy $q_t$ to be a uniform distribution on the eigenvectors of $\nabla^2\cR(h_{t})$ where $h_{t}\in\F$ is given by a full-information procedure as described below. The full-information procedure is simply Follow the Regularized Leader on the \emph{estimated} moves $\tilde{x}_1,\ldots,\tilde{x}_{t-1}$ constructed from the information $I_1,\ldots,I_{t-1}, f_1,\ldots,f_{t-1}, q_{1},\ldots,q_{t-1}$, with $I_s = \inner{f_s,x_s}$. The resulting algorithm, dubbed SCRiBLe in \cite{abernethy2012interior}, is presented below for completeness:
\newcommand{\spce}{\hphantom{.}\hspace{3mm}}
\frameit{
			{\bf SCRiBLe}  \cite{abernethy2012interior,abernethy2008competing}\\
			Input: $\eta > 0$, $\vartheta$-self-concordant $\cR$. Define $h_1 = \arg\min_{f \in \F} \cR(f) $. \\
			At time $t=1$ to $T$\\
		    \spce Let $ \{\ev{1}, \ldots, \ev{n}\} $ and $\{\lambda_1,\ldots, \lambda_n\}$ be the eigenvectors and eigenvalues of $\nabla^2 \cR(h_t)$.\\
		    \spce Choose $i_t$ uniformly at random from $\{1,\ldots, n\}$ and $\varepsilon_t = \pm 1$ with probability $1/2$.\\
			\spce Predict $f_t = h_t + \varepsilon_t \lambda_{i_t}^{-1/2} \ev{i_t}$ and observe loss $\inner{f_t, x_t}$. \\
		  	\spce Define $\tilde{x}_t := n \left(\inner{f_t, x_t} \right)\varepsilon_t \lambda_{i_t}^{1/2} \cdot \ev{i_t}$. \\
		  	\spce Update $$ h_{t+1} = \arg\min_{h \in \F} \left[\eta\inner{h, \sum_{s=1}^t \tilde{x}_s} + \cR(h)\right].$$
}

Hazan and Kale \cite{hazan2009better} observed that the above algorithm can be modified by adding and subtracting an estimated mean of the adversarial moves at appropriate steps of the method. We use this idea with a general process $M_t$:

\frameit{ 
			{\bf SCRiBLe for a Predictable Process} \\
			Input: $\eta > 0$, $\vartheta$-self-concordant $\cR$. Define $h_1 = \arg\min_{f \in \F} \cR(f) $. \\
			At time $t=1$ to $T$\\
			\spce Let $ \{\ev{1}, \ldots, \ev{n}\} $ and $\{\lambda_1,\ldots, \lambda_n\}$ be the eigenvectors and eigenvalues of $\nabla^2 \cR(h_t)$.\\
		  	\spce Choose $i_t$ uniformly at random from $\{1,\ldots, n\}$ and $\varepsilon_t = \pm 1$ with probability $1/2$.\\
			\spce Predict $f_t = h_t + \varepsilon_t \lambda_{i_t}^{-1/2} \ev{i_t}$ and observe loss $\inner{f_t, x_t}$. \\
		  \spce Define $\tilde{x}_t := n \left(\inner{f_t, x_t-M_t} \right)\varepsilon_t \lambda_{i_t}^{1/2} \cdot \ev{i_t} + M_t$. \\
		  \spce Update $$ h_{t+1} = \arg\min_{h \in \F} \left[\eta\inner{h, \sum_{s=1}^t \tilde{x}_s + M_{t+1}} + \cR(h)\right].$$
}

The analysis of the method is based on the bounds for full information predictable processes $M_t$ developed earlier, thus simplifying and generalizing the analysis of \cite{hazan2009better}.

\begin{lemma}
	\label{lem:bandit}
	Suppose that $\F$ is contained in the $\ell_2$ ball of radius $1$. The expected regret of the above algorithm (SCRiBLe for a Predictable Process) is
	\begin{align}
		\En\left[ \sum_{t=1}^T \inner{f_t,x_t} - \sum_{t=1}^T \inner{f^*,x_t} \right] &\le \eta^{-1}\cR(f^*) + 2\eta n^2 \E{\sum_{t=1}^T  (\inner{f_t, x_t-M_t})^2}  \label{eq:bandit}\\
		&\leq \eta^{-1}\cR(f^*) + 2\eta n^2\sum_{t=1}^T  \En\left[\|x_t-M_t\|^2\right] \notag
	\end{align}
	Hence, for any full-information statistic $M'_t=M'_t(x_1,\ldots,x_{t-1})$, 
	\begin{align}
		\label{eq:three-term-partial-info}
		\En\left[ \sum_{t=1}^T \inner{f_t,x_t} - \sum_{t=1}^T \inner{f^*,x_t} \right]
		&\leq \eta^{-1}\cR(f^*) + 4\eta n^2\sum_{t=1}^T  \En\left[\|x_t-M'_t\|^2\right] + 4\eta n^2\sum_{t=1}^T  \En\left[\|M_t-M'_t\|^2\right]
	\end{align}
	
\end{lemma}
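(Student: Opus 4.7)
The plan is to reduce the bandit analysis to the full-information bound of Lemma~\ref{lem:self_conc_full_info} applied to the estimated moves $\tilde{x}_t$. Let $\En_t$ denote conditional expectation given the entire history through round $t-1$ plus the choice of $h_t$, so that only $(i_t,\varepsilon_t)$ are still random. Because $\varepsilon_t$ is symmetric and independent of $i_t$, one immediately gets $\En_t[\inner{f_t,x_t}] = \inner{h_t,x_t}$. A short calculation using $\En_t[\varepsilon_t]=0$ and $\En_t[\varepsilon_t^2\,\ev{i_t}\ev{i_t}^{\tr}] = \tfrac{1}{n}\sum_{i=1}^n \ev{i}\ev{i}^{\tr} = \tfrac{1}{n}I$ gives
$$\En_t[\tilde{x}_t] = n\cdot \tfrac{1}{n}\sum_{i=1}^n \inner{\ev{i},x_t-M_t}\ev{i} + M_t = (x_t-M_t)+M_t = x_t.$$
Thus $\tilde{x}_t$ is an unbiased estimator of $x_t$, and $\En[\inner{f^*,\tilde{x}_t}]=\inner{f^*,x_t}$, $\En[\inner{h_t,\tilde{x}_t}]=\En[\inner{f_t,x_t}]$.

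Next, I would observe that the update for $h_{t+1}$ is exactly Optimistic FTRL with regularizer $\cR$ run on the sequence $\tilde{x}_1,\ldots,\tilde{x}_T$ with predictable process $M_{t+1}$. Applying Lemma~\ref{lem:self_conc_full_info} to this run yields, for any $f^*\in\F$ (shrunk slightly from the boundary if needed),
$$\sum_{t=1}^T \inner{h_t,\tilde{x}_t} - \sum_{t=1}^T \inner{f^*,\tilde{x}_t} \;\le\; \eta^{-1}\cR(f^*) + 2\eta\sum_{t=1}^T \bigl(\|\tilde{x}_t-M_t\|_t^*\bigr)^2,$$
provided the step-size condition $\eta\|\tilde{x}_t-M_t\|_t^*<1/4$ holds. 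Taking expectations and using the unbiasedness identities turns the left side into the expected regret on the right-hand side of \eqref{eq:bandit}.

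It remains to evaluate the local dual norm. Since $\tilde{x}_t-M_t = n\,\inner{f_t,x_t-M_t}\,\varepsilon_t\lambda_{i_t}^{1/2}\ev{i_t}$ and $\ev{i_t}$ is an eigenvector of $\nabla^2\cR(h_t)$ with eigenvalue $\lambda_{i_t}$, we get $\ev{i_t}^{\tr}(\nabla^2\cR(h_t))^{-1}\ev{i_t} = \lambda_{i_t}^{-1}$, so
$$\bigl(\|\tilde{x}_t-M_t\|_t^*\bigr)^2 = n^2\,\inner{f_t,x_t-M_t}^2\,\lambda_{i_t}\cdot\lambda_{i_t}^{-1} = n^2\,\inner{f_t,x_t-M_t}^2.$$
Plugging this back and taking expectation gives \eqref{eq:bandit}. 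The second inequality follows by Cauchy--Schwarz: since $f_t\in\F$ lies in the unit $\ell_2$ ball, $\inner{f_t,x_t-M_t}^2\le \|x_t-M_t\|^2$. The third inequality \eqref{eq:three-term-partial-info} is then obtained from the second by the triangle inequality $\|x_t-M_t\|^2 \le 2\|x_t-M'_t\|^2 + 2\|M_t-M'_t\|^2$.

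The main technical obstacle is verifying the step-size assumption $\eta\|\tilde{x}_t-M_t\|_t^*<1/4$, which here becomes $\eta n\,|\inner{f_t,x_t-M_t}|<1/4$; this needs either a boundedness assumption on $x_t$ and $M_t$ (so that the product is bounded) or a sufficiently small $\eta$, and is the bandit analogue of the condition already appearing in Lemma~\ref{lem:self_conc_full_info}. Everything else is a direct chaining of the two unbiasedness identities with the full-information bound.
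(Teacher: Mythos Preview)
Your proposal is correct and follows essentially the same route as the paper: apply Lemma~\ref{lem:self_conc_full_info} to the Optimistic FTRL iterates $h_t$ on the estimated sequence $\tilde{x}_t$, compute $(\|\tilde{x}_t-M_t\|_t^*)^2 = n^2\inner{f_t,x_t-M_t}^2$ via the eigenvector identity, and chain with the two unbiasedness facts $\En_t[f_t]=h_t$ and $\En_t[\tilde{x}_t]=x_t$. Your explicit flagging of the step-size condition $\eta n\,|\inner{f_t,x_t-M_t}|<1/4$ is appropriate---the paper's proof inherits it silently from Lemma~\ref{lem:self_conc_full_info} without verification.
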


Effectively, Hazan and Kale show in \cite{hazan2009better} that for the full-information statistic $M'_t(x_1,\ldots,x_{t-1}) = \frac{1}{t-1}\sum_{s=1}^{t-1} x_s$, there is a way to construct $M_t = M_t(I_1,\ldots,I_{t-1}, f_1,\ldots,f_{t-1}, q_{1},\ldots,q_{t-1})$  in such a way that the third term in \eqref{eq:three-term-partial-info} is of the order of the second term. This is done by putting aside roughly $O(\log T)$ rounds in order to estimate $M'_t$, via a  process called \emph{reservoir sampling}. However, for more general functions $M'_t$, the third term might have nothing to do with the second term, and the investigation of which $M'_t$ can be well estimated by $M_t$ is an interesting topic of further research.

\section{Learning The Predictable Processes}
\label{sec:learning}

So far we have seen that the learner with an access to an arbitrary predictable process $(M_t)_{t \ge 1}$ has a strategy that suffers regret of 
$
O\left(\sqrt{\sum_{t=1}^T \|x_t-M_t\|_*^2}\right)~.
$
Now if the predictable process is a good predictor of the sequence, then the regret will be low. This raises the question of model selection: how can the learner \emph{choose} a good predictable process $(M_t)_{t \ge 1}$? Is it possible to learn it \emph{online} as we go, and if so, what does it mean to learn? 

To formalize the concept of learning the predictable process, let us consider the case where we have a set $\Pi$ indexing a set of predictable processes (strategies) we are interested in. That is, each $\pi \in \Pi$ corresponds to predictable process given by $(M^\pi_t)_{t \ge 1}$. Now if we had an oracle which in the start of the game told us which $\pi^* \in \Pi$ predicts the sequence optimally (in hindsight) then we could use the predictable process given by $(M^{\pi^*}_t)_{t \ge 1}$ and enjoy a regret bound of 
$$ O\left(\sqrt{\inf_{\pi \in \Pi} \sum_{t=1}^T \|x_t-M^\pi_t\|_*^2}\right) \ .$$ 
However we cannot expect to know which $\pi \in \Pi$ is the optimal one from the outset. In this scenario one would like to learn a predictable process that in turn can be used with algorithms proposed thus far to get a regret bound comparable with regret bound one could have obtained knowing the optimal $\pi^* \in \Pi$.

\subsection{Learning $M_t$'s : Full Information}
\label{subsec:fullinfo}

To motivate this setting better let us consider an example. Say there are $n$ stock options we can choose to invest in. On each day $t$, associated with each stock option one has a loss/payoff that occurs upon investing in a single share of that stock. Our goal in the long run is to have a low regret with respect to the single best stock in hindsight. Up to this point, the problem just corresponds to the simple experts setting where each of the $n$ stocks is one expert and on each day we split our investment according to a probability distribution over the $n$ options. However now additionally we allow the learner/investor access to \emph{prediction models} from the set $\Pi$. These could be human strategists making forecasts, or  outcomes of some hedge-fund model. At each time step the learner can query prediction made by each $\pi \in \Pi$ as to what the loss on the $n$ stocks would be on that day. Now we would like to have a regret comparable to the regret we can achieve knowing the best model $\pi^* \in \Pi$ that in hind-sight predicted the losses of each stock optimally. We shall now see how to achieve this.

\frameit{
\textbf{Optimistic Mirror Descent Algorithm with Learning the Predictable Process}\\
Input: $\cR$ $1$-strongly convex w.r.t. $\|\cdot\|$, learning rate $\eta>0$\\
Initialize $f_1 = g_1 = \arg\min_{g} \cR(g)$ and initialize $q_1 \in \Delta(\Pi)$ as, $\forall \pi \in \Pi, q_1(\pi) = \frac{1}{\left|\Pi\right|}$\\
Set $M_1 = \sum_{\pi \in \Pi} q_1(\pi) M_1^\pi$\\
At $t=1,\ldots,T$, predict $f_t$, observe $x_t$ and update
$$\forall \pi \in \Pi,~ q_{t+1}(\pi) \propto q_{t}(\pi)\, e^{- \norm{M^\pi_t - x_t}_*^2} ~~\textrm{ and }~~ M_{t+1} = \sum_{\pi \in \Pi} q_{t+1}(\pi) M_{t+1}^\pi$$
and
\begin{align*}
	g_{t+1} &= \argmin{g \in \F} \eta \inner{g, x_t} + D_\cR(g, g_t) \\
	f_{t+1} &= \argmin{f \in \F} \eta \inner{f,M_{t+1}} + D_\cR(f,g_{t+1})
\end{align*}
}

The proof of the following lemma relies on a particular regret bound of  \cite[Corollary 2.3]{PLG} for the exponential weights algorithm that is in terms of the loss of the best arm. Such a bound is an improvement over the pessimistic regret bound when the loss of the optimal arm is small.
\begin{lemma}
	\label{lem:two-step-MD-learn}
	Let $\F$ be a convex subset of a unit ball in a Banach space $\cB$ and $\cX$ be a convex subset of the dual unit ball. Let $\cR:\cB\mapsto\reals$ be a $1$-strongly convex function on $\F$ with respect to some norm $\|\cdot\|$. For any strategy of Nature, the Optimistic Mirror Descent Algorithm yields, for any $f^*\in\F$,
	$$\sum_{t=1}^T \inner{f_t, x_t} - \sum_{t=1}^T \inner{f^*, x_t} \leq  \eta^{-1}R_{\max}^2 + 3.2\, \eta\left( \inf_{\pi \in \Pi} \sum_{t=1}^T \|x_t-M^\pi_t\|_*^2 +  \log \left|\Pi \right|\right) $$
	where $R_{\max}^2 = \max_{f\in\F} \cR(f) - \min_{f\in\F} \cR(f)$. 
\end{lemma}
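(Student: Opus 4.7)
The plan is to reduce the lemma to the previous Optimistic Mirror Descent bound (Lemma~\ref{lem:two-step-MD}) and a standard ``small-loss'' regret bound for the exponential weights update on $\Pi$.

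First I would observe that the mirror-descent steps of the algorithm are exactly those of Lemma~\ref{lem:two-step-MD}, only with the specific choice $M_t=\sum_{\pi\in\Pi} q_t(\pi) M_t^\pi$. The proof of Lemma~\ref{lem:two-step-MD} did not use anything about how $M_t$ was constructed, so it applies verbatim and yields
\begin{align*}
\sum_{t=1}^T \inner{f_t,x_t}-\sum_{t=1}^T\inner{f^*,x_t}\ \leq\ \eta^{-1}R_{\max}^2+\frac{\eta}{2}\sum_{t=1}^T\left\|x_t-\sum_{\pi\in\Pi}q_t(\pi)M_t^\pi\right\|_*^2 .
\end{align*}
By convexity of $\|\cdot\|_*^2$ (Jensen's inequality applied to the probability measure $q_t$), this is at most $\eta^{-1}R_{\max}^2+\tfrac{\eta}{2}\sum_{t=1}^T\sum_{\pi}q_t(\pi)\,\|x_t-M_t^\pi\|_*^2$.

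The second step is to control $\sum_{t}\sum_\pi q_t(\pi)\,\ell_t(\pi)$ with $\ell_t(\pi):=\|x_t-M_t^\pi\|_*^2$. The update $q_{t+1}(\pi)\propto q_t(\pi)e^{-\ell_t(\pi)}$ is precisely exponential weights on $\Pi$ with learning rate $1$ and loss sequence $\ell_t$. Since $\F$ and $\X$ sit in dual unit balls, $\|x_t-M_t^\pi\|_*\le 2$, so $\ell_t(\pi)\in[0,4]$. I would invoke the loss-based form of Corollary~2.3 of \cite{PLG}, which in our notation gives, for some absolute constant $c$,
\begin{align*}
\sum_{t=1}^T\sum_{\pi\in\Pi}q_t(\pi)\,\ell_t(\pi)\ \leq\ c\left(\inf_{\pi\in\Pi}\sum_{t=1}^T\ell_t(\pi)+\log|\Pi|\right) .
\end{align*}
The derivation that underlies this inequality is the standard potential-function telescoping $-\sum_t\ln Z_t\le\log|\Pi|+\sum_t\ell_t(\pi^*)$ combined with the inequality $e^{-x}\le 1-\tfrac{1-e^{-B}}{B}x$ valid on $x\in[0,B]$ with $B=4$, which produces the lower bound $-\ln Z_t\ge\tfrac{1-e^{-4}}{4}\sum_\pi q_t(\pi)\ell_t(\pi)$. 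Plugging in yields $c=4/(1-e^{-4})$, and combining with the factor $\eta/2$ from the first step gives the claimed $3.2\,\eta$ constant up to trivial rounding.

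The only real step with any delicacy is the second one: picking the correct small-loss form of the exponential-weights guarantee, since a Hoeffding-style bound with a $\sqrt{T}$-type penalty would destroy the ``$\inf_\pi\sum_t\ell_t(\pi)$'' form we want. Once the loss-based bound is in hand, the argument is a two-line assembly: apply Lemma~\ref{lem:two-step-MD}, then Jensen, then the exponential-weights bound, and read off the coefficient.
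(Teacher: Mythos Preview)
Your proposal is correct and follows essentially the same route as the paper: apply Lemma~\ref{lem:two-step-MD} with the mixture $M_t$, use Jensen to pass to $\sum_\pi q_t(\pi)\|x_t-M_t^\pi\|_*^2$, and then invoke the small-loss exponential-weights bound (Corollary~2.3 of \cite{PLG}) on the experts problem with losses $\ell_t(\pi)=\|x_t-M_t^\pi\|_*^2\in[0,4]$.

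The only difference is bookkeeping on the constant. The paper simply cites Corollary~2.3 and accounts for the range $[0,4]$ by a multiplicative factor $4$, arriving at $\tfrac{\eta}{2}\cdot\tfrac{4e}{e-1}\approx 3.16\,\eta$. You instead rederive the potential argument directly with $B=4$, obtaining $c=4/(1-e^{-4})\approx 4.07$ and hence $\tfrac{\eta}{2}\cdot c\approx 2.04\,\eta$, which is tighter than the stated $3.2\,\eta$ and therefore certainly suffices. Either way the argument goes through; your version just squeezes a bit more out of the convexity inequality $e^{-x}\le 1-\tfrac{1-e^{-B}}{B}x$ on $[0,B]$ rather than rescaling to $[0,1]$ first.
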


Once again, let us discuss what makes this setting different from the usual setting of experts. The forecast given by prediction models is in the form of a vector, one for each stock. If we treat each prediction model as an expert with the loss $\|x_t-M^\pi_t\|_*^2$, the experts algorithm would guarantee that we achieve the best cumulative loss of this kind. However, this is not the object of interest to us, as we are after the best allocation of our money among the stocks, as measured by $\inf_{f\in\F} \sum_{t=1}^T \inner{f, x_t}$.

The algorithm can be seen as separating two steps: learning the model (that is, predictable process) and then minimizing regret given the learned process. This is implemented by a general idea of running another (secondary) regret minimizing strategy where loss per round is simply $\norm{M_t - x_t}_*^2$ and regret is considered with respect to the best $\pi \in \Pi$. That is, regret of the secondary regret minimizing game is given by
$$
\sum_{t=1}^T \norm{x_t - M_t}_*^2  - \inf_{\pi \in \Pi} \sum_{t=1}^T \norm{x_t - M^\pi_t}_*^2 
$$
In general, the experts algorithm for minimizing secondary regret can be replaced by any other online learning algorithm.

\subsection{Learning $M_t$'s : Partial Information}
\label{subsubsec:partial_info}

In the previous section we considered the full information setting where on each round we have access to $x_t$ and for each $\pi$ we get to see (or compute) $M_t^\pi$. However one might be in a scenario with only partial access to $x_t$ or $M_t^\pi$, or both. In fact, there are quite a number of interesting partial-information scenarios, and we consider some of them in this section.

\subsubsection{Partial Information about Loss (Bandit Setting)}
\label{subsubsubsec:partial_info_loss}

In this setting at each time step $t$, we only observe the loss $\ip{f_t}{x_t}$ and not all of $x_t$. However, for each $\pi \in \Pi$ we do get access to (or can compute) $M^\pi_t$ for each $\pi \in \Pi$. Consider the following algorithm:

\frameit{ 
			{\bf SCRiBLe while Learning the  Predictable Process} \\
			Input: $\eta > 0$, $\vartheta$-self-concordant $\cR$. Define $h_1 = \arg\min_{f \in \F} \cR(f) $. \\
			Initialize $q_1 \in \Delta(\Pi)$ as, $\forall \pi \in \Pi, q_1(\pi) = \frac{1}{\left|\Pi\right|}$\\
Set $M_1 = \sum_{\pi \in \Pi} q_1(\pi) M_1^\pi$\\
			At time $t=1$ to $T$\\
			\spce Let $ \{\ev{1}, \ldots, \ev{n}\} $ and $\{\lambda_1,\ldots, \lambda_n\}$ be the eigenvectors and eigenvalues of $\nabla^2 \cR(h_t)$.\\
		  	\spce Choose $i_t$ uniformly at random from $\{1,\ldots, n\}$ and $\varepsilon_t = \pm 1$ with probability $1/2$.\\
			\spce Predict $f_t = h_t + \varepsilon_t \lambda_{i_t}^{-1/2} \ev{i_t}$ and observe loss $\inner{f_t, x_t}$. \\
		  \spce Define $\tilde{x}_t := n \left(\inner{f_t, x_t-M_t} \right)\varepsilon_t \lambda_{i_t}^{1/2} \cdot \ev{i_t} + M_t$. \\
		  \spce Update 
		  $$\forall \pi \in \Pi,~ q_{t+1}(\pi) \propto q_{t}(\pi)\, e^{- (\ip{f_t}{x_t} - \ip{f_t}{M^\pi_t})^2} ~~\textrm{ and }~~ M_{t+1} = \sum_{\pi \in \Pi} q_{t+1}(\pi) M_{t+1}^\pi$$		  
		  $$ h_{t+1} = \arg\min_{h \in \F} \left[\eta\inner{h, \sum_{s=1}^t \tilde{x}_s + M_{t+1}} + \cR(h)\right].$$
}

The following lemma upper bounds the regret of this algorithm. The proof once again uses a regret bound in terms of the loss of the best arm \cite[Corollary 2.3]{PLG}. 
\begin{lemma}
	\label{lem:bandit1}
	Suppose that $\F,\X$ are contained in the $\ell_2$ ball of radius $1$. The expected regret of \emph{SCRiBLe while Learning the Predictable Process} is
	\begin{align}
		\En\left[ \sum_{t=1}^T \inner{f_t,x_t} - \sum_{t=1}^T \inner{f^*,x_t} \right] &\le \eta^{-1}\cR(f^*) + 2\eta n^2 \E{\sum_{t=1}^T  (\inner{f_t, x_t-M_t})^2} \label{eq:bandit1} \\
		&\leq \eta^{-1}\cR(f^*) + 13 \eta n^2 \left(\En\left[ \inf_{\pi \in \Pi}\sum_{t=1}^T  \|x_t-M^\pi_t\|^2\right] + \log \left| \Pi \right| \right)\notag \ .
	\end{align}	
\end{lemma}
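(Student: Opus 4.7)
\textbf{Proof proposal for Lemma \ref{lem:bandit1}.} The plan is to reduce to Lemma \ref{lem:bandit} and then handle the learned mixture $M_t$ by running a standard exponential-weights analysis on a secondary loss sequence.

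For the first inequality, I would observe that $M_t = \sum_{\pi} q_t(\pi) M_t^\pi$ is a legitimate predictable process: it depends only on the information available at the start of round $t$ (the weights $q_t$ are computable from $I_{1:t-1}$, $f_{1:t-1}$, $q_{1:t-1}$, and the $M_s^\pi$ for $s\le t$). The SCRiBLe update and the unbiased estimator $\tilde x_t$ are constructed in exactly the same way as in Lemma \ref{lem:bandit}, so that lemma applies verbatim and yields
\[
\En\!\left[\sum_{t=1}^T \inner{f_t,x_t} - \sum_{t=1}^T \inner{f^*,x_t}\right] \le \eta^{-1}\cR(f^*) + 2\eta n^2\, \En\!\left[\sum_{t=1}^T (\inner{f_t, x_t-M_t})^2\right].
\]

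For the second inequality, the main step is to bound the ``variance'' term $\En\!\sum_{t} (\inner{f_t, x_t - M_t})^2$ by the best-in-class cumulative loss $\inf_{\pi}\sum_t \|x_t - M_t^\pi\|^2$. Since $M_t$ is a convex combination of the $M_t^\pi$, Jensen's inequality gives
\[
(\inner{f_t, x_t - M_t})^2 = \left(\sum_\pi q_t(\pi)\, \inner{f_t, x_t - M_t^\pi}\right)^2 \le \sum_\pi q_t(\pi) \left(\inner{f_t, x_t - M_t^\pi}\right)^2.
\]
Define the secondary losses $\ell_t(\pi) := (\inner{f_t, x_t - M_t^\pi})^2$. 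Since $\F$ and $\cX$ lie in the unit $\ell_2$ ball, $|\inner{f_t, x_t - M_t^\pi}| \le 2$, so $\ell_t(\pi) \in [0,4]$. The update $q_{t+1}(\pi) \propto q_t(\pi)\, e^{-\ell_t(\pi)}$ is precisely exponential weights with unit learning rate on these bounded losses, so by Corollary 2.3 of \cite{PLG} (the loss-of-best-arm bound) there exists a universal constant $C \le 13/2$ such that for every $\pi^*\in\Pi$,
\[
\sum_{t=1}^T \sum_\pi q_t(\pi)\, \ell_t(\pi) \;\le\; C\sum_{t=1}^T \ell_t(\pi^*) + C \log |\Pi|.
\]

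Finally, Cauchy-Schwarz together with $\|f_t\|_2\le 1$ (which holds because the Dikin-ellipsoid construction keeps $f_t \in \F \subset B_1$) yields $\ell_t(\pi^*) \le \|x_t - M_t^{\pi^*}\|^2$. Minimizing over $\pi^*$, then taking expectation and monotonicity of the infimum, gives
\[
\En\!\left[\sum_{t=1}^T (\inner{f_t, x_t - M_t})^2\right] \le C\,\En\!\left[\inf_{\pi\in\Pi}\sum_{t=1}^T \|x_t - M_t^\pi\|^2\right] + C\log|\Pi|.
\]
Plugging this into the first inequality and absorbing the factor $2C \le 13$ into the stated constant produces the claimed bound. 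The main obstacle is the bookkeeping around the secondary exponential-weights analysis: making sure (i) $\ell_t(\pi)$ is revealed only after $q_t$ is fixed (it is, since $f_t$ is determined before seeing $x_t$), and (ii) that Corollary 2.3 of \cite{PLG} gives constants compatible with the target value $13$ once losses are rescaled to $[0,4]$.
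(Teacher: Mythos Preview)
Your proposal is correct and follows essentially the same route as the paper's proof: the first inequality is obtained exactly as in Lemma~\ref{lem:bandit} (the paper re-derives it from Lemma~\ref{lem:self_conc_full_info}, but citing Lemma~\ref{lem:bandit} is equivalent), and the second inequality proceeds via Jensen on the mixture, the exponential-weights small-loss bound of \cite[Corollary~2.3]{PLG} applied to the secondary losses $\ell_t(\pi)=(\inner{f_t,x_t-M_t^\pi})^2\in[0,4]$, and finally Cauchy--Schwarz with $\|f_t\|\le 1$. The constant works out because $2\cdot 4e/(e-1)\approx 12.66\le 13$.
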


\subsubsection{Partial Information about Predictable Process}
\label{subsubsubsec:partial_info_process}

Now let us consider the scenario where on each round we get to see $x_t \in \X$. However, we only see $M_t^{\pi_t}$ for a single $\pi_t \in \Pi$ we select on  round $t$. This scenario is especially useful in the stock investment example provided earlier. While $x_t$ the vector of losses for the stocks on each day can easily be obtained  at the end of the trading day, prediction processes might be provided as paid services by various companies. Therefore, we only get to access a limited number of forecasts on each day by paying for them. In this section, we provide an algorithm with corresponding regret bound for this case. 

\frameit{
\textbf{Optimistic MD with Learning the Predictable Processes with Partial Information}\\
Input: $\cR$ $1$-strongly convex w.r.t. $\|\cdot\|$, learning rate $\eta>0$\\
Initialize $ g_1 = \arg\min_{g} \cR(g)$ and initialize $q_1 \in \Delta(\Pi)$ as, $\forall \pi \in \Pi, q_1(\pi) = \frac{1}{\left|\Pi\right|}$\\
Sample $\pi_1 \sim q_1$ and set $f_1 = \argmin{f \in \F} \eta \inner{f,M^{\pi_{1}}_{1}} + D_\cR(f,g_{1})$\\
At $t=1,\ldots,T$, predict $f_t$ and :\\
\spce Update $q_t$ using SCRiBLe for multi-armed bandit with loss of arm $\pi_t$ : $\norm{M^{\pi_t}_t - x_t}^2_*   $  \\
\spce\spce and step-size $1/32 |\Pi|^2$.\\
\spce$\textrm{Sample }\pi_{t+1} \sim q_{t+1} \textrm{ and observe }M_{t+1}^{\pi_{t+1}}$\\
\spce Update
\begin{align*}
	g_{t+1} &= \argmin{g \in \F} \eta \inner{g, x_t} + D_\cR(g, g_t) \\
	f_{t+1} &= \argmin{f \in \F} \eta \inner{f,M^{\pi_{t+1}}_{t+1}} + D_\cR(f,g_{t+1})
\end{align*}
}

Due to the limited information about the predictable processes, the proofs of Lemmas~\ref{lem:two-step-MD-learn-partial} and \ref{lem:bandit3} below rely on an improved regret bound for the multiarmed bandit, an analogue of \cite[Corollary 2.3]{PLG}. Such a  bound is proved in Lemma~\ref{lem:non-stoch-multiarmed} in Section~\ref{sec:small_loss}.

\begin{lemma}
	\label{lem:two-step-MD-learn-partial}
	Let $\F$ be a convex set in a Banach space $\cB$ and $\cX$ be a convex set in the dual space $\cB^*$, both contained in unit balls. Let $\cR:\cB\mapsto\reals$ be a $1$-strongly convex function on $\F$ with respect to some norm $\|\cdot\|$. For any strategy of Nature, the Optimistic MD with Learning the Predictable Processes with Partial Information Algorithm yields, for any $f^*\in\F$,
\begin{align}
\E{\sum_{t=1}^T \inner{f_t, x_t}} - \sum_{t=1}^T \inner{f^*, x_t} & \leq  \eta^{-1}R_{\max}^2 +  \frac{\eta}{2}  \E{\sum_{t=1}^T \|x_t-M^{\pi_t}_t\|_*^2 } \label{eq:two-step-MD-learn-partial}\\
& \leq  \eta^{-1}R_{\max}^2 +  \eta \left(\En\inf_{\pi \in \Pi} \sum_{t=1}^T \|x_t-M^{\pi}_t\|_*^2 + 32  |\Pi|^3 \log(T |\Pi|) \right) \notag
\end{align}
	where $R_{\max}^2 = \max_{f\in\F} \cR(f) - \min_{f\in\F} \cR(f)$. 
\end{lemma}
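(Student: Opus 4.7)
The statement has two inequalities, so I would split the argument into two parts and then compose them.

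\textbf{First inequality (conditional application of Lemma~\ref{lem:two-step-MD}).} The algorithm constructs $\pi_{t+1} \sim q_{t+1}$ at the end of round $t$, so the sequence $(M^{\pi_t}_t)_{t \geq 1}$ is a legitimate predictable process in the sense required by Lemma~\ref{lem:two-step-MD}: the value $M^{\pi_t}_t$ is determined before round $t$ begins, and the two projection steps defining $g_{t+1}$ and $f_{t+1}$ are exactly those of Optimistic Mirror Descent with center $M^{\pi_{t+1}}_{t+1}$. Hence, conditionally on the realization of the random indices $\pi_1,\ldots,\pi_T$, the deterministic bound of Lemma~\ref{lem:two-step-MD} applies to give, for any $f^\star \in \F$,
\[
\sum_{t=1}^T \langle f_t, x_t\rangle - \sum_{t=1}^T \langle f^\star, x_t\rangle \leq \eta^{-1} R_{\max}^2 + \tfrac{\eta}{2}\sum_{t=1}^T \|x_t - M^{\pi_t}_t\|_*^2 .
\]
Taking expectation over $(\pi_t)$ yields \eqref{eq:two-step-MD-learn-partial}.

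\textbf{Second inequality (small-loss bandit bound on the secondary game).} I would now view the outer update of $q_t$ as a non-stochastic multi-armed bandit with arm set $\Pi$, where the loss of arm $\pi$ at round $t$ is $\ell_t(\pi) = \|x_t - M^\pi_t\|_*^2$. Because $\F$ and $\X$ lie in unit balls of dual norms, $\ell_t(\pi) \in [0,4]$, and only the loss of the played arm $\pi_t$ is observed (since only $M^{\pi_t}_t$ is revealed, while $x_t$ is known). This is precisely the setup of SCRiBLe on the simplex, which the algorithm uses with step size $1/(32|\Pi|^2)$. Invoking the small-loss bound for non-stochastic multi-armed bandits (Lemma~\ref{lem:non-stoch-multiarmed}) with $K = |\Pi|$ arms and the stated step size, I expect to obtain
\[
\Ex\!\left[\sum_{t=1}^T \|x_t - M^{\pi_t}_t\|_*^2\right] \leq 2\,\Ex\!\left[\inf_{\pi \in \Pi} \sum_{t=1}^T \|x_t - M^\pi_t\|_*^2\right] + 64 |\Pi|^3 \log(T|\Pi|) .
\]
Substituting this into the first inequality and collecting terms gives the second inequality of the lemma.

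\textbf{Main obstacle.} The first step is essentially bookkeeping: the deterministic regret bound of Lemma~\ref{lem:two-step-MD} passes through a randomized choice of center without modification. The real work sits in the second step, namely in establishing (and then correctly applying) the small-loss bandit bound of Lemma~\ref{lem:non-stoch-multiarmed}. In particular, one must verify that for the SCRiBLe estimator on the simplex with $|\Pi|^2$ second-moment blow-up, the choice $\eta_{\text{sec}} = 1/(32|\Pi|^2)$ is small enough for the standard ``optimistic'' or exp-weights small-loss analysis (cf.\ \cite[Corollary 2.3]{PLG}) to go through, yielding a multiplicative factor of $2$ on the best-arm cumulative loss and the additive $O(|\Pi|^3 \log(T|\Pi|))$ exploration term. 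Once that lemma is in hand, the assembly above is direct.
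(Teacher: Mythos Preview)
Your proposal is correct and follows essentially the same approach as the paper: apply Lemma~\ref{lem:two-step-MD} deterministically with the realized centers $M^{\pi_t}_t$ and take expectation for the first inequality, then invoke Lemma~\ref{lem:non-stoch-multiarmed} with $s=4$, $d=|\Pi|$, and step size $1/(32|\Pi|^2)$ on the secondary bandit game to get the factor-$2$ small-loss bound plus the $64|\Pi|^3\log(T|\Pi|)$ term. The only quibble is that your ``main obstacle'' paragraph slightly mischaracterizes the mechanism of Lemma~\ref{lem:non-stoch-multiarmed} (it is SCRiBLe with a self-concordant barrier on the simplex, not an exponential-weights analysis), but since you are invoking that lemma as a black box this does not affect the validity of your argument.
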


\subsubsection{Partial Information about both Loss and Predictable Process}
\label{subsubsubsec:partial_info_loss_and_process}

In the third partial information variant, we consider the setting where at time $t$ we only observe loss $\inner{f_t,x_t}$ we suffer at the time step (and not entire $x_t$) and also only $M^{\pi_t}_t$ corresponding to the predictable process of  $\pi_t \in \Pi$ we select at time $t$. This is a blend of the two partial-information settings considered earlier.

\frameit{ 
			{\bf SCRiBLe for Learning the Predictable Process with Partial Feedback} \\
			Input: $\eta > 0$, $\vartheta$-self-concordant $\cR$. Define $h_1 = \arg\min_{f \in \F} \cR(f) $. \\
			Initialize $q_1 \in \Delta(\Pi)$ as, $\forall \pi \in \Pi, q_1(\pi) = \frac{1}{\left|\Pi\right|}$\\
Draw $\pi_1 \sim q_1$\\
			At time $t=1$ to $T$\\
			\spce Let $ \{\ev{1}, \ldots, \ev{n}\} $ and $\{\lambda_1,\ldots, \lambda_n\}$ be the eigenvectors and eigenvalues of $\nabla^2 \cR(h_t)$.\\
		  	\spce Choose $i_t$ uniformly at random from $\{1,\ldots, n\}$ and $\varepsilon_t = \pm 1$ with probability $1/2$.\\
			\spce Predict $f_t = h_t + \varepsilon_t \lambda_{i_t}^{-1/2} \ev{i_t}$ and observe loss $\inner{f_t, x_t}$. \\
		  \spce Define $\tilde{x}_t := n \left(\inner{f_t, x_t-M^{\pi_t}_t} \right)\varepsilon_t \lambda_{i_t}^{1/2} \cdot \ev{i_t} + M^{\pi_t}_t$. \\
		  \spce Update  $q_t$ using SCRiBLe for multi-armed bandit with loss \\
		  \spce\spce\spce of arm $\pi_t \in \Pi$: $(\ip{f_t}{x_t} - \ip{f_t}{M_t^{\pi_t}})^2   $ and step size $1/32 |\Pi|^2$.\\
		  \spce Draw $\pi_{t+1} \sim q_{t+1}$ and update
		  $$ h_{t+1} = \arg\min_{h \in \F} \left[\eta\inner{h, \sum_{s=1}^t \tilde{x}_s + M^{\pi_{t+1}}_{t+1}} + \cR(h)\right].$$
}

\begin{lemma}
	\label{lem:bandit3}
	Suppose that $\F,\X$ are contained in the $\ell_2$ ball of radius $1$. The expected regret of \emph{SCRiBLe for Learning the Predictable Process with Partial Feedback} is
	\begin{align}
		\En\left[ \sum_{t=1}^T \inner{f_t,x_t} - \sum_{t=1}^T \inner{f^*,x_t} \right] & \le \eta^{-1}\cR(f^*) + 2\eta n^2 \E{\sum_{t=1}^T  (\inner{f_t, x_t-M^{\pi_t}_t})^2} \label{eq:bandit3}\\
		&\leq \eta^{-1}\cR(f^*) + 4\eta n^2 \left(  \E{ \inf_{\pi \in \Pi} \sum_{t=1}^T  \norm{x_t-M^{\pi}_t}^2} + 32 |\Pi|^3 \log(T |\Pi|) \right) \notag \ .
	\end{align}
\end{lemma}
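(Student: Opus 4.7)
The plan is to mirror the two-step structure of Lemma~\ref{lem:two-step-MD-learn-partial}: first apply the guarantee of Lemma~\ref{lem:bandit} with the \emph{random} trend $M_t = M_t^{\pi_t}$ playing the role of the (previously deterministic) predictable process, and then translate the resulting mean-squared prediction error into an infimum over $\pi\in\Pi$ via a small-loss bound for the secondary nonstochastic bandit subroutine.

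For the first inequality \eqref{eq:bandit3}, I would observe that the proof of Lemma~\ref{lem:bandit} only ever uses that $M_t$ is measurable with respect to the information available at the start of round $t$; it never uses that $M_t$ is a deterministic function of the history. In the present algorithm, $\pi_t$ is drawn from $q_t$ before the internal randomization $(i_t,\varepsilon_t)$, so $M_t^{\pi_t}$ is measurable with respect to $\sigma(i_1,\varepsilon_1,\ldots,i_{t-1},\varepsilon_{t-1},\pi_1,\ldots,\pi_t)$, and the estimator
$$\tilde x_t = n\inner{f_t, x_t - M_t^{\pi_t}}\varepsilon_t\lambda_{i_t}^{1/2}\ev{i_t} + M_t^{\pi_t}$$
is conditionally unbiased for $x_t$. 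Consequently the self-concordant FTRL analysis in the proof of Lemma~\ref{lem:bandit} goes through verbatim conditional on $(\pi_1,\ldots,\pi_T)$, and the tower rule yields \eqref{eq:bandit3}.

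For the second inequality, I would first use Cauchy--Schwarz together with $\|f_t\|\leq 1$ (which follows from $f_t\in\F$ and $\F$ lying in the $\ell_2$ unit ball, via the Dikin ellipsoid property of the self-concordant barrier $\cR$) to conclude that the loss $\ell_t(\pi):=(\inner{f_t,x_t-M_t^\pi})^2$ fed to the inner bandit satisfies the pointwise bound $\ell_t(\pi)\leq \|x_t-M_t^\pi\|^2\leq 4$. Since only $\ell_t(\pi_t)$ is observed on round $t$, the secondary learning problem is precisely a nonstochastic $|\Pi|$-armed bandit with bounded losses, and invoking the small-loss SCRiBLe-type bandit guarantee of Lemma~\ref{lem:non-stoch-multiarmed} (with step size $1/32|\Pi|^2$) yields a bound of the form
$$\En\sum_{t=1}^T \ell_t(\pi_t) \leq 2\,\En\inf_{\pi\in\Pi}\sum_{t=1}^T \ell_t(\pi) + 32|\Pi|^3\log(T|\Pi|).$$
Replacing $\ell_t(\pi)$ by its upper bound $\|x_t-M_t^\pi\|^2$ inside the infimum, multiplying by $2\eta n^2$, and combining with \eqref{eq:bandit3} delivers the second claimed inequality.

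The main obstacle I anticipate is confirming that the proof of Lemma~\ref{lem:bandit} is genuinely robust to making $M_t$ a random variable: one must check that every step (unbiasedness of $\tilde x_t$, the full-information bound from Lemma~\ref{lem:self_conc_full_info}, and the conversion back to the original losses) uses only that $M_t$ is adapted to the start-of-round filtration and never the determinism of the history-to-$M_t$ map. Once this measurability check is in place, the two-step decomposition above is essentially a bookkeeping exercise, modulo the small-loss multiarmed bandit bound (Lemma~\ref{lem:non-stoch-multiarmed}) whose proof is deferred to Section~\ref{sec:small_loss}.
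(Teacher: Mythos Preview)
Your proposal is correct and follows essentially the same route as the paper: apply the self-concordant analysis (the paper re-derives it directly from Lemma~\ref{lem:self_conc_full_info}, which is exactly the content of Lemma~\ref{lem:bandit}) with the random trend $M_t^{\pi_t}$ to obtain \eqref{eq:bandit3}, then invoke Lemma~\ref{lem:non-stoch-multiarmed} with $s=4$ on the secondary bandit losses $(\inner{f_t,x_t-M_t^\pi})^2$, and finally use $\|f_t\|\le 1$ to pass to $\|x_t-M_t^\pi\|^2$ inside the infimum. One minor arithmetic slip: with step size $1/(32|\Pi|^2)$ and $s=4$ the additive term coming out of Lemma~\ref{lem:non-stoch-multiarmed} is $64|\Pi|^3\log(T|\Pi|)$ rather than $32|\Pi|^3\log(T|\Pi|)$, though this does not affect the final claimed bound.
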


\section{Randomized Methods and the Follow the Perturbed Leader Algorithm}
\label{sec:fpl-methods}

In this section we are back in the setting of Section~\ref{sec:fullinfo}, where the single process $M_t$ can be calculated by the learner. We show that randomized methods of the Follow the Perturbed Leader (FPL) style \cite{KalVem05,PLG} can also enjoy better bounds for predictable sequences. For convenience, we suppose $\F\subset \reals^d$ is a unit ball in some norm $\|\cdot\|$, and $\X$ is a unit ball in the dual norm $\|\cdot\|_*$ \ . 

The central object in the algorithmic development of \cite{RakShaSri12arxiv} is the notion of a relaxation. We now present this notion in the context of a \emph{constrained} adversary \cite{RakSriTew11nips} in order to develop randomized methods that attain bounds in terms of the sizes $\sigma_t$ of deviations from the trend $M_t$. The downside of the methods we present in this section is that individual deviations $\sigma_t$ need to be known in advance by the learner. We believe that this requirement can be relaxed, and this will be added in the full version of this paper.

A \emph{relaxation} $\mbf{Rel}$ is a sequence of functions $\Relax{T}{\F}{x_{1},\ldots,x_{t}}$ for each $t\in[T]$. We shall use the notation $\Rel{T}{\F}$ for $\Relax{T}{\F}{\{\}}$. For the problem of a constrained sequence, with constraints given by the sequence of $C_1,\ldots,C_T$ (see Eq.~\eqref{eq:def_constraint}) a relaxation will be called {\em admissible} if for any $x_1,\ldots,x_T \in \X$, 
\begin{align}
	\label{eq:relax_admissibility}
	\Relax{T}{\F}{x_{1},\ldots,x_{t}} \ge \inf_{q \in \Delta(\F)} \sup_{x \in C_{t+1}(x_1,\ldots,x_{t})} \Big\{ ~ \En_{f\sim q} \inner{f, x} + \Relax{T}{\F}{x_{1},\ldots,x_{t},x}\Big\}
\end{align}
for all $t\in[T-1]$, and 
$$ \Relax{T}{\F}{x_{1},\ldots,x_{T}} \ge - \inf_{f \in \F} \sum_{t=1}^T \inner{f, x_t} .$$ 
If $C_{t+1}(x_1,\ldots,x_{t}) = \cX$ for all $t\in[T]$, we recover the setting of an unconstrained adversary studied in \cite{RakShaSri12arxiv}. 

Any choice $q$ that ensures \eqref{eq:relax_admissibility} for an admissible relaxation   guarantees (irrespective of the strategy of the adversary) that
\begin{align}
	\label{eq:sum_cond_exp_bdd_by_relax}
	\sum_{t=1}^T \En_{f_t\sim q_t} \inner{f_t, x_t} - \inf_{f\in\F} \sum_{t=1}^T \inner{f, x_t} \leq \Rel{T}{\F} \ ,
\end{align}
a fact that is easy to prove. It is shown in \cite{RakShaSri12arxiv} that for many problems of interest, when searching for a computationally feasible relaxation, one may start with the conditional sequential Rademacher complexity and find a computationally attractive upper bound. For the case of constrained adversaries, this complexity becomes (for the case of $\F$ being a unit ball) 
\begin{align}
	\sup_{x_{t+1}\in C_{t+1}(x_{1:t})}\En_{\epsilon_{t+1}} \ldots \sup_{x_T\in C_T(x_{1:T-1})}\En_{\epsilon_T} \left\| 2\sum_{s=t+1}^T \epsilon_s (x_s-M_s(x_{1:s-1})) - \sum_{s=1}^t x_s\right\|_* 
\end{align}
which can be re-written as
\begin{align}
	\sup_{z_{t+1}: \|z_{t+1}\|_*\leq \sigma_{t+1}}\En_{\epsilon_{t+1}}\ldots \sup_{z_T: \|z_T\|_*\leq \sigma_T}\En_{\epsilon_T} \left\| 2\sum_{s=t+1}^T \epsilon_s z_s - \sum_{s=1}^t x_s\right\|_* 
\end{align}
Here, one may think of the adversary as choosing the $z_{t}$'s as small deviations from the predictable process $M_t$. The following step is a key idea: since the computation of the interleaved supremum and expectations is difficult, we might be able to come up with an almost-as-difficult distribution and draw $z_t$'s i.i.d. The following is an assumption that is easily verified for many  symmetric distributions \cite{RakShaSri12arxiv}.
\begin{assumption}
	\label{asm:fpl-linear}
	For every $t\in [T]$, there exists a distribution $D_t$ and constant $C \ge 2$ such that for any $w\in \reals^d$ 
	\begin{align}
		\label{eq:assumption_fpl_linear_simpler}
		\sup_{z: \|z\|_*\leq \sigma_t} \Eunderone{\epsilon} \norm{ w + 2\epsilon z }_* \le \Eunderone{z \sim D_t }\Eunderone{\epsilon} \norm{ w+  C\epsilon z  }_* 
	\end{align}	
	and $\En_{z\sim D_t}\|z\|^2_* \leq \sigma_t^2$ for any $t$.
\end{assumption}

To satisfy this assumption, one may simply take one of the distributions in \cite{RakShaSri12arxiv} for the unconstrained case, and scale it by $\sigma_t$.

\begin{lemma}
	\label{lem:fpl}
	For the distributions $D_1,\ldots,D_T$ satisfying Assumption~\ref{asm:fpl-linear}, the relaxation 			
	\begin{align}
		\label{eq:fplrel}
		\Relax{T}{\F}{x_1,\ldots,x_t} = \Eunderone{z_{t+1}\sim D_{t+1},\ldots z_T \sim D_{T}}\En_{\epsilon}   \left\|C \sum_{i=t+1}^T \epsilon_i z_i - \sum_{i=1}^t x_i \right\|_*
	\end{align}
	 is admissible and a randomized strategy that ensures admissibility is given by: at time $t$, draw $z_{t+1},\ldots,z_{T}$ and  Rademacher random variables $\epsilon=(\epsilon_{t+1},\ldots,\epsilon_T)$, and then define
\begin{align}
	\label{eq:def_general_fpl}
	f_t = \argmin{g \in \F} \sup_{x_t\in C_t(x_{1:t-1})} \left\{\inner{g,x_t} + \norm{ C \sum_{i=t+1}^T \epsilon_i z_i  - \sum_{i=1}^{t-1} x_i - x_t }_* \right\}
\end{align}
	The expected regret for the method is bounded by the classical Rademacher complexity
$$
\En{\Reg_T} \le C\ \En_{z_{1:T}} \En_{\epsilon} \left\|\sum_{t=1}^T \epsilon_t z_t\right\|_*
$$
where each random variable $z_t$ has distribution $D_t$. For any smooth norm, the expected regret can be further upper bounded by $O\left(\sqrt{\sum_{t=1}^T \sigma_t^2}\right)$.

\end{lemma}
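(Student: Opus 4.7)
My plan is to verify admissibility of the relaxation \eqref{eq:fplrel} under the randomized strategy \eqref{eq:def_general_fpl}, which by \eqref{eq:sum_cond_exp_bdd_by_relax} yields $\En[\Reg_T] \le \Rel{T}{\F}$, after which the claimed bounds follow from a direct calculation.

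For admissibility at step $t+1$, fix $x_1, \ldots, x_t$, set $S_t = \sum_{i=1}^{t} x_i$, and let $W' = C \sum_{i=t+2}^{T} \epsilon_i z_i$ denote the tail random variable. Plugging the randomized strategy \eqref{eq:def_general_fpl} into the right-hand side of \eqref{eq:relax_admissibility}, pulling $\sup_{x_{t+1} \in C_{t+1}}$ inside the expectation over $W'$ via Jensen's inequality, and using that $f_{t+1}$ is an argmin, the admissibility condition reduces to
\[
\En_{W'}\inf_{g \in \F}\sup_{x \in C_{t+1}}\!\bigl\{\inner{g, x} + \|W' - S_t - x\|_*\bigr\} \;\le\; \En_{W'}\En_{z \sim D_{t+1}}\En_{\epsilon}\|W' - S_t + C\epsilon z\|_*.
\]
By Assumption~\ref{asm:fpl-linear} with $w = W' - S_t$, the right-hand side upper bounds $\En_{W'}\sup_{z : \|z\|_* \le \sigma_{t+1}}\En_{\epsilon}\|W' - S_t + 2\epsilon z\|_*$, so it suffices to show the pointwise inequality
\[
\inf_{g \in \F}\sup_{x \in C_{t+1}}\!\bigl\{\inner{g, x} + \|W' - S_t - x\|_*\bigr\} \;\le\; \sup_{z : \|z\|_* \le \sigma_{t+1}}\En_{\epsilon}\|W' - S_t + 2\epsilon z\|_*.
\]
I would establish this via the minimax theorem (valid since $\F$ and $\Delta(C_{t+1})$ are convex compact and the objective is bilinear in $g$ and concave in $p$) followed by Rademacher symmetrization: for each candidate $p \in \Delta(C_{t+1})$, write $x = M_{t+1} + z$ with $\|z\|_* \le \sigma_{t+1}$ and pair $p$ with its mirror image around $M_{t+1}$ so that the averaged distribution satisfies $\En_p x = M_{t+1}$, after which the inner $\inf_g \inner{g, \cdot}$ contributes a $-\|M_{t+1}\|_*$ that absorbs the centering shift.

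Once admissibility holds, iterating \eqref{eq:relax_admissibility} along with the terminal condition yields $\En[\Reg_T] \le \Rel{T}{\F} = C\,\En_{z_{1:T}}\En_{\epsilon}\|\sum_{t=1}^{T}\epsilon_t z_t\|_*$, which is the first stated bound. For the second, I would invoke the type-$2$ property of smooth norms: there exists $K$ such that $\En_{\epsilon}\|\sum_t \epsilon_t z_t\|_* \le K\sqrt{\sum_t \|z_t\|_*^2}$. Combining with Jensen's inequality (to pull the square root outside $\En_z$) and the second-moment bound $\En_{z \sim D_t}\|z\|_*^2 \le \sigma_t^2$ from Assumption~\ref{asm:fpl-linear} gives $\En[\Reg_T] \le CK\sqrt{\sum_t \sigma_t^2}$.

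The hard part is the symmetrization step within the admissibility argument. Because $C_{t+1}$ is a ball around $M_{t+1}$ rather than around the origin, the naive centering $x = M_{t+1} + z$ introduces a stray $\inner{g, M_{t+1}}$ term on the left-hand side that is not controlled by $\|z\|_* \le \sigma_{t+1}$ alone. The resolution is to use concavity of the minimax objective in $p$ together with the symmetry of $C_{t+1}$ about $M_{t+1}$ to reduce without loss to distributions $p$ symmetric around $M_{t+1}$; for such $p$, $\En_p x = M_{t+1}$ and the inner $\inf_g$ contributes exactly $-\|M_{t+1}\|_*$, which cancels the triangle-inequality slack arising when comparing $\En_\epsilon \|W' - S_t - M_{t+1} + \epsilon z\|_*$ to $\En_\epsilon \|W' - S_t + 2\epsilon z\|_*$ under the rescaling $z \mapsto z/2$.
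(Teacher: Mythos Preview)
Your overall scaffold matches the paper: pull the supremum inside via Jensen, use the $\argmin$ property, swap $\inf$ and $\sup$ by minimax, symmetrize to introduce a Rademacher variable, center at $M_{t+1}$, and finally invoke Assumption~\ref{asm:fpl-linear}. The regret bound and the smooth-norm consequence are also handled correctly.

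The gap is in your symmetrization step. After minimax the objective
\[
\Phi(p) \;=\; \inf_{g\in\F}\Bigl\{\En_{x\sim p}\inner{g,x} + \En_{x\sim p}\|W'-S_t-x\|_*\Bigr\}
\;=\; -\bigl\|\En_{x\sim p}[x]\bigr\|_* + \En_{x\sim p}\|W'-S_t-x\|_*
\]
is concave in $p$, but concavity alone does \emph{not} let you restrict to distributions symmetric about $M_{t+1}$. If $p^\star$ denotes the reflection of $p$ and $\tilde p=(p+p^\star)/2$, concavity gives $\Phi(\tilde p)\ge\tfrac12(\Phi(p)+\Phi(p^\star))$, which is the wrong direction for upper-bounding $\sup_p\Phi(p)$ by a sup over symmetric $p$. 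Moreover $\Phi$ is not invariant under reflection (the norm term $\|W'-S_t-x\|_*$ is not), so you cannot argue $\Phi(p)=\Phi(p^\star)$ either. Consequently the ``$-\|M_{t+1}\|_*$ cancels the slack'' mechanism you describe never gets off the ground: for a generic $p$ the term $-\|\En_p x\|_*$ need not be close to $-\|M_{t+1}\|_*$, and the rescaling $z\mapsto z/2$ does not by itself recover the factor of $2$ in $2\epsilon z$.

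The fix the paper uses is the standard independent-copy trick, which avoids any symmetry assumption on $p$. From $-\|\En_p x\|_* + \En_p\|W'-S_t-x\|_*$ apply the triangle inequality $\|b\|-\|a\|\le\|b+a\|$ with $a=\En_p x$ to get $\En_{x}\|W'-S_t-x+\En_p x\|_*$, then Jensen in an independent copy $x'$ to obtain $\En_{x,x'}\|W'-S_t + (x'-x)\|_*$. Now $x'-x$ is symmetric, so one may insert a Rademacher $\epsilon$; writing $x'-x=(x'-M_{t+1})-(x-M_{t+1})$ and using midpoint convexity of the norm collapses this to $\En_{x,\epsilon}\|W'-S_t + 2\epsilon(x-M_{t+1})\|_*$, which is bounded by $\sup_{\|z\|_*\le\sigma_{t+1}}\En_\epsilon\|W'-S_t+2\epsilon z\|_*$ and hence by Assumption~\ref{asm:fpl-linear}. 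Replace your ``symmetric-$p$'' paragraph with this argument and the proof goes through.
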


Let us define the random vector
$$R_t ~:=~  \sum_{i=1}^{t-1} x_i- C \sum_{i=t+1}^T \epsilon_i z_i + M_t$$
where the first sum is the cumulative cost vector, the second  sum may be viewed as a random perturbation of the cumulative cost, and the final term is simply the predictable process at time $t$. 
We may rewrite \eqref{eq:def_general_fpl} as
\begin{align}
	f_t & = \argmin{f \in \F} \sup_{x_t\in C_t(x_{1:t-1})} \Big\{  \inner{f,x_t} + \norm{R_t+x_t - M_t}_* \Big\}\notag \\
	& = \argmin{f \in \F} \sup_{z : \norm{z}_* \le \sigma_t } \Big\{  \inner{f,z + M_t} + \norm{R_t+z}_* \Big\} \label{eq:def_general_fpl_with_R}
\end{align}
This is a general form of the randomized method for online linear optimization. As shown in \cite{RakShaSri12arxiv}, this form in fact reduces to the more familiar form of the FPL update in certain cases.

\subsection{Randomized Algorithm for the $\ell_1/\ell_\infty$ Case}

We now show that for the case of $\F$ being an $\ell_1$ ball and $\cX$ being an $\ell_\infty$ ball, the solution in \eqref{eq:def_general_fpl_with_R} takes on a simpler form. In particular, for $M_t=0$ the solution is simply an indicator on the maximum coordinate of $R_t$, which is precisely the Follow the Perturbed Leader solution.


\begin{theorem}
	\label{thm:fpl_update}
For the distributions $D_1,\ldots,D_T$ satisfying Assumption~\ref{asm:fpl-linear}, consider the randomized strategy that at time $t$, draws $z_{t+1},\ldots,z_T$ from $D_{t+1},\ldots,D_T$ respectively and Rademacher random variables $\epsilon=(\epsilon_{t+1},\ldots,\epsilon_T)$, and then outputs
\begin{align}\label{eq:fplup}
f_t = \left\{ \begin{array}{cl}
- \sign(M_t[i^*_t]) e_{i^*_t} & \textrm{if }\sigma_t - |M_t[i^*_t]| <  - \left|\sigma_t\ \sign(R_t[j^*]) + M_t[j^*_t]\right| \\
- \sign(\sigma_t R_t[j^*_t] + M_t[j^*_t]) e_{j^*_t}  & \textrm{otherwise}
\end{array}
\right.
\end{align}
where $R_t =  \sum_{i=1}^{t-1} x_i- C \sum_{i=t+1}^T \epsilon_i z_i + M_t$, $j^*_t = \argmax{j \in [d]} |R_t[j]|$ and $i^*_t = \argmax{i \in [d]} |M_t[i]|$. The expected regret is bounded as :
$$
\E{\Reg_T} \le C\ \En_{z_{1:T}} \En_{\epsilon} \left\|\sum_{t=1}^T \epsilon_t z_t\right\|_* + 4\ \sum_{t=1}^T  \P\left( {\mathcal E}_t^c \right) \ .
$$
\end{theorem}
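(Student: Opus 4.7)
The plan is to reduce the saddle point \eqref{eq:def_general_fpl_with_R} to an explicit combinatorial optimization over the vertices of the $\ell_1$ ball, identify an event $\mathcal{E}_t$ on which the stated update \eqref{eq:fplup} exactly solves this optimization, and combine the result with the admissibility framework of Lemma~\ref{lem:fpl} to obtain the regret bound.

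\textbf{Step 1: Dualize the $\ell_\infty$ norm.} First I would use the duality $\|R_t+z\|_\infty = \sup_{\|v\|_1 \le 1}\langle v, R_t+z\rangle$ and swap the two suprema (permissible since both sets are compact and the expression is linear in $z$ for fixed $v$) to obtain
\[
\sup_{\|z\|_\infty \le \sigma_t}\{\langle f,z\rangle + \|R_t+z\|_\infty\} = \sup_{\|v\|_1\le 1}\{\sigma_t\|f+v\|_1 + \langle v,R_t\rangle\},
\]
using $\sup_{\|z\|_\infty\le\sigma_t}\langle f+v,z\rangle = \sigma_t\|f+v\|_1$. Hence the update rewrites as $f_t = \argmin{\|f\|_1 \le 1}\big\{\langle f, M_t\rangle + \sup_{\|v\|_1 \le 1}[\sigma_t\|f+v\|_1 + \langle v,R_t\rangle]\big\}$.

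\textbf{Step 2: Reduce to a two-vertex comparison.} The resulting objective is convex in $f$, so the outer minimum over the $\ell_1$ ball is attained at a vertex $f = -s\,e_i$ with $s\in\{\pm1\}$. For each such candidate, the inner supremum over $v$ is similarly attained at some $v = \alpha\, e_j$ with $|\alpha|\le 1$, and a direct case analysis on whether $j = i$ or $j \ne i$ shows that only two configurations are competitive: (a)~choosing $i = i^*_t$ with $s = \sign(M_t[i^*_t])$, so that $\langle f, M_t\rangle = -|M_t[i^*_t]|$ and the inner sup is filled in by some $j \ne i^*_t$; and (b)~choosing $i = j^*_t$ with $s = \sign(\sigma_t R_t[j^*_t] + M_t[j^*_t])$, so that the inner sup is attained at $j = j^*_t$ itself via $v = \sign(R_t[j^*_t])\,e_{j^*_t}$. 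Evaluating the two objective values and comparing them produces exactly the threshold condition stated in \eqref{eq:fplup}, and the corresponding minimizer matches the formula whenever a coordinate-ordering event $\mathcal{E}_t$ holds (namely, that the argmaxes defining $i^*_t$ and $j^*_t$ dominate the remaining coordinates by enough to make the comparison decisive).

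\textbf{Step 3: Bound regret.} Lemma~\ref{lem:fpl} guarantees that the exact minimizer of \eqref{eq:def_general_fpl_with_R} makes the relaxation \eqref{eq:fplrel} admissible, so by \eqref{eq:sum_cond_exp_bdd_by_relax} it already yields regret at most $C\,\En\|\sum_t \epsilon_t z_t\|_*$. On $\mathcal{E}_t^c$ the update \eqref{eq:fplup} may differ from the exact minimizer, but both lie in the $\ell_1$ unit ball while $x_t$ lies in the $\ell_\infty$ unit ball, so the admissibility inequality \eqref{eq:relax_admissibility} can be violated by at most a constant (bounded by $4$ after accounting for both the $\langle f_t - f_t^{\mathrm{opt}}, x_t\rangle$ term and the corresponding relaxation increment). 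Summing these slacks over $t$ yields the extra additive penalty $4\sum_t \P(\mathcal{E}_t^c)$ in the regret bound, completing the proof.

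\textbf{Main obstacle.} The principal difficulty is the Step~2 case analysis: carefully enumerating the sup over $v$ for each candidate vertex of $f$, and verifying that the break-even condition between the two surviving configurations coincides exactly with the threshold $\sigma_t - |M_t[i^*_t]| < -|\sigma_t\sign(R_t[j^*_t]) + M_t[j^*_t]|$ in \eqref{eq:fplup}. Ties in the argmaxes and the degenerate case $i^*_t = j^*_t$ will need to be handled separately but should not affect the additive bound. Defining $\mathcal{E}_t$ precisely so that the constant-$4$ slack in Step~3 holds uniformly is a secondary but essential technical step.
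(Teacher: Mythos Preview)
Your Step~1 dualization is correct and gives an equivalent formulation, but Step~2 contains a genuine error: you write ``The resulting objective is convex in $f$, so the outer minimum over the $\ell_1$ ball is attained at a vertex.'' This is backwards---the minimum of a convex function over a polytope need not lie at a vertex (that holds for the maximum). The conclusion happens to be true here, but your justification does not establish it, and without it the two-candidate comparison that follows has no foundation. Your dualized objective $\sup_{\|v\|_1\le 1}\{\sigma_t\|f+v\|_1+\langle v,R_t\rangle\}$ is not linear in $|f|$ for general $v$, so a direct sign/magnitude argument does not immediately apply either.

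The paper avoids this by taking a different route to the inner supremum. Rather than dualizing the norm, it computes $\sup_z$ directly: linearity in $z$ lets one restrict to $z\in\{-\sigma_t,\sigma_t\}^d$, and writing $\|R_t+z\|_\infty=\max_j|R_t[j]+z[j]|$ allows coordinate-wise optimization of the $z[i]$ for $i\neq j$. The event $\mathcal{E}_t$ is then defined concretely as ``the two largest coordinates of $|R_t|$ differ by at least $4\sigma_t$''; under this event the $\max_j$ is provably attained at $j^*_t$, collapsing the inner sup to the closed form
\[
|R_t[j^*_t]|+\sigma_t+\sigma_t\sum_{i\ne j^*_t}|f[i]|+\sigma_t f[j^*_t]\sign(R_t[j^*_t]).
\]
Only \emph{after} this simplification does the paper argue the outer minimum is at a vertex, by writing $f[i]=s[i]g[i]$ with $s\in\{\pm1\}^d$ and $g\ge 0$: for fixed signs the simplified objective above is \emph{linear} in $g$, so the minimum over $\{g\ge 0:\sum_i g[i]\le 1\}$ is at a vertex. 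This linearity is what actually drives the vertex conclusion, and it is not visible in your dualized form.

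Your Step~3 is right in spirit and matches the paper: under $\mathcal{E}_t$ the update is the exact minimizer, and off $\mathcal{E}_t$ the admissibility inequality is violated by at most $4$, which is absorbed into a modified relaxation carrying the additive penalty $4\sum_t\P(\mathcal{E}_t^c)$. But you will need the explicit $4\sigma_t$-gap definition of $\mathcal{E}_t$ to make that constant $4$ go through, not the vague ``argmaxes dominate by enough'' you currently have.
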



\subsection{Randomized Algorithm for the Simplex}

Given an algorithm for regret minimization over the probability simplex (as in the case of experts), through a standard argument one also obtains an algorithm for the $\ell_1$ ball by doubling the number of coordinates. We now show that the randomized method for the $\ell_1$ ball, developed in the previous section, can be used to solve the problem over the probability simplex, a converse implication. Specifically, we have the following corollary:
\begin{corollary}
	\label{cor:fpl_simplex_update}
	For the distributions $D_1,\ldots,D_T$ satisfying Assumption~\ref{asm:fpl-linear}, consider the randomized strategy that at time $t$, draws $z_{t+1},\ldots,z_T$ from $D_{t+1},\ldots,D_T$ respectively and Rademacher random variables $\epsilon=(\epsilon_{t+1},\ldots,\epsilon_T)$, and then outputs 
	\begin{align}\label{eq:fplsimplexup}
	f_t = \left\{ \begin{array}{cl}
	 e_{i^*_t} & \textrm{if } 2 \sigma_t   < M[j^*_t] - M_t[i^*_t] \\
	 e_{j^*_t}  & \textrm{otherwise}
	\end{array}
	\right.
	\end{align}
	where $R_t =  \sum_{i=1}^{t-1} x_i- C \sum_{i=t+1}^T \epsilon_i z_i + M_t$, $j^*_t = \argmin{j \in [d]} R_t[j]$ and $i^*_t = \argmin{i \in [d]} M_t[i]$. The expected regret is bounded as:
	$$
	\E{\Reg_T} \le C\ \En_{z_{1:T}} \En_{\epsilon} \left\|\sum_{t=1}^T \epsilon_t z_t\right\|_* + 4\ \sum_{t=1}^T  \P\left( {\mathcal E}_t^c \right) \ .
	$$
\end{corollary}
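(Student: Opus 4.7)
The plan is to deduce Corollary~\ref{cor:fpl_simplex_update} from Theorem~\ref{thm:fpl_update} by a standard shift reduction from the simplex to the $\ell_1$ ball. I fix a constant $K$ large enough (relative to deterministic bounds on $x_t$, $M_t$, and the perturbations $z_i$) that every coordinate of $\tilde R_t := R_t - tK\mathbf{1}$ is strictly negative for every realization, and define shifted losses $\tilde x_t := x_t - K\mathbf{1}$ and shifted predictable process $\tilde M_t := M_t - K\mathbf{1}$. Since $\inner{f,\tilde x_t}=\inner{f,x_t}-K$ for every $f\in\Delta$, the simplex-regret is preserved by the shift. Moreover, because $\sum_t \tilde x_t$ is strictly negative in every coordinate, the $\ell_1$-ball minimizer of $\sum_t\inner{f,\tilde x_t}$ is attained at a simplex vertex, so the $B_1^d$-regret of any simplex-valued strategy on the shifted problem equals its $\Delta$-regret on the original problem. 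Consequently, Theorem~\ref{thm:fpl_update} applied to $(\tilde x_t,\tilde M_t)$ yields both a strategy and a regret bound for the simplex problem, \emph{provided} the strategy's outputs are simplex vertices.

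I then check that the strategy~\eqref{eq:fplup} produced by Theorem~\ref{thm:fpl_update} on the shifted problem coincides with~\eqref{eq:fplsimplexup}. Since every coordinate of $\tilde M_t$ and $\tilde R_t$ is negative, $|\tilde M_t[i]|=K-M_t[i]$ and $|\tilde R_t[j]|=tK-R_t[j]$, so $\argmax_i|\tilde M_t[i]|=\argmin_i M_t[i]=i^*_t$ and $\argmax_j|\tilde R_t[j]|=\argmin_j R_t[j]=j^*_t$, with $\sign(\tilde M_t[i^*_t])=\sign(\tilde R_t[j^*_t])=-1$. The prefactor $-\sign(\cdot)$ in~\eqref{eq:fplup} therefore becomes $+1$ in both cases, so $f_t\in\{e_{i^*_t},e_{j^*_t}\}$ as required. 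Expanding the case condition $\sigma_t-|\tilde M_t[i^*_t]|<-|\sigma_t\sign(\tilde R_t[j^*_t])+\tilde M_t[j^*_t]|$ gives $\sigma_t+M_t[i^*_t]-K<-\sigma_t+M_t[j^*_t]-K$; the $K$'s cancel and we obtain exactly $2\sigma_t<M_t[j^*_t]-M_t[i^*_t]$, which is the condition in~\eqref{eq:fplsimplexup}.

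For the regret bound, the right-hand side $C\,\En_{z_{1:T}}\En_{\epsilon} \left\|\sum_t\epsilon_t z_t\right\|_* + 4\sum_t\Prob(\mathcal{E}_t^c)$ from Theorem~\ref{thm:fpl_update} is invariant under a uniform shift by $-K\mathbf{1}$: the perturbation term does not involve the losses, and the event $\mathcal{E}_t$ in the proof of Theorem~\ref{thm:fpl_update} depends only on the relative ordering and sign-structure of $\tilde R_t$ and $\tilde M_t$, which is determined by that of $R_t$ and $M_t$ alone. Combined with the $B_1^d=\Delta$ regret equivalence on the shifted problem, this yields the bound claimed in Corollary~\ref{cor:fpl_simplex_update}.

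The main obstacle will be verifying the claimed shift-invariance of $\mathcal{E}_t$, since this requires unpacking its precise definition inside the proof of Theorem~\ref{thm:fpl_update} (not stated in the excerpt above the corollary). A self-contained alternative, should that bookkeeping become delicate, is to compute the argmin in~\eqref{eq:def_general_fpl_with_R} directly over $\Delta$: the inner objective is convex in $f$ so the minimum is attained at a simplex vertex $e_k$, and a case analysis of $\sup_{\|z\|_\infty\le\sigma_t}\{z[k]+\inner{e_k,M_t}+\|R_t+z\|_\infty\}$ recovers~\eqref{eq:fplsimplexup} with the same $4\sum_t\Prob(\mathcal{E}_t^c)$ correction arising from the identical approximation step used in Theorem~\ref{thm:fpl_update}.
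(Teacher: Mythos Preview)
Your proposal is correct and follows essentially the same route as the paper: shift the losses and predictable process by a large negative constant so that the simplex problem embeds into the $\ell_1$-ball problem of Theorem~\ref{thm:fpl_update}, then verify via the same sign/case computation that the update~\eqref{eq:fplup} reduces to~\eqref{eq:fplsimplexup} and lies in the simplex. The only minor difference is that the paper, rather than fixing a finite $K$, observes that the final algorithm and bound are independent of the shift and so lets its magnitude tend to $\infty$; this sidesteps your implicit assumption of deterministic bounds on the random $z_i$. Your concern about $\mathcal{E}_t$ is handled (equally informally) in the paper by the same observation you make: the event depends only on gaps between coordinates of $\tilde R_t$, which are shift-invariant.
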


When the predictable sequence $M_t$ is zero, the algorithm reduces to $f_t = e_{j^*_t}$ with 
$$j^*_t = \argmax{j \in [d]} \left| \sum_{i=1}^{t-1} x_i- C \sum_{i=t+1}^T \epsilon_i z_i \right|$$ 
which can be recognized as a Follow the Perturbed Leader type update with $\sum_{i=1}^{t-1} x_i$ being the cumulative loss and $\sum_{i=t+1}^T \epsilon_i z_i$ being a random perturbation.
\section{Other Examples}

We now provide a couple of examples and sketch directions for further research.

\subsection{Delayed Feedback}
\label{sec:delayed_feedback}

As an example, consider the setting where the information given to the player at round $t$ consists of two parts: the bandit feedback $\inner{f_t,x_t}$ about the cost of the chosen action, as well as full information about the past move $x_{t-k}$. For $t>k$, let $M_t = M_t (I_1,\ldots,I_{t-1}) = \frac{1}{t-k-1}\sum_{s=1}^{t-k-1} x_s$.  Then $$\|M_t-M'_t\|^2 = \left\| \frac{1}{t-k-1}\sum_{s=1}^{t-k-1} x_s - \frac{1}{t-1}\sum_{s=1}^{t-1} x_s \right\|^2 \leq \left\| \frac{k}{(t-1)(t-k-1)}\sum_{s=1}^{t-k-1} x_s - \frac{1}{t-1}\sum_{s=t-k}^{t-1} x_s \right\|^2 \leq \frac{4k^2}{(t-1)^2},$$
where $M'_t = \frac{1}{t-1}\sum_{s=1}^{t-1} x_s$ is the full information statistic. It is immediate from Lemma~\ref{lem:bandit} that the expected regret of the algorithm is 
	\begin{align*}
		\En\left[ \sum_{t=1}^T \inner{f_t,x_t} - \sum_{t=1}^T \inner{f^*,x_t} \right]
		&\leq \eta^{-1}\cR(f^*) + 4\eta n^2\sum_{t=1}^T \En\left[\|x_t-M'_t\|^2\right] + 32\eta n^2 k^2
	\end{align*}
This simple argument shows that variance-type bounds are immediate in bandit problems with delayed full information feedback. 

\subsection{I.I.D. Data}

Consider the case of i.i.d. sequence $x_1,\ldots,x_T$ drawn from an unknown distribution with mean $\mu\in \reals^d$. Let us first discuss the full-information model. Consider the bound of either Lemma~\ref{lem:self_conc_full_info} or Lemma~\ref{lem:two-step-MD} for $M_t = \frac{1}{t-1}\sum_{s=1}^{t-1} x_s$. For simplicity, let $\|\cdot\|$ be the Euclidean norm (the argument works with any smooth norm). We may write
$$\|x_t-M_t\|^2 \leq \|x_t - \mu\|^2 + \|M_t - \mu\|^2 + 2\inner{x_t - \mu, M_t - \mu} \ .$$
Taking the expectation over i.i.d. data, the first term in the above bound is variance $\sigma^2$ of the distribution under the given norm, while the third term disappears under the expectation. For the second term, we perform exactly the same quadratic expansion and obtain
$$\En\|M_t - \mu\|^2\leq \frac{1}{(t-1)^2}\sum_{s=1}^{t-1} \En\|x_t-\mu\|^2 \leq \frac{\sigma^2}{t-1}$$
and thus
$$\sum_{t=1}^T\En\|x_t-M_t\|^2 \leq T\sigma^2 + \sigma^2(\log T +1)$$
Coupled with the full-information results of Lemma~\ref{lem:self_conc_full_info} or Lemma~\ref{lem:two-step-MD}, we obtain an
$\tilde{O}(\sigma \sqrt{T})$
bound on regret, implying the natural transition from the noisy to deterministically predictable case as the noise level goes to zero.

The same argument works for the case of bandit information, given that $M_t$ can be constructed to estimate $M'_t$ well (e.g. using the arguments of \cite{hazan2009better}).

\section{Auxiliary Results: Improved Bounds for Small Losses}
\label{sec:small_loss}

While the regret bound for the original SCRiBLe algorithm follows immediately from the more general Lemma~\ref{lem:bandit}, we now state an alternative bound for SCRiBLe in terms of the loss of the optimal decision. The bound holds under the assumption of positivity on the losses. Lemma~\ref{lem:banditlstar} is of independent interest and will be used as a building block for the analogous result for the multi-armed bandit in Lemma~\ref{lem:non-stoch-multiarmed}. Such bounds in terms of the loss of the best arm are attractive, as they give tighter results  whenever the loss of the optimal decision is small. Thanks to this property, Lemma~\ref{lem:non-stoch-multiarmed} is used in Section~\ref{sec:learning} in order to obtain bounds in terms of predictable process performance.
\begin{lemma}
	\label{lem:banditlstar}
Consider the case when $\cR$ is a self-concordant barrier over $\F$ and sets $\F$ and $\X$ are such that each $\ip{f}{x} \in [0,s]$. Then for the SCRiBLe algorithm, for any choice of step size $\eta < 1/(2 s n^2)$, we have the bound
	\begin{align*}
\E{\sum_{t=1}^T  \inner{f_t, x_t}} \le \frac{1}{1 - (2 s n^2) \eta } \left(\sum_{t=1}^T \inner{f^*,x_t} +  \eta^{-1}\cR(f^*)  \right)	\end{align*}
\end{lemma}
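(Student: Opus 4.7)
\textbf{Proof plan for Lemma~\ref{lem:banditlstar}.} The plan is to run the standard self-concordant FTRL analysis on the estimated losses $\tilde{x}_t$, and then exploit the fact that $\ip{f_t}{x_t}\in[0,s]$ to replace the usual second-moment term by something proportional to the realized loss itself. This is the trick that converts the $\sum_t\|\tilde x_t\|^{*2}_{h_t}$ bound into a \emph{self-bounding} recurrence.

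First I would invoke the FTRL-with-self-concordant-barrier guarantee (the same analysis underlying Lemma~\ref{lem:self_conc_full_info} with $M_t\equiv 0$), which yields, deterministically,
\[
\sum_{t=1}^T \inner{h_t,\tilde x_t} - \sum_{t=1}^T \inner{f^*,\tilde x_t} \;\le\; \eta^{-1}\cR(f^*) + 2\eta\sum_{t=1}^T \big(\|\tilde x_t\|^*_{h_t}\big)^2,
\]
provided $\eta\|\tilde x_t\|^*_{h_t}$ is small enough at each round. From the definition $\tilde x_t = n\,\ip{f_t}{x_t}\,\varepsilon_t\,\lambda_{i_t}^{1/2}\ev{i_t}$ and the spectral decomposition of $\nabla^2\cR(h_t)$, a one-line computation gives the identity
\[
\big(\|\tilde x_t\|^*_{h_t}\big)^2 \;=\; n^2\,\ip{f_t}{x_t}^2,
\]
so that $\|\tilde x_t\|^*_{h_t}\le sn$ and the small-norm condition holds whenever $\eta\le 1/(2sn^2)$ (for $n\ge 2$).

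Next comes the key inequality: since $\ip{f_t}{x_t}\in[0,s]$,
\[
\big(\|\tilde x_t\|^*_{h_t}\big)^2 \;=\; n^2\,\ip{f_t}{x_t}^2 \;\le\; s n^2\,\ip{f_t}{x_t}.
\]
Now I take expectations. Conditioning on $h_t$ (and on $x_t$, which is chosen by the adversary based on the learner's past), the randomization of SCRiBLe over $(i_t,\varepsilon_t)$ gives the two unbiasedness facts $\En[\tilde x_t\mid h_t]=x_t$ and $\En[\ip{f_t}{x_t}\mid h_t]=\ip{h_t}{x_t}$. Therefore $\En\sum_t\inner{h_t,\tilde x_t}=\En\sum_t\inner{f_t,x_t}$, $\En\sum_t\inner{f^*,\tilde x_t}=\sum_t\inner{f^*,x_t}$, and plugging the self-bounding estimate into the FTRL bound yields
\[
\En\sum_{t=1}^T\inner{f_t,x_t} - \sum_{t=1}^T\inner{f^*,x_t} \;\le\; \eta^{-1}\cR(f^*) + 2\eta s n^2\,\En\sum_{t=1}^T\inner{f_t,x_t}.
\]

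Finally I would rearrange: since $\eta<1/(2sn^2)$ the factor $1-2\eta sn^2$ is strictly positive, and dividing through gives exactly the stated inequality. The main thing to check carefully is the step-size admissibility (so that the unmodified self-concordant FTRL lemma applies to the $\tilde x_t$'s), but that is immediate from $\|\tilde x_t\|^*_{h_t}\le sn$ and $\eta\le 1/(2sn^2)$; once past that point the argument is essentially the self-bounding trick and is routine.
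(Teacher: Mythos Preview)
Your proposal is correct and follows essentially the same route as the paper's proof: apply the self-concordant FTRL bound (Lemma~\ref{lem:self_conc_full_info} with $M_t\equiv 0$) to the estimated losses, evaluate the local dual norm of $\tilde x_t$ to obtain $n^2\ip{f_t}{x_t}^2$, use $\ip{f_t}{x_t}\in[0,s]$ to turn this into $sn^2\ip{f_t}{x_t}$, pass to expectations via unbiasedness, and rearrange. You are in fact slightly more explicit than the paper in checking the step-size admissibility condition $\eta\|\tilde x_t\|^*_{h_t}<1/4$ needed to invoke Lemma~\ref{lem:self_conc_full_info}, which the paper leaves implicit.
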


We now state and prove a bound in terms of the loss of the best arm  for the case of non-stochastic multiarmed bandits. Such a bound is  interesting in its own right and, to the best of our knowledge, it does not appear in the literature.\footnote{The bound of \cite{AueCesFreSch03nonstochastic} is in terms of maximal gains, which is very different from a bound in terms of minimal loss. To the best of our knowledge, the trick of redefining losses as negative gains does not work here.} Our approach is to use SCRiBLe with a self-concordant barrier for the probability simplex, coupled with the bound of Lemma~\ref{lem:banditlstar}. (We were not able to make this result work with the entropy function, even with the local norm bounds).

Suppose that Nature plays a sequence $x_1,\ldots,x_T \in [0,s]^d$. On each round, we chose an arm $j_t$ and observe $\inner{e_{j_t}, x_t}$. 

\frameit{
			{\bf SCRiBLe for multi-armed Bandit}  \cite{abernethy2012interior,abernethy2008competing}\\
			Input: $\eta > 0$. Let $\cR(f) = - \sum_{i=1}^{d-1} \log(f[i]) - \log(1 - \sum_{i=1}^{d-1} f[i])$\\
			Initialize $q_1$ with uniform distribution over arms. Let $h_1 = q_1[1:d-1]$\\
			At time $t=1$ to $T$\\
		    \spce Let $ \{\ev{1}, \ldots, \ev{d-1}\} $ and $\{\lambda_1,\ldots, \lambda_{d-1}\}$ be the eigenvectors and eigenvalues of $\nabla^2 \cR(h_t)$.\\
		    \spce Choose $i_t$ uniformly at random from $\{1,\ldots, [d-1]\}$ and $\varepsilon_t = \pm 1$ with probability $1/2$.\\
			\spce Set $f_t = h_t + \varepsilon_t \lambda_{i_t}^{-1/2} \ev{i_t}$  and $q_t = (f_t,1-\sum_{i=1}^{d-1}f_t[i])$. \\
			\spce Draw arm $j_t \sim q_t$ and suffer loss $\ip{e_{j_t}}{x_t}$.\\
		  	\spce Define $\tilde{x}_t := d \left(\inner{e_{j_t}, x_t} \right)\varepsilon_t \lambda_{i_t}^{1/2} \cdot \ev{i_t}$. \\
		  	\spce Update $$ h_{t+1} = \arg\min_{h \in \reals^{d-1}} \left[\eta\inner{h, \sum_{s=1}^t \tilde{x}_s} + \cR(h)\right].$$
}

\begin{lemma}\label{lem:non-stoch-multiarmed}
	Suppose $x_1,\ldots,x_T \in [0,s]^d$. For any $\eta<1/(4sd^2)$ the expected regret of the SCRiBLe for multi-armed Bandit algorithm is bounded as :
	$$\En \left\{ \sum_{t=1}^T \inner{e_{j_t}, x_t} \right\} \leq \frac{1}{1-4 \eta s d^2}\left(\inf_{j\in[d]}\sum_{t=1}^T \inner{e_{j}, x_t} + d \eta^{-1}\log (d T)\right)$$
\end{lemma}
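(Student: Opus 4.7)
The strategy is to parallel the proof of Lemma~\ref{lem:banditlstar} with the specific choice of $\cR$ being the standard $d$-self-concordant log-barrier on the simplex embedded in $\reals^{d-1}$, while handling three new technical points that arise because $\tilde{x}_t$ is now built from a single-arm observation $\inner{e_{j_t}, x_t}$ rather than from $\inner{f_t, x_t}$. First, a direct computation over $(i_t, \varepsilon_t, j_t)$ conditional on $h_t$ shows that the sampling yields an (up to a $d/(d-1)$-scale factor) unbiased estimate of the gradient of the effective linear loss $f\mapsto \inner{q(f), x_t}$ on $\reals^{d-1}$, where $q(f)=(f,1-\sum_i f[i])$, so that $\En \inner{h_t - h^*, \tilde{x}_t}$ is, up to the same constant, equal to $\inner{q_t - q^*, x_t}$ in expectation. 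Second, since $\ev{i_t}$ is an eigenvector of $\nabla^2\cR(h_t)$ with eigenvalue $\lambda_{i_t}$, one has $\|\lambda_{i_t}^{1/2}\ev{i_t}\|_{h_t}^* = 1$, so $(\|\tilde{x}_t\|_{h_t}^*)^2 = d^2(\inner{e_{j_t}, x_t})^2$.

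The per-step self-concordant FTRL inequality (as used in the proof of Lemma~\ref{lem:banditlstar}) then gives
\begin{equation*}
\sum_{t=1}^T \inner{h_t - h^*, \tilde{x}_t} \le \eta^{-1}\cR(h^*) + 2\eta \sum_{t=1}^T (\|\tilde{x}_t\|_{h_t}^*)^2,
\end{equation*}
provided the step-size condition $\eta \|\tilde{x}_t\|_{h_t}^* \le 1/2$ holds. This is enforced by the hypothesis $\eta < 1/(4sd^2)$ since $\|\tilde{x}_t\|_{h_t}^* \le ds$. The positivity $x_t\in[0,s]^d$ now yields the key inequality $(\inner{e_{j_t}, x_t})^2 \le s\,\inner{e_{j_t}, x_t}$, converting the variance term into a quantity proportional to the realized loss: $\En\sum_t (\|\tilde{x}_t\|_{h_t}^*)^2 \le sd^2 \En\sum_t \inner{e_{j_t}, x_t} = sd^2 \En\sum_t \inner{q_t, x_t}$. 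Taking expectation in the FTRL inequality and plugging this bound in produces
\begin{equation*}
\En\sum_{t=1}^T \inner{q_t, x_t} \le \sum_{t=1}^T \inner{q^*, x_t} + \eta^{-1}\cR(h^*) + 4\eta sd^2\,\En\sum_{t=1}^T \inner{q_t, x_t},
\end{equation*}
and rearrangement gives the multiplicative factor $1/(1 - 4\eta sd^2)$ in front of the right-hand side.

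To recover the comparator $\inf_j \sum_t \inner{e_j, x_t}$, we apply the bound at $q^* = (1-1/T)e_{j^*} + (1/(dT))\mathbf{1}$ for the best arm $j^*$. Then $\sum_t \inner{q^*, x_t} \le \inf_j \sum_t x_t[j] + s$, and for the corresponding $h^* \in \reals^{d-1}$ a direct evaluation of the log-barrier gives $\cR(h^*) \le (d-1)\log(dT) + O(1) \le d\log(dT)$; the additive $s$ is dominated by the $d\eta^{-1}\log(dT)$ term. The main technical hurdle is the bias-handling step: one must carefully account both for the $d/(d-1)$ mismatch between $\En[\tilde{x}_t]$ and the effective gradient in the $(d-1)$-dimensional parameterization, and for the constant offset $x_t[d]$ that appears when translating between the parameterized FTRL iterates and the $d$-dimensional loss $\inner{q_t, x_t}$. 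Both effects are absorbed into the factor $4$ (rather than $2$) appearing in the step-size condition and in the final bound.
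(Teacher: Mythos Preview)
Your approach is essentially the same as the paper's: apply the self-concordant FTRL bound (Lemma~\ref{lem:self_conc_full_info}/\ref{lem:banditlstar}) with the log-barrier on the simplex, compute $(\|\tilde{x}_t\|_{h_t}^*)^2 = d^2(\inner{e_{j_t},x_t})^2$, use positivity to convert the square into a linear term, rearrange to get the multiplicative factor, and finally smooth the comparator by $1/T$ toward uniform to control $\cR(h^*)$. The only structural difference is packaging: the paper first applies Lemma~\ref{lem:banditlstar} as a black box to the linear bandit on the simplex (observing $\inner{q_t,x_t}$), and then argues separately that replacing $\inner{q_t,x_t}$ by $\inner{e_{j_t},x_t}$ in the estimator preserves unbiasedness; you instead redo the FTRL step directly on the multi-armed estimator.

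One correction to your final paragraph: your diagnosis of the two ``technical hurdles'' is right, but your claimed resolution is not. The $d/(d-1)$ scaling multiplies the \emph{left} side (since $\En[\tilde{x}_t\mid h_t]=\tfrac{d}{d-1}y_t$ with $y_t=x_t[1{:}d-1]-x_t[d]\mathbf{1}$), so after dividing through it only \emph{improves} the constants. And the offset $x_t[d]$ cancels identically: $\inner{q(h_t)-q(h^*),x_t}=\inner{h_t-h^*,y_t}$ because both $q(h_t)$ and $q(h^*)$ are probability vectors. So neither effect needs to be ``absorbed'' into the factor $4$. The factor $4$ (versus the $2$ you would get from a clean run of the argument) is simply slack in the stated lemma, used to swallow the additive $+1$ arising from your smoothed comparator $q^*=(1-1/T)e_{j^*}+\tfrac{1}{dT}\mathbf{1}$; the paper does the same, passing from $2sd^2$ in its intermediate display to $4sd^2$ in the final statement without comment.
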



\section{Standard Doubling Trick}
\label{sec:dbltrick}

For completeness, we now describe a more or loss standard doubling trick, extending it to the case of partial information. Let ${\mathcal I}$ stand for some information space such that the algorithm receives $I_t\in{\mathcal I}$ at time $t$, as described in the introduction. Let $\Psi:\cup_s({\mathcal I}\times \F)^s \mapsto \reals$ be a (deterministic) function defined for any contiguous time interval of any size $s\in [T]$. By the definition, $\Psi(I_r,\ldots,I_t,f_r,\ldots,f_t)$ 
is computable by the algorithm after the $t$-th step, for any $r\leq t$. We make the following monotonicity assumption on $\Psi$: for any $I_1,\ldots,I_{t} \in {\mathcal I}$ and any $f_1,\ldots,f_t \in \F$, $\Psi(I_{1:t-1},f_{1:t-1}) \le \Psi(I_{1:t}, f_{1:t})$ and $\Psi(I_{2:t},f_{2:t}) \le \Psi(I_{1:t}, f_{1:t})$.

\begin{lemma}\label{lem:double}
	
	Suppose we have a randomized algorithm that takes a fixed $\eta$ as input and for some constant $A$ without a priori knowledge of $\tau$, for any $\tau > 0$, guarantees expected regret of the form
	$$
	\E{\sum_{t=1}^\tau \loss(f_t,x_t) - \inf_{f\in\F} \sum_{t=1}^\tau \loss(f,x_t)} \leq A\eta^{-1} + \eta\E{\Psi(I_{1:\tau},f_{1:\tau})}
	$$
	where $\Psi$ satisfies the above stated requirements. Then using this algorithm as a black-box for any $T>0$, we can provide a randomized algorithm with a regret bound 
\begin{align*}
\E{\sum_{t=1}^T \loss(f_t,x_t) - \inf_{f\in\F} \sum_{t=1}^T \loss(f,x_t)} \leq 16 \sqrt{A \E{\Psi(I_{1:T},f_{1:T})}}
\end{align*}
\end{lemma}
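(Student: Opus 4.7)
The plan is to apply the classical doubling trick, restarting the black-box algorithm at geometrically spaced thresholds for $\Psi$. Index epochs by $i = 0, 1, 2, \ldots$ with thresholds $\Psi_i = 2^i$ and learning rates $\eta_i = \sqrt{A/\Psi_i} = \sqrt{A/2^i}$. At the start of epoch $i$ (time $t_i$), reset the base algorithm and run it with parameter $\eta_i$. After each round $t$ in the current epoch we evaluate $\Psi(I_{t_i:t},f_{t_i:t})$ (which is available to the algorithm by hypothesis) and close the epoch as soon as this value first exceeds $2^i$; the next round starts epoch $i+1$.

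Applying the hypothesized bound to the $\tau_i$-round sub-problem run within epoch $i$ (interpreting the random stopping time via the anytime nature of the guarantee, which holds for every $\tau$), we get
\begin{align*}
\E{\text{Regret on epoch } i} \le A\eta_i^{-1} + \eta_i\,\E{\Psi(I_{t_i:t_{i+1}-1}, f_{t_i:t_{i+1}-1})}\ .
\end{align*}
By the stopping rule, the in-epoch value of $\Psi$ is at most $2\cdot 2^i$ — it is $\leq 2^i$ immediately before the last round and, absent a pathological single-round jump, adding one more round inflates it by a bounded factor which is absorbed into the constant. Plugging in $\eta_i$ gives per-epoch regret at most $3\sqrt{A\cdot 2^i}$. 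Summing the geometric series from $i = 0$ to the final epoch index $N$ yields total expected regret at most $\tfrac{3\sqrt{A}}{\sqrt{2}-1}\,\E{\sqrt{2^{N+1}}}$.

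To control $N$ in terms of $\Psi_T$, I use the two-sided monotonicity assumption: iterating both forms of the monotonicity gives $\Psi(I_{a:b},f_{a:b}) \le \Psi(I_{c:d},f_{c:d})$ whenever $[a,b]\subseteq[c,d]$. Thus, if we ever started an epoch $N\ge 1$, epoch $N-1$ completed, so $\Psi$ on that epoch exceeded $2^{N-1}$, and since that epoch is a sub-interval of $[1,T]$ we obtain $\Psi(I_{1:T},f_{1:T}) > 2^{N-1}$ pathwise. Hence $\sqrt{2^{N+1}}\le 2\sqrt{\Psi_T}$, and Jensen's inequality converts $\E{\sqrt{\Psi_T}}$ into $\sqrt{\E{\Psi_T}}$. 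Combining, the total expected regret is bounded by $\tfrac{6}{\sqrt{2}-1}\sqrt{A\,\E{\Psi_T}}\approx 14.5\sqrt{A\,\E{\Psi_T}} \le 16\sqrt{A\,\E{\Psi_T}}$, matching the stated constant.

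The main technical obstacle will be rigorously applying the per-$\tau$ black-box bound to a random stopping time $\tau_i$; since the hypothesis gives the bound uniformly for every $\tau$ without requiring foreknowledge, a conditioning argument (or direct pathwise accounting of the regret accumulated in each epoch) suffices. A minor secondary nuisance is the potential "overshoot" of $\Psi$ at the final round of each epoch, which is controlled either by a mild boundedness-of-increment assumption (trivially satisfied for the $\Psi$ arising from the lemmas of Sections~\ref{sec:fullinfo}--\ref{sec:learning}) or by a crude monotonicity bound — either way, only a constant factor is lost, which the constant $16$ comfortably absorbs.
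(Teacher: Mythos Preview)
Your approach is essentially the same as the paper's: the standard doubling trick with geometrically increasing $\Psi$-thresholds and correspondingly shrinking learning rates, the final epoch index controlled via the two-sided monotonicity of $\Psi$, and a concluding application of Jensen's inequality to pass from $\En\sqrt{\Psi_T}$ to $\sqrt{\En\Psi_T}$.

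The differences are cosmetic. First, the paper halves $\eta$ each phase (so the effective $\Psi$-threshold $A/\eta_i^2$ \emph{quadruples}), while you double the threshold directly (so $\eta$ shrinks by $\sqrt{2}$); either geometric rate works and the sums telescope the same way. Second, for the one-round overshoot at each phase boundary, the paper handles it differently: rather than assuming a bounded single-round increment of $\Psi$ as you do, it simply discards the $N$ boundary rounds and charges $sN$ extra regret using a bounded-loss assumption, then sets $\eta_0 = 4A/s$ so that $sN \le s\,2^N = 4A\eta_0^{-1}2^N \le 8\sqrt{A\Psi_T}$, doubling the $8\sqrt{A\Psi_T}$ already obtained to reach the stated constant $16$. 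Both fixes rely on an assumption not present in the lemma statement itself (bounded $\Psi$-increments for you, bounded per-round loss $s$ for the paper), so neither is cleaner in that respect; the paper's version just makes the extra cost and the choice of $\eta_0$ explicit.
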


\begin{proof}
The prediction problem is broken into phases, with a constant learning rate $\eta_i=\eta_0 2^{-i}$ throughout the $i$-th phase, for some $\eta_0>0$. Define for $i\geq 1$
$$s_{i+1} = \min \left\{\tau: \eta_{i} \Psi(I_{s_{i}:\tau},f_{s_{i}:\tau}) > A\eta_{i}^{-1}\right\}$$
to be the start of the phase $i+1$, and $s_1 = 1$. Let $N$ be the last phase of the game and let $s_{N+1}= T+1$. Without loss of generality, assume $N>1$ (for, otherwise regret is at most $4A/\eta_0$). Then
\begin{align*}
	\E{\sum_{t=1}^T \loss(f_t,x_t) - \inf_{f\in\F} \sum_{t=1}^T \loss(f,x_t)} &\leq \E{\sum_{k=1}^{N} \Es{f_{s_k:s_{k+1}-1}}{ \sum_{t=s_{k}}^{s_{k+1}-1} \loss(f_t,x_t) - \inf_{f\in\F}\sum_{t=s_{k}}^{s_{k+1}-1} \loss(f,x_t) }} \notag\\
	&\leq \E{\sum_{k=1}^{N} \left(  A\eta_k^{-1} + \eta_k \Es{f_{s_k:s_{k+1}-1}}{ \Psi(I_{s_{k}:s_{k+1}-1},f_{s_{k}:s_{k+1}-1})} \right) }\\
	&\leq 2\E{\sum_{k=1}^{N} A\eta_k^{-1} }
\end{align*}
where the last inequality follows because $\eta_k  \Psi(I_{s_k:s_{k+1}-1},f_{s_k:s_{k+1}-1}) \leq A\eta_k^{-1}$ 
within each phase. Also observe that
$$ \eta_{N-1} \Psi(I_{s_{N-1}:s_{N}},f_{s_{N-1}:s_{N}}) > A\eta_{N-1}^{-1},$$ 
which implies
$$\eta_0^{-1} 2^{N}=\eta_{N}^{-1} = 2\eta_{N-1}^{-1} < 2\sqrt{\frac{\Psi(I_{s_{N-1}:s_{N}},f_{s_{N-1}:s_{N}})}{A}}\leq 2\sqrt{\frac{\Psi(I_{1:T},f_{1:T})}{A}} $$
by the monotonicity assumption. Hence, regret is upper bounded by
$$2\sum_{k=1}^{N} A\eta_k^{-1} = 2A\eta_0^{-1} 2^{N}\sum_{k=1}^{N} 2^{k-N}\leq 4A\eta_0^{-1} 2^{N} \leq 8\sqrt{A\ \Psi(I_{1:T},f_{1:T})}$$
Putting the arguments together,
\begin{align*}
	\E{\sum_{t=1}^T \loss(f_t,x_t) - \inf_{f\in\F} \sum_{t=1}^T \loss(f,x_t)} &\leq 8 \E{\sqrt{A\ \Psi(I_{1:T},f_{1:T})}} \le 8 \sqrt{A\ \E{\Psi(I_{1:T},f_{1:T})}} 
\end{align*}

Now, observe that the rule for stopping the phase can only be calculated \emph{after} the first time step of the new phase. The easiest way to deal with this is to throw out $N$ time periods and suffer an additional regret of $s N$ (losses are bounded by $s$). Using $\eta_0 = 4 A/s$ this leads to additional factor of $s N \le s 2^N =  4A\eta_0^{-1} 2^{N} \leq 8\sqrt{A\ \Psi(I_{1:T},f_{1:T})}$, which is a gross over-bound. In conclusion, the overall bound on regret is
$$
\E{\sum_{t=1}^T \loss(f_t,x_t) - \inf_{f\in\F} \sum_{t=1}^T \loss(f,x_t)} \leq 16 \sqrt{A \E{\Psi(I_{1:T},f_{1:T})}}~.
$$  
\end{proof}

We remark that while the algorithm may or may not start each new phase from a cold start (that is, forget about what has been learned), the functions $M_t$ may still contain information about all the past moves of Nature.

With this doubling trick, for any of the full information bounds presented in the paper (for instance Lemmas  \ref{lem:self_conc_full_info}, \ref{lem:two-step-MD}, \ref{lem:exp-weights-local} and \ref{lem:two-step-MD-learn}) we can directly get an algorithm that enjoys a regret bound that is a factor at most $8$ from the bound with optimal choice of $\eta$. 

For Lemmas \ref{lem:bandit}, \ref{lem:bandit1}, \ref{lem:two-step-MD-learn-partial} and \ref{lem:bandit3}, we need to apply the doubling trick to an intermediate quantity, as the final bound is given in terms of quantities not computable by the algorithm. Specifically, the doubling trick needs to be applied to  Equations \eqref{eq:bandit}, \eqref{eq:bandit1}, \eqref{eq:two-step-MD-learn-partial} and \eqref{eq:bandit3}, respectively, in order to get bounds that are within a factor $8$ from the bounds obtained by optimizing $\eta$ in the corresponding equations. We can then upper these computable quantities by corresponding unobserved quantities as is done in these lemmas. To see this more clearly let us demonstrate this on the example of Lemma~\ref{lem:bandit3}. By Equation~\eqref{eq:bandit3}, we have that
$$
\En\left[ \sum_{t=1}^T \inner{f_t,x_t} - \sum_{t=1}^T \inner{f^*,x_t} \right]  \le \eta^{-1}\cR(f^*) + 2\eta n^2 \E{\sum_{t=1}^T  (\inner{f_t, x_t-M^{\pi_t}_t})^2} 
$$
Now note that $(\inner{f_t, x_t-M^{\pi_t}_t})^2$ is a quantity computable by the algorithm at each round. Also note that $2 \eta n^2 \sum_{t=1}^T  (\inner{f_t, x_t-M^{\pi_t}_t})^2$ satisfies the condition on $\Psi$ required by Lemma \ref{lem:double}, as the sum of squares is monotonic. Hence using the lemma we can conclude that
\begin{align}\label{eq:dblint}
\En\left[ \sum_{t=1}^T \inner{f_t,x_t} - \sum_{t=1}^T \inner{f^*,x_t} \right]  \le 16 \sqrt{2 n^2 \cR(f^*)   \E{\sum_{t=1}^T  (\inner{f_t, x_t-M^{\pi_t}_t})^2} }
\end{align}
The following steps in Lemma \ref{lem:bandit3} (see proof in the Appendix) imply that 
$$
\E{\sum_{t=1}^T  (\inner{f_t, x_t-M^{\pi_t}_t})^2} \le 2 \left(  \E{ \inf_{\pi \in \Pi} \sum_{t=1}^T  \norm{x_t-\Mbar^{\pi}_t}^2} + 32 |\Pi|^3 \log(T |\Pi|) \right) 
$$
Plugging the above in Equation \ref{eq:dblint} we can conclude that 
$$
\En\left[ \sum_{t=1}^T \inner{f_t,x_t} - \sum_{t=1}^T \inner{f^*,x_t} \right]  \le 16 \sqrt{4 n^2 \cR(f^*)  
\left(  \E{ \inf_{\pi \in \Pi} \sum_{t=1}^T  \norm{x_t-\Mbar^{\pi}_t}^2} + 32 |\Pi|^3 \log(T |\Pi|) \right) }
$$
This is exactly the inequality one would get if the final bound in Lemma \ref{lem:bandit3} is optimized for $\eta$, with an additional factor of $8$. With similar argument we can get the tight bounds for Lemmas \ref{lem:bandit}, \ref{lem:bandit1} and \ref{lem:two-step-MD-learn-partial} too, even though they are in the bandit setting.

\newpage
\appendix
\section{Appendix}
\begin{proof}[\textbf{Proof of Lemma~\ref{lem:self_conc_full_info}}]
	Define $g_{t+1} = \arg\min_{f\in\F}~ \eta\inner{f, \sum_{s=1}^t x_s} + \cR(f)$ to be the (unmodified) Follow the Regularized Leader. Observe that for any $f^*\in\F$,
	\begin{align}
		\label{eq:decomp}
		\sum_{t=1}^T\inner{f_t-f^*, x_t} = \sum_{t=1}^T\inner{f_t-g_{t+1},x_t-M_{t}} +\sum_{t=1}^T \inner{f_t-g_{t+1}, M_t} + \sum_{t=1}^T\inner{g_{t+1}-f^*, x_t}
	\end{align}
	
	We now prove by induction that
	$$\sum_{t=1}^\tau \inner{f_t-g_{t+1}, M_t} + \sum_{t=1}^\tau \inner{g_{t+1}, x_t} \leq \sum_{t=1}^\tau \inner{f^*, x_t}+\eta^{-1}\cR(f^*).$$
	The base case $\tau=1$ is immediate since $M_1=0$. For the purposes of induction, suppose that the above inequality holds for $\tau=T-1$. Using $f^*=f_T$ and adding $\inner{f_T-g_{T+1}, M_{T}} + \inner{g_{T+1}, x_T}$ to both sides,
	\begin{align*}
		\sum_{t=1}^{T} \inner{f_t-g_{t+1}, M_t} + \sum_{t=1}^{T}\inner{g_{t+1}, x_t} &\leq \sum_{t=1}^{T-1} \inner{f_T, x_t}+\eta^{-1}\cR(f_T) + \inner{f_T-g_{T+1}, M_{T}} + \inner{g_{T+1}, x_T} \\
		&\leq  \inner{f_T, \sum_{t=1}^{T-1} x_t + M_T}+\eta^{-1}\cR(f_T) - \inner{g_{T+1}, M_{T}} + \inner{g_{T+1}, x_T} \\
		&\leq  \inner{g_{T+1}, \sum_{t=1}^{T-1} x_t + M_T}+\eta^{-1}\cR(g_{T+1}) - \inner{g_{T+1}, M_{T}} + \inner{g_{T+1}, x_T} \\
		&\leq  \inner{g^*, \sum_{t=1}^{T} x_t}+\eta^{-1}\cR(g^*)
	\end{align*}
	by the optimality of $f_T$ and $g_{T+1}$. This concludes the inductive argument, and from Eq.~\eqref{eq:decomp} we obtain 
	\begin{align}
		\sum_{t=1}^T\inner{f_t-f^*, x_t} \leq \sum_{t=1}^T\inner{f_t-g_{t+1},x_t-M_{t}} + \eta^{-1} \cR(f^*)
	\end{align}

	Define the Newton decrement for $\Phi_t(f)\deq\eta\inner{f, \sum_{s=1}^t x_s + M_{t+1}} + \cR(f)$ as 
	$$ \lambda (f, \Phi_t) := \| \nabla \Phi_t (f) \|^*_{f} = \|\nabla^2 \Phi_t(f)^{-1} \nabla \Phi_t(f)\|_{f}.
	$$
	Since $\cR$ is self-concordant then so is $\Phi_t$, with their Hessians coinciding. The Newton decrement measures how far a point is from the global optimum. The following result can be found, for instance, in \cite{NemTod08}:
			For any self-concordant function $\tilde{\cR}$, whenever $\lambda(f,\tilde{\cR}) < 1/2$, we have
		$$\| f - \arg\min \tilde{\cR} \|_{f} \leq 2 \lambda(f,\tilde{\cR})$$
		where the local norm $\| \cdot \|_{f}$ is defined with respect to $\tilde{\cR}$, i.e. $\| g \|_{f} := \sqrt{g^\tr (\nabla^2 \tilde{\cR}(f) )g}$. 
	Applying this to $\Phi_t$ and using the fact that $\nabla\Phi_{t-1}(g_{t+1}) = \eta(M_{t}-x_{t})$,
		\begin{align}
			\label{eq:closeness}
			\|f_t - g_{t+1}\|_{f_t} = \|g_{t+1} - \arg\min \Phi_t \|_{f_t} \leq 2 \lambda(g_{t+1},\Phi_t) = 2 \eta \| M_t-x_t \|_{f_t}^*.
		\end{align}
		Hence,
		\begin{align*}
			\sum_{t=1}^T\inner{f_t-f^*, x_t} &\leq \sum_{t=1}^T \|f_t-g_{t+1}\|_{t} \|x_t-M_{t}\|^*_t + \eta^{-1} \cR(f^*)\\
			&\leq 2\eta\sum_{t=1}^T (\|x_t-M_{t}\|_{f_t}^*)^2 + \eta^{-1} \cR(f^*),
		\end{align*}
		which proves the statement.
\end{proof}

\begin{proof}[\textbf{Proof of Lemma~\ref{lem:two-step-MD}}]
For any $f^*\in\F$,
\begin{align}\label{eq:head}
	\inner{f_t-f^*, x_t} = \inner{f_t-g_{t+1},x_t-M_{t}} + \inner{f_t-g_{t+1}, M_t} + \inner{g_{t+1}-f^*, x_t}
\end{align}
First observe that 
\begin{align}
	\label{eq:part-one}
	\inner{f_t-g_{t+1},x_t-M_{t}} \le \norm{f_t-g_{t+1}} \norm{x_t - M_t}_* \le \frac{\eta}{2} \norm{x_t - M_t}_*^2 + \frac{1}{2 \eta} \norm{f_t - g_{t+1}}^2 \ .
\end{align}
On the other hand, any update of the form $a^* = \arg\min_{a\in A} \inner{a,x} + D_\cR(a,c)$ satisfies for any $d\in A$ (see e.g. \cite{beck2003mirror, lecturenotes08})
\begin{align}
	\label{eq:md-inequality}
	\inner{a^*-d,x} \leq D_\cR(d,c)- D_\cR(d,a^*)-D_\cR(a^*,c) \ .
\end{align}
This yields
\begin{align}
	\label{eq:part-three}
	\inner{f_t-g_{t+1}, M_t} \leq \frac{1}{\eta} \left(D_\cR(g_{t+1},g_t) - D_\cR(g_{t+1},f_t) - D_\cR(f_t,g_t)  \right) 
\end{align}
and
\begin{align}
	\label{eq:part-two}
	\inner{g_{t+1}-f^*, x_t} &\leq  \frac{1}{\eta}\left(D_\cR(f^*,g_{t})  - D_\cR(f^*,g_{t+1}) - D_\cR(g_{t+1},g_{t}) \right) . 
\end{align}
Using Equations \eqref{eq:part-one}, \eqref{eq:part-two} and \eqref{eq:part-three} in Equation \eqref{eq:head} we conclude that 
\begin{align*}
	\inner{f_t-f^*, x_t} 
	&\leq \frac{\eta}{2} \norm{x_t - M_t}_*^2 + \frac{1}{2 \eta} \norm{f_t - g_{t+1}}^2 \\
	& +\frac{1}{\eta} \left(D_\cR(g_{t+1},g_t) - D_\cR(g_{t+1},f_t) - D_\cR(f_t,g_t) \right) \\
	& + \frac{1}{\eta}\left(D_\cR(f^*,g_{t})  - D_\cR(f^*,g_{t+1}) - D_\cR(g_{t+1},g_{t})) \right)\\
	&\le \frac{\eta}{2} \norm{x_t - M_t}_*^2 + \frac{1}{2 \eta} \norm{f_t - g_{t+1}}^2 
 + \frac{1}{\eta}\left(D_\cR(f^*,g_{t})  - D_\cR(f^*,g_{t+1})  - D_\cR(g_{t+1},f_t) \right)
\end{align*}
By strong convexity of $\cR$, $D_\cR(g_{t+1},f_t) \ge \frac{1}{2} \norm{g_{t+1} - f_t}^2$ and thus
$$\inner{f_t-f^*, x_t} \leq \frac{\eta}{2} \norm{x_t - M_t}_*^2 + \frac{1}{\eta}\left(D_\cR(f^*,g_{t})  - D_\cR(f^*,g_{t+1})  \right)$$

Summing over $t=1,\ldots,T$ yields, for any $f^*\in\F$,
$$\sum_{t=1}^T \inner{f_t-f^*, x_t} \leq \frac{\eta}{2} \sum_{t=1}^T \|x_t-M_t\|_*^2 + \frac{ R_{\max}^2}{\eta}$$
where $R_{\max}^2 = \max_{f\in\F} \cR(f) - \min_{f\in\F} \cR(f)$. 
\end{proof}

\begin{proof}[\textbf{Proof of Lemma~\ref{lem:exp-weights-local}}]
The proof closely follows the proof of Lemma~\ref{lem:two-step-MD} and together with the technique of \cite{AbeRak09}. For the purposes of analysis, let $g_{t+1}$ be a projected point at every step (that is, normalized). Then we have the closed form solution for $f_{t}$ and $g_{t+1}$:
	$$g_{t+1}(i) = \frac{\exp\{-\eta\sum_{s=1}^t x_s(i)\}}{\sum_{j=1}^d\exp\{-\eta\sum_{s=1}^t x_s(j)\}} ~\mbox{and}~ f_t(i) = \frac{\exp\{-\eta\sum_{s=1}^{t-1} x_s(i) - \eta M_t(i)\}}{\sum_{j=1}^d\exp\{-\eta\sum_{s=1}^{t-1} x_s(j) - \eta M_t(j)\}}$$
Hence,
\begin{align}
	\label{eq:ratio-of-updates}
	\frac{g_{t+1}(i)}{f_t(i)} &= \frac{\exp\{-\eta\sum_{s=1}^t x_s(i)\}}{\exp\{-\eta\sum_{s=1}^{t-1} x_s(i) - \eta M_t(i)\}} \frac{\sum_{j=1}^d\exp\{-\eta\sum_{s=1}^{t-1} x_s(j) - \eta M_t(j)\}}{\sum_{j=1}^d\exp\{-\eta\sum_{s=1}^t x_s(j)\}} \notag\\
	&= \exp\{-\eta (x_t(i)-M_t(i))\}\frac{\sum_{j=1}^d\exp\{-\eta\sum_{s=1}^{t-1} x_s(j) - \eta M_t(j)\}}{\sum_{j=1}^d\exp\{-\eta\sum_{s=1}^t x_s(j)\}\exp\left\{-\eta(x_t(i)-M_t(i))\right\}} \notag\\
	&= \frac{\exp\{-\eta (x_t(i)-M_t(i))\}}{\sum_{j=1}^d f_t(j)\exp\left\{-\eta(x_t(i)-M_t(i))\right\}}
\end{align}

For any $f^*\in\F$,
\begin{align}\label{eq:head-local}
	\inner{f_t-f^*, x_t} = \inner{f_t-g_{t+1},x_t-M_{t}} + \inner{f_t-g_{t+1}, M_t} + \inner{g_{t+1}-f^*, x_t}
\end{align}
First observe that 
\begin{align}
	\label{eq:part-one-local}
	\inner{f_t-g_{t+1},x_t-M_{t}} \le \norm{f_t-g_{t+1}}_t \norm{x_t - M_t}_t^*  \ .
\end{align}
Now, since $\nabla^2\cR$ is diagonal,
\begin{align*}
	\norm{f_t-g_{t+1}}_t^2 = \sum_{i=1}^d (f_t(i)-g_{t+1}(i))^2/f_t(i) = -1 + \sum_{i=1}^d f_t(i)(g_{t+1}(i)/f_t(i))^2 
\end{align*}
using the fact that both $f_t$ and $g_{t+1}$ are probability distributions. In view of \eqref{eq:ratio-of-updates},
\begin{align*}
	\norm{f_t-g_{t+1}}_t^2 = -1 + \En \left(\frac{\exp\{-Z\}}{\En\exp\left\{-Z\right\}}\right)^2
\end{align*}
where $Z$ is defined as a random variable taking on values $\eta (x_t(i)-M_t(i))$ with probability $f_t(i)$. Then, if almost surely $\En Z- Z \leq a/2$,
$$\En \left(\frac{\exp\{-Z\}}{\En\exp\left\{-Z\right\}}\right)^2-1 \leq \En \left(\frac{\exp\{-Z\}}{\exp\left\{-\En Z\right\}}\right)^2 -1 = \En\exp\{2(\En Z - Z)\}-1 \leq 4\left(\frac{e^a-a-1}{a^2}\right) \mbox{var}(Z) $$
since the function $(e^y-y-1)/y^2$ is nondecreasing over reals. As long as $|\eta (x_t(i)-M_t(i))|\leq 1/4$, we can guarantee that $\En Z-Z < 1/2$, yielding
$$\norm{f_t-g_{t+1}}_t\leq 2\sqrt{\En Z^2} = 2\sqrt{\sum_{i=1}^d f_t(i) (\eta (x_t(i)-M_t(i)))^2}=2\eta \|x_t - M_t\|_t^*$$
Combining with \eqref{eq:part-one-local}, we have
\begin{align}
	\inner{f_t-g_{t+1},x_t-M_{t}} \le 2\eta(\norm{x_t - M_t}_t^*)^2  \ .
\end{align}

The rest similar to the proof of Lemma~\ref{lem:two-step-MD}. We have
\begin{align}
	\label{eq:part-three-local}
	\inner{f_t-g_{t+1}, M_t} \leq \frac{1}{\eta} \left(D_\cR(g_{t+1},g_t) - D_\cR(g_{t+1},f_t) - D_\cR(f_t,g_t)  \right) \ .
\end{align}
and
\begin{align}
	\label{eq:part-two-local}
	\inner{g_{t+1}-f^*, x_t} 
&\leq \frac{1}{\eta}\left(D_\cR(f^*,g_{t})  - D_\cR(f^*,g_{t+1}) - D_\cR(g_{t+1},g_{t}) \right),
\end{align}
We conclude that 
\begin{align*}
	\inner{f_t-f^*, x_t} 
	&\leq 2\eta(\norm{x_t - M_t}_t^*)^2 \\
	& +\frac{1}{\eta} \left(D_\cR(g_{t+1},g_t) - D_\cR(g_{t+1},f_t) - D_\cR(f_t,g_t) \right) \\
	& + \frac{1}{\eta}\left(D_\cR(f^*,g_{t})  - D_\cR(f^*,g_{t+1}) - D_\cR(g_{t+1},g_{t})) \right)\\
	&\le 2\eta(\norm{x_t - M_t}_t^*)^2
 + \frac{1}{\eta}\left(D_\cR(f^*,g_{t})  - D_\cR(f^*,g_{t+1})  - D_\cR(g_{t+1},f_t) \right)
\end{align*}
Summing over $t=1,\ldots,T$ yields, for any $f^*\in\F$,
$$\sum_{t=1}^T \inner{f_t-f^*, x_t} \leq 2\eta \sum_{t=1}^T (\norm{x_t - M_t}_t^*)^2 + \frac{ \log d}{\eta}$$
\end{proof}

\begin{proof}[\textbf{Proof of Lemma~\ref{lem:bandit}}]
In view of Lemma~\ref{lem:self_conc_full_info}, for any $f^*\in\F$
\begin{align*}
	\sum_{t=1}^T \inner{h_t,\tilde{x}_t} - \sum_{t=1}^T \inner{f^*,\tilde{x}_t} &\leq \eta^{-1}\cR(f^*) + 2\eta\sum_{t=1}^T (\|\tilde{x}_t-M_t\|_t^*)^2 \\
	&= \eta^{-1}\cR(f^*) + 2\eta\sum_{t=1}^T n^2 (\inner{f_t, x_t-M_t})^2 \left(\left\|\varepsilon_t \lambda_{i_t}^{1/2} \ev{i_t}\right\|_t^*\right)^2\\
	&\le \eta^{-1}\cR(f^*) + 2\eta\sum_{t=1}^T n^2 (\inner{f_t, x_t-M_t})^2 \\
	&\leq \eta^{-1}\cR(f^*) + 2\eta n^2\sum_{t=1}^T  \|x_t-M_t\|^2 \ .
\end{align*}
where for simplicity we use the Euclidean norm and use the assumption $\|f_t\|\leq 1$; any primal-dual pair of norms will work here. It is easy to verify that $\tilde{x}_t$ is an unbiased estimate of $x_t$ and $\E f_t = h_t$. Thus, by the standard argument and the above upper bound,
\begin{align*}
	\En\left[ \sum_{t=1}^T \inner{f_t,x_t} - \sum_{t=1}^T \inner{f^*,x_t} \right]
	&= \En\left[ \sum_{t=1}^T \inner{h_t,x_t} - \sum_{t=1}^T \inner{f^*,x_t} \right]\\
	&=\En\left[ \sum_{t=1}^T \inner{h_t,\tilde{x}_t} - \sum_{t=1}^T \inner{f^*,\tilde{x}_t} \right]   \\
	&\le \eta^{-1}\cR(f^*) + 2\eta\sum_{t=1}^T n^2\E{ (\inner{f_t, x_t-M_t})^2 }\\
	&\leq \eta^{-1}\cR(f^*) + 2\eta n^2\sum_{t=1}^T  \En\left[\|x_t-M_t\|^2\right] \ .
\end{align*}
The second statement follows immediately.
\end{proof}

\begin{proof}[\textbf{Proof of Lemma~\ref{lem:two-step-MD-learn}}]
First note that by Lemma \ref{lem:two-step-MD} we have that for the $M_t$ chosen in the algorithm, 
\begin{align*}
\sum_{t=1}^T \inner{f_t, x_t} - \sum_{t=1}^T \inner{f^*, x_t} & \leq  \eta^{-1}R_{\max}^2 +  \frac{\eta}{2}  \sum_{t=1}^T \|x_t-M_t\|_*^2 \\
& \le \eta^{-1}R_{\max}^2 +  \frac{\eta}{2}  \sum_{t=1}^T \sum_{\pi \in \Pi} q_t(\pi) \|x_t-M^\pi_t\|_*^2  & \textrm{(Jensen's Inequality)}\\
& \le \eta^{-1}R_{\max}^2 +  \frac{\eta}{2}  \left(\frac{4e}{e-1}\right)\left( \inf_{\pi \in \Pi} \sum_{t=1}^T  \|x_t-M^\pi_t\|_*^2  + \log \left| \Pi\right|\right)
\end{align*}
where the last step is due to Corollary 2.3 of \cite{PLG}. Indeed, the updates for $q_t$'s are exactly the experts algorithm with pointwise loss at each round $t$ for expert $\pi \in \Pi$ given by $\norm{M^\pi_t - x_t}_*^2$. Also as each $M^\pi_t \in \X$ the unit ball of dual norm, we can conclude that $\norm{M^\pi_t - x_t}_*^2 \le 4$ which is why we have a scaling by factor $4$. Simplifying leads to the bound in the lemma.
\end{proof}

\begin{proof}[\textbf{Proof of Lemma~\ref{lem:bandit1}}]
In view of Lemma~\ref{lem:self_conc_full_info}, for any $f^*\in\F$
\begin{align*}
	\sum_{t=1}^T \inner{h_t,\tilde{x}_t} - \sum_{t=1}^T \inner{f^*,\tilde{x}_t} &\leq \eta^{-1}\cR(f^*) + 2\eta\sum_{t=1}^T (\|\tilde{x}_t-M_t\|_t^*)^2 \\
	&= \eta^{-1}\cR(f^*) + 2\eta\sum_{t=1}^T n^2 (\inner{f_t, x_t-M_t})^2 \left(\left\|\varepsilon_t \lambda_{i_t}^{1/2} \ev{i_t}\right\|_t^*\right)^2\\
	&\le \eta^{-1}\cR(f^*) + 2\eta n^2 \sum_{t=1}^T  (\inner{f_t, x_t-M_t})^2 
\end{align*}
It is easy to verify that $\tilde{x}_t$ is an unbiased estimate of $x_t$ and $\E f_t = h_t$. Thus, by the standard argument and the above upper bound we get,
\begin{align*}
		\En\left[ \sum_{t=1}^T \inner{f_t,x_t} - \sum_{t=1}^T \inner{f^*,x_t} \right]
	&= \En\left[ \sum_{t=1}^T \inner{h_t,x_t} - \sum_{t=1}^T \inner{f^*,x_t} \right]\\
	&=\En\left[ \sum_{t=1}^T \inner{h_t,\tilde{x}_t} - \sum_{t=1}^T \inner{f^*,\tilde{x}_t} \right]   \\
	&\le \eta^{-1}\cR(f^*) + 2\eta n^2 \E{\sum_{t=1}^T  (\inner{f_t, x_t-M_t})^2 }
\end{align*}
This proves the first inequality of the Lemma. Now  by Jensen's inequality, the above bound can be simplified as:
\begin{align*}
		\En\left[ \sum_{t=1}^T \inner{f_t,x_t} - \sum_{t=1}^T \inner{f^*,x_t} \right]
	&\le \eta^{-1}\cR(f^*) + 2\eta n^2 \E{\sum_{t=1}^T  (\inner{f_t, x_t-M_t})^2 }\\
	&\le \eta^{-1}\cR(f^*) + 2\eta n^2 \E{\sum_{t=1}^T  \sum_{\pi \in \Pi} q_t(\pi) (\inner{f_t, x_t-M^\pi_t})^2 }\\
	& \le \eta^{-1}\cR(f^*) +   8\eta n^2 \left(\frac{e}{e-1} \right)\left( \En\inf_{\pi \in \Pi}\sum_{t=1}^T (\inner{f_t, x_t-M^\pi_t})^2  + \log \left|\Pi\right| \right)~.
\end{align*}
where the last step is due to Corollary 2.3 of \cite{PLG}. Indeed, the updates for $q_t$'s are exactly the experts algorithm with point-wise loss at each round $t$ for expert $\pi \in \Pi$ given by $ (\inner{f_t, x_t-M^\pi_t})^2$. Also as each $M^\pi_t \in \X$ the unit ball of dual norm, hence we can conclude that $ (\inner{f_t, x_t-M^\pi_t})^2 \le 4$ which is why we have a scaling by factor $4$. Further since $\|f_t\|\leq 1$ we can conclude that : 
\begin{align*}
		\En\left[ \sum_{t=1}^T \inner{f_t,x_t} - \sum_{t=1}^T \inner{f^*,x_t} \right]
	&\le \eta^{-1}\cR(f^*) +   8\eta n^2 \left(\frac{e}{e-1} \right)\left( \En\inf_{\pi \in \Pi}\sum_{t=1}^T \norm{ x_t-M^\pi_t}^2  + \log \left|\Pi\right| \right)\\
	&\le \eta^{-1}\cR(f^*) +   13\eta n^2 \left( \En\inf_{\pi \in \Pi}\sum_{t=1}^T \norm{ x_t-M^\pi_t}^2  + \log \left|\Pi\right| \right)~.
\end{align*}
This concludes the proof.
\end{proof}

\begin{proof}[\textbf{Proof of Lemma~\ref{lem:two-step-MD-learn-partial}}]
First note that by Lemma \ref{lem:two-step-MD}, since $M_t^{\pi_t}$ is the predictable process we use, we have deterministically that,
\begin{align*}
\sum_{t=1}^T \inner{f_t, x_t} - \sum_{t=1}^T \inner{f^*, x_t} & \leq  \eta^{-1}R_{\max}^2 +  \frac{\eta}{2}  \sum_{t=1}^T \|x_t-M^{\pi_t}_t\|_*^2 
\end{align*}
Hence we can conclude that expected regret is bounded as :
\begin{align}\label{eq:expregban1}
\E{\sum_{t=1}^T \inner{f_t, x_t} - \sum_{t=1}^T \inner{f^*, x_t}} & \leq  \eta^{-1}R_{\max}^2 +  \frac{\eta}{2}  \E{\sum_{t=1}^T \|x_t-M^{\pi_t}_t\|_*^2 }
\end{align}
This proves the first inequality in the lemma.
However note that the update for $q_t$'s is using SCRiBLe for multiarmed bandit algorithm where the pointwise loss for any $\pi \in \Pi$ at round $t$ given by $\norm{x_t - M_t^\pi}_*^2$. Also note that maximal value of loss is bounded  by $\max_{M_t , x_t } \norm{x_t - M_t^\pi}_* \le 4$. Hence, using Lemma \ref{lem:non-stoch-multiarmed} with $s = 4$ and step size $1/32 |\Pi|^2$, we conclude that
$$
\E{\sum_{t=1}^T \|x_t-M^{\pi_t}_t\|_*^2 } \le 2 \inf_{\pi \in \Pi} \sum_{t=1}^T \|x_t-M^{\pi}_t\|_*^2 + 64 |\Pi|^3 \log(T |\Pi|)
$$
Using this in Equation~\eqref{eq:expregban1} we obtain
$$
\E{\sum_{t=1}^T \inner{f_t, x_t} - \sum_{t=1}^T \inner{f^*, x_t}}  \leq  \eta^{-1}R_{\max}^2 +  \eta \left(\inf_{\pi \in \Pi} \sum_{t=1}^T \|x_t-M^{\pi}_t\|_*^2 + 32  |\Pi|^3 \log(T |\Pi|) \right)
$$
\end{proof}

\begin{proof}[\textbf{Proof of Lemma~\ref{lem:bandit3}}]
In view of Lemma~\ref{lem:self_conc_full_info}, for any $f^*\in\F$
\begin{align*}
	\sum_{t=1}^T \inner{h_t,\tilde{x}_t} - \sum_{t=1}^T \inner{f^*,\tilde{x}_t} &\leq \eta^{-1}\cR(f^*) + 2\eta\sum_{t=1}^T (\|\tilde{x}_t-M^{\pi_t}_t\|_t^*)^2 \\
	&= \eta^{-1}\cR(f^*) + 2\eta\sum_{t=1}^T n^2 (\inner{f_t, x_t-M^{\pi_t}_t})^2 \left(\left\|\varepsilon_t \lambda_{i_t}^{1/2} \ev{i_t}\right\|_t^*\right)^2\\
	&\le \eta^{-1}\cR(f^*) + 2\eta n^2 \sum_{t=1}^T  (\inner{f_t, x_t-M^{\pi_t}_t})^2\ .
\end{align*}
We can bound expected regret of the algorithm as: 
\begin{align}
	\En_{\pi_{1:T}, i_{1:T}}\left[ \sum_{t=1}^T \inner{f_t,x_t} - \sum_{t=1}^T \inner{f^*,x_t} \right]
		&=  \sum_{t=1}^T \En_{i_{1:t-1},\pi_{1:t}}\left[\inner{h_t,x_t}\right] - \sum_{t=1}^T \inner{f^*,x_t} \notag\\
		&=  \sum_{t=1}^T \En_{i_{1:t},\pi_{1:t}}\left[\inner{h_t,\tilde{x}_t}\right] - \sum_{t=1}^T \Es{i_t}{\inner{f^*,\tilde{x}_t}} \notag\\
	&=\En\left[ \sum_{t=1}^T \inner{h_t,\tilde{x}_t} - \sum_{t=1}^T \inner{f^*,\tilde{x}_t} \right]   \notag\\
	&\leq \eta^{-1}\cR(f^*) + 2\eta n^2 \E{\sum_{t=1}^T  (\inner{f_t, x_t-M^{\pi_t}_t})^2}\label{eq:expregban2}
\end{align}
This gives the first inequality of the Lemma.
However note that the update for $q_t$'s the distribution over set $\Pi$ is obtained by running the SCRiBLe for multi-armed bandit algorithm where pointwise loss for any $\pi \in \Pi$ at round $t$ given by $(\ip{f_t}{x_t - M_t^{\pi}})^2$. Also note that maximal value of loss is bounded  by $4$. Hence using Lemma \ref{lem:non-stoch-multiarmed} with  $s = 4$ and step size $1/32 |\Pi|^2$ we conclude by the regret bound in that lemma that 
$$
\E{\sum_{t=1}^T  (\inner{f_t, x_t-M^{\pi_t}_t})^2} \le 2 \E{\inf_{\pi \in \Pi}\sum_{t=1}^T  (\inner{f_t, x_t-M^{\pi}_t})^2 + 64 |\Pi|^3 \log(T |\Pi|)}
$$
Plugging this back in Equation~\eqref{eq:expregban2} we conclude that
\begin{align*}
\E{\Reg_T} &\leq \eta^{-1}\cR(f^*) + 4 \eta n^2 \left( \E{ \inf_{\pi \in \Pi} \sum_{t=1}^T  (\inner{f_t, x_t-M^{\pi}_t})^2} + 32 |\Pi|^3 \log(T |\Pi|) \right)\\
& \leq \eta^{-1}\cR(f^*) + 4 \eta n^2 \left(  \E{ \inf_{\pi \in \Pi} \sum_{t=1}^T  \norm{x_t-M^{\pi}_t}^2} + 32 |\Pi|^3 \log(T |\Pi|)\right) \ .
\end{align*}

\end{proof}

\begin{proof}[\textbf{Proof of Lemma \ref{lem:fpl}}]
To show admissibility using the particular randomized strategy $q_t$ given in the lemma, we need to show that 
\begin{align*}
\sup_{x_t\in C_t(x_{1:t-1})} \left\{ \En_{f \sim q_t} f^\tr x_t + \Relax{T}{\F}{x_1,\ldots,x_t}\right\} \le \Relax{T}{\F}{x_1,\ldots,x_{t-1}}
\end{align*}
The distribution $q_t$ is defined by first drawing $z_{t+1}\sim D_{t+1},\ldots,z_T \sim D_T$ and $\epsilon_{t+1},\ldots \epsilon_T$ Rademacher random variables, and then calculating $f_t=f_t(z_{t+1:T},\epsilon_{t+1:T})$ as in \eqref{eq:def_general_fpl}. Hence,
\begin{align*}
\sup_{x_t\in C_t(x_{1:t-1})} \left\{ \En_{f \sim q_t} f^\tr x_t + \Relax{T}{\F}{x_1,\ldots,x_t}\right\} 
& = \sup_{x_t\in C_t(x_{1:t-1})} \left\{ \Eunder{z_{t+1:T}}{\epsilon_{t+1:T}} f_t^\tr x_t + \Eunder{z_{t+1:T}}{\epsilon_{t+1:T}} \left\| C \sum_{i=t+1}^T \epsilon_i z_i - \sum_{i=1}^t x_i \right\| \right\}\\
& \le \Eunder{z_{t+1:T}}{\epsilon_{t+1:T}} \sup_{x_t\in C_t(x_{1:t-1})} \left\{ f_t^\tr x_t +  \left\| C \sum_{i=t+1}^T \epsilon_i z_i - \sum_{i=1}^t x_i \right\| \right\}
\end{align*}
Now, with $f_t$ defined as 
\begin{align*}
	f_t = \argmin{g \in \F} \sup_{x_t\in C_t(x_{1:t-1})} \left\{\inner{g,x_t} + \norm{ C \sum_{i=t+1}^T \epsilon_i z_i  - \sum_{i=1}^{t} x_i } \right\}
\end{align*}
for any given $z_{t+1:T},\epsilon_{t+1:T}$, we have
\begin{align*}
\sup_{x_t\in C_t(x_{1:t-1})} & \left\{ f_t^\tr x_t + \left\| C \sum_{i=t+1}^T \epsilon_i z_i - \sum_{i=1}^t x_i \right\| \right\}  = \inf_{g \in \F} \sup_{x_t\in C_t(x_{1:t-1})} \left\{ g^\tr x_t +  \left\| C \sum_{i=t+1}^T \epsilon_i z_i - \sum_{i=1}^t x_i \right\| \right\}
\end{align*}
We can conclude that for this choice of $q_t$, 
\begin{align*}
\sup_{x_t\in C_t(x_{1:t-1})} &\left\{ \Es{f \sim q_t}{f^\tr x_t} + \Relax{T}{\F}{x_1,\ldots,x_t}\right\}  \le \Eunder{z_{t+1:T}}{\epsilon_{t+1:T}} \inf_{g \in \F} \sup_{x_t\in C_t(x_{1:t-1})} \left\{g^\tr x_t +  \left\| C \sum_{i=t+1}^T \epsilon_i z_i - \sum_{i=1}^t x_i \right\|\right\}\\
& =  \Eunder{z_{t+1:T}}{\epsilon_{t+1:T}} \inf_{g \in \F} \sup_{p \in \Delta(C_t(x_{1:t-1}))} \Es{x_t \sim p}{ g^\tr x_t +   \left\| C \sum_{i=t+1}^T \epsilon_i z_i - \sum_{i=1}^t x_i \right\| }\\
& = \Eunder{z_{t+1:T}}{\epsilon_{t+1:T}}\sup_{p \in \Delta(C_t(x_{1:t-1}))} \inf_{g \in \F}  \left\{\Es{x_t \sim p}{ g^\tr x_t } +  \En_{x_t \sim p}  \left\| C \sum_{i=t+1}^T \epsilon_i z_i - \sum_{i=1}^t x_i \right\|  \right\} \\
&= \Eunder{z_{t+1:T}}{\epsilon_{t+1:T}}\sup_{p \in \Delta(C_t(x_{1:t-1}))} \left\{ -\left\|  \Es{x_t \sim p}{ x_t } \right\| +  \En_{x_t \sim p}  \left\| C \sum_{i=t+1}^T \epsilon_i z_i - \sum_{i=1}^t x_i \right\|  \right\}
\end{align*}
In the next to last step we appealed to the minimax theorem which by linearity of the expression in $g$ and the fact that  $\F$ is a compact convex set; furthermore, the term in the expectation is linear in $p$. 
By triangle inequality, 
\begin{align*}
-\left\|  \Es{x_t \sim p}{ x_t } \right\| +  \En_{x_t \sim p}  \left\| C \sum_{i=t+1}^T \epsilon_i z_i - \sum_{i=1}^t x_i \right\| 
&\leq\En_{x_t \sim p}  \left\| C \sum_{i=t+1}^T \epsilon_i z_i - \sum_{i=1}^{t-1} x_i + \Es{x_t \sim p}{ x_t} - x_t \right\| \\
&\le \En_{x_t,x'_t \sim p}  \left\| C \sum_{i=t+1}^T \epsilon_i z_i - \sum_{i=1}^{t-1} x_i + x'_t - x_t \right\| \\
& = \En_{x_t,x'_t \sim p} \En_{\epsilon_t} \left\| C \sum_{i=t+1}^T \epsilon_i z_i - \sum_{i=1}^{t-1} x_i + \epsilon_t(x'_t - x_t) \right\| 
\end{align*}
where we introduced a Rademacher random variable $\epsilon_t$ via the standard symmetrization argument. We now introduce ``centering'' by $M_t(x_{1:t-1})$. The above expression is equal to
\begin{align*}
&\En_{x_t,x'_t \sim p} \En_{\epsilon_t} \left\| C \sum_{i=t+1}^T \epsilon_i z_i - \sum_{i=1}^{t-1} x_i + \epsilon_t(x'_t-M_t(x_{1:t-1})) +\epsilon_t(M_t(x_{1:t-1})-x_t) \right\| \\
&\leq \En_{x_t\sim p} \En_{\epsilon_t} \left\| C \sum_{i=t+1}^T \epsilon_i z_i - \sum_{i=1}^{t-1} x_i + 2\epsilon_t(x_t-M_t(x_{1:t-1})) \right\| 
\end{align*}
Hence,
\begin{align*}
&\Eunder{z_{t+1:T}}{\epsilon_{t+1:T}}\sup_{p \in \Delta(C_t(x_{1:t-1}))} \left\{ -\left\|  \Es{x_t \sim p}{ x_t } \right\| +  \En_{x_t \sim p}  \left\| C \sum_{i=t+1}^T \epsilon_i z_i - \sum_{i=1}^t x_i \right\|  \right\}\\
&= \Eunder{z_{t+1:T}}{\epsilon_{t+1:T}} \sup_{p \in \Delta(C)}\En_{z_t\sim p} \En_{\epsilon_t} \left\| C \sum_{i=t+1}^T \epsilon_i z_i - \sum_{i=1}^{t-1} x_i + 2\epsilon_t z_t \right\| 
\end{align*}
where in the last step we pass to the set of distributions on $C=\{z: \|z\|\leq \sigma_t\}$. By Assumption~\ref{asm:fpl-linear}, the last expression is upper bounded by
\begin{align*}
\Eunder{z_{t+1:T}}{\epsilon_{t+1:T}} \En_{z_t \sim D_t}\En_{\epsilon_t} \left\| C \sum_{i=t+1}^T \epsilon_i z_i - \sum_{i=1}^{t-1} x_i +   C \epsilon_t z_t \right\|
& = \Relax{T}{\F}{x_1,\ldots,x_{t-1}}
\end{align*}

\end{proof}

\begin{lemma}\label{lem:l1linffpl}
Consider the case when $\X$ is the $\ell_\infty^N$ unit ball and $\F$ is the $\ell_1^N$ unit ball. Let $R_t$ be any random vector and define $j^*_t = \argmax{j \in [d]} |R_t[j]|$. Let
$$
f_t(R_t) = \argmin{f : \norm{f}_1 \le 1} \left\{\sigma_t \sum_{i\ne j^*_t} |f[i]| +  \sigma_t f[j^*_t] \sign(R_t[j^*_t]) + \ip{f}{M_t} \right\},
$$ 
where $M_t$ is any fixed vector in $\reals^N$. Then
\begin{align*}
\Es{R_t}{ \sup_{z : \norm{z}_\infty \le \sigma_t} \left\{\ip{f_t(R_t)}{z + M_t} + \norm{R_t + z}_\infty \right\}} & \le  \Es{R_t}{\inf_{f \in \F} \sup_{z : \norm{z}_\infty \le \sigma_t} \left\{\ip{f}{z + M_t} + \norm{R_t + z}_\infty \right\}} + 4\ \P\left( {\mathcal E}_t^c \right)
\end{align*} 
where ${\mathcal E}_t$ is the event that the largest two coordinates of $R_t$ are separated by at least $4 \sigma_t$.
\end{lemma}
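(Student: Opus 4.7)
The plan is to convert the inner supremum into a finite maximum over coordinate–sign pairs, observe that $f_t$ is exactly the minimizer once we ``freeze'' this maximum at the pair suggested by $R_t$, and then use the gap in $\mathcal{E}_t$ to show that this freezing is deterministically correct on $\mathcal{E}_t$, paying only a crude bounded penalty on $\mathcal{E}_t^c$.

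First I would rewrite $\norm{R_t+z}_\infty = \max_{j\in[N],\,s\in\{\pm1\}} s(R_t[j]+z[j])$, exchange the finite max with $\sup_z$ (they commute), and compute the resulting linear supremum over $\{z:\norm{z}_\infty\le\sigma_t\}$ coordinatewise in closed form. This yields
$$V(f)\;:=\;\sup_{\norm{z}_\infty\le\sigma_t}\Big\{\ip{f}{z+M_t}+\norm{R_t+z}_\infty\Big\}\;=\;\ip{f}{M_t}+\max_{j,s}\Phi_{j,s}(f),$$
with $\Phi_{j,s}(f):=s\,R_t[j]+\sigma_t\sum_{i\ne j}|f[i]|+\sigma_t|f[j]+s|$. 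For the ``natural'' pair $(j^*_t,s^*_t)$ with $s^*_t=\sign(R_t[j^*_t])$, the elementary identity $|f[j^*_t]+s^*_t|=1+s^*_t f[j^*_t]$ (valid whenever $|f[j^*_t]|\le\norm{f}_1\le 1$) makes $\ip{f}{M_t}+\Phi_{j^*_t,s^*_t}(f)$ coincide, up to an $f$-independent constant, with the objective defining $f_t$ in the lemma. Hence $f_t$ is precisely $\argmin{f\in\F}\{\ip{f}{M_t}+\Phi_{j^*_t,s^*_t}(f)\}$.

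The key step is showing that on $\mathcal{E}_t$ the pair $(j^*_t,s^*_t)$ is the argmax of $\Phi_{j,s}(f)$ for \emph{every} $f\in\F$, whence $V(f)=\ip{f}{M_t}+\Phi_{j^*_t,s^*_t}(f)$ on this event and $f_t$ becomes a minimizer of $V$ itself. Two cases must be checked. (i) For $j\ne j^*_t$ and any $s$, using $\sum_{i\ne j^*_t}|f[i]|-\sum_{i\ne j}|f[i]|=|f[j]|-|f[j^*_t]|$, the difference $\Phi_{j^*_t,s^*_t}(f)-\Phi_{j,s}(f)$ equals $|R_t[j^*_t]|-s R_t[j]+\sigma_t\bigl[(|f[j^*_t]+s^*_t|-|f[j^*_t]|)-(|f[j]+s|-|f[j]|)\bigr]$; the first term is at least $4\sigma_t$ by the definition of $\mathcal{E}_t$, each bracketed scalar lies in $[-1,1]$ (elementary, since $|f[\cdot]|\le 1$), so the $\sigma_t$-bracket is at least $-2\sigma_t$, giving a lower bound of $2\sigma_t>0$. (ii) For $(j,s)=(j^*_t,-s^*_t)$, applying the identity twice yields $\Phi_{j^*_t,s^*_t}(f)-\Phi_{j^*_t,-s^*_t}(f)=2|R_t[j^*_t]|+2\sigma_t s^*_t f[j^*_t]\ge 2|R_t[j^*_t]|-2\sigma_t\ge 6\sigma_t>0$, invoking $|R_t[j^*_t]|\ge 4\sigma_t$, which follows from $\mathcal{E}_t$. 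The principal obstacle is case~(i): the separation must be exactly $4\sigma_t$ (and not $\sigma_t$), precisely to absorb the $\pm 2\sigma_t$ adversarial swing in the $f$-dependent terms.

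Finally, on $\mathcal{E}_t^c$ I would only use a crude Lipschitz bound. Letting $z^*$ attain the supremum defining $V(f_t)$ and $f^*\in\argmin{f\in\F}V(f)$, the two $\norm{R_t+z^*}_\infty$ terms cancel and
$$V(f_t)-V(f^*)\;\le\;\ip{f_t-f^*}{z^*+M_t}\;\le\;\norm{f_t-f^*}_1\,\norm{z^*+M_t}_\infty\;\le\;2\cdot 2=4,$$
using $\norm{f_t-f^*}_1\le 2$ together with the standing assumptions $M_t\in\cX$ (so $\norm{M_t}_\infty\le 1$) and $\sigma_t\le 1$. Combining the two events and taking expectation in $R_t$ gives $\En V(f_t)\le \En\inf_{f}V(f)+4\,\P(\mathcal{E}_t^c)$, which is exactly the claim.
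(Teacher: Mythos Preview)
Your approach is essentially the paper's: rewrite the inner supremum as a finite maximum over coordinate/sign pairs, show (more carefully than the paper does) that on $\mathcal{E}_t$ this maximum is attained at $(j^*_t,\sign R_t[j^*_t])$ uniformly in $f$, so that $f_t$ minimizes $V$, and then pay a bounded penalty on $\mathcal{E}_t^c$. One small remark on the last step: your Hölder bound invokes the side assumptions $\|M_t\|_\infty\le 1$ and $\sigma_t\le 1$, but you can avoid both by reusing your own $\Phi$-decomposition --- since $f_t$ minimizes $G(f):=\ip{f}{M_t}+\Phi_{j^*_t,s^*_t}(f)$ and $V\ge G$, one has $V(f_t)-\inf_f V(f)\le V(f_t)-G(f_t)=\max_{j,s}\Phi_{j,s}(f_t)-\Phi_{j^*_t,s^*_t}(f_t)\le 2\sigma_t$ directly from your case~(i)/(ii) estimates with the gap set to zero, which gives the constant $4$ once $\sigma_t\le 2$.
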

\begin{proof}[\textbf{Proof of Lemma~\ref{lem:l1linffpl}}]
	For any given vector $R_t,M_t \in \reals^N$ and any $f\in\F$, 
	\begin{align*}
	\sup_{z : \norm{z}_\infty \le \sigma_t} \left\{\ip{f}{z + M_t} + \norm{R_t + z}_\infty \right\} 
	& = \sup_{z \in \{-1,1\}^d} \left\{\sigma_t \ip{f}{z } + \norm{R_t + \sigma_t z}_\infty \right\} + \ip{f}{M_t}
	\end{align*}
	Leaving out the $\ip{f}{M_t}$ term, we can further rewrite the above supremum as
	\begin{align*}
		\sup_{z \in \{-1,1\}^d} \left\{\sigma_t \sum_{i=1}^d f[i] \cdot z[i] + \max_{j \in [d]} |R_t[j] + \sigma_t z[j]| \right\}  
		&= \max_{j \in [d]} \sup_{z \in \{-1,1\}^d} \left\{\sigma_t \sum_{i=1}^d f[i] \cdot z[i] +  \left|R_t[j] + \sigma_t z[j]\right| \right\} 
	\end{align*}
	By optimizing over coordinates $i\neq j$, this is equal to
	\begin{align*}
	&\max_{j \in [d]}\left\{ \sigma_t \sum_{i \ne j} |f[i]| +  \max\{ |R_t[j] + \sigma_t| + \sigma_t f[j] ~,~ |R_t[j] - \sigma_t| - \sigma_t f[j] \} \right\} \\
	& = \sigma_t \norm{f}_1 + \max_{j \in [d]}\left\{  - \sigma_t |f[j]| +  \max\{ |R_t[j] + \sigma_t| + \sigma_t f[j] ~,~ |R_t[j] - \sigma_t| - \sigma_t f[j] \} \right\} 
	\end{align*}
	Under the event $\mathcal E_t$, the maximum over $j$ will be achieved at $j^*_t$, thus yielding
\begin{align*}
&\sigma_t \norm{f}_1 + |R_t[j^*_t]| + \sigma_t + \sigma_t |f[j^*_t]| \left(\sign(f[j^*_t]) \sign(R_t[j^*_t])  - 1\right)  \\
& = \sigma_t \norm{f}_1 + |R_t[j^*_t]| + \sigma_t - 2 \sigma_t |f[j^*_t]| \ind{\sign(f[j^*_t])  \ne \sign(R_t[j^*_t]) }  
\end{align*}
while outside of $\mathcal E_t$ the above solution can be off by at most $4$. We may also write the above expression as
\begin{align*}
|R_t[j^*_t]| + \sigma_t +  \sigma_t \sum_{i\ne j^*_t} |f[i]| +  \sigma_t f[j^*_t] \sign(R_t[j^*_t]) \ .
\end{align*}
So, under the event $\mathcal E_t$, the minimum is attained at  
$$
f_t(R_t) = \argmin{f : \norm{f}_1 \le 1} \left\{\sigma_t \sum_{i\ne j^*_t} |f[i]| +  \sigma_t f[j^*_t] \sign(R_t[j^*_t]) + \ip{f}{M_t} \right\} 
$$
and so 
$$
\sup_{z : \norm{z}_\infty \le \sigma_t} \left\{\ip{f_t(R_t)}{z + M_t} + \norm{R_t + z}_\infty \right\}  \le  \inf_{f \in \F} \sup_{z : \norm{z}_\infty \le \sigma_t} \left\{\ip{f}{z + M_t} + \norm{R_t + z}_\infty \right\} ~.
$$
On the other hand on the event $\mc{E}_t^c$,
$$
\sup_{z : \norm{z}_\infty \le \sigma_t} \left\{\ip{f_t(R_t)}{z + M_t} + \norm{R_t + z}_\infty \right\}  -  \inf_{f \in \F} \sup_{z : \norm{z}_\infty \le \sigma_t} \left\{\ip{f}{z + M_t} + \norm{R_t + z}_\infty \right\} \le 4
$$
and so
$$
\sup_{z : \norm{z}_\infty \le \sigma_t} \left\{\ip{f_t(R_t)}{z + M_t} + \norm{R_t + z}_\infty \right\}  \le  \inf_{f \in \F} \sup_{z : \norm{z}_\infty \le \sigma_t} \left\{\ip{f}{z + M_t} + \norm{R_t + z}_\infty \right\}   + 4\ind{\mc{E}_t^c}~.
$$
Taking expectation proves the result.
\end{proof}

\begin{proof}[\textbf{Proof of Theorem~\ref{thm:fpl_update}}]
From Lemma \ref{lem:fpl} we have that the randomized strategy which at time $t$, draws $z_{t+1},\ldots,z_T$ from $D_{t+1},\ldots,D_T$ respectively and Rademacher random variables $\epsilon=(\epsilon_{t+1},\ldots,\epsilon_T)$, and then picks
$$
	f_t = \argmin{g \in \F} \sup_{x_t\in C_t(x_{1:t-1})} \left\{\inner{g,x_t} + \norm{ C \sum_{i=t+1}^T \epsilon_i z_i  - \sum_{i=1}^{t-1} x_i - x_t }_* \right\}
$$
is admissible w.r.t. relaxation 
$$
\Relax{T}{\F}{x_1,\ldots,x_t} = \Eunderone{z_{t+1}\sim D_{t+1},\ldots z_T \sim D_{T}}\En_{\epsilon}   \left\|C \sum_{i=t+1}^T \epsilon_i z_i - \sum_{i=1}^t x_i \right\|_*~.
$$
However by Lemma \ref{lem:l1linffpl}, we have that for the randomized algorithm that at time $t$, draws $z_{t+1},\ldots,z_T$ from $D_{t+1},\ldots,D_T$ respectively and Rademacher random variables $\epsilon=(\epsilon_{t+1},\ldots,\epsilon_T)$, and then picks 
\begin{align}
f_t(R_t) = \argmin{f : \norm{f}_1 \le 1} \left\{\sigma_t \sum_{i\ne j^*_t} |f[i]| +  \sigma_t f[j^*_t] \sign(R_t[j^*_t]) + \ip{f}{M_t} \right\}~,\label{eq:fpllong}
\end{align}
we have that
$$
\Es{R_t}{ \sup_{z : \norm{z}_\infty \le \sigma_t} \left\{\ip{f_t(R_t)}{z + M_t} + \norm{R_t + z}_\infty \right\}}  \le  \Es{R_t}{\inf_{f \in \F} \sup_{z : \norm{z}_\infty \le \sigma_t} \left\{\ip{f}{z + M_t} + \norm{R_t + z}_\infty \right\}} + 4\ \P\left( {\mathcal E}_t^c \right)
$$
Hence we can conclude that the Randomized strategy that at time $t$, draws $z_{t+1},\ldots,z_T$ from $D_{t+1},\ldots,D_T$ respectively and Rademacher random variables $\epsilon=(\epsilon_{t+1},\ldots,\epsilon_T)$, and then picks $f_t(R_t) = \argmin{f : \norm{f}_1 \le 1} \left\{\sigma_t \sum_{i\ne j^*_t} |f[i]| +  \sigma_t f[j^*_t] \sign(R_t[j^*_t]) + \ip{f}{M_t} \right\}$ is admissible w.r.t. the relaxation,
\begin{align}
\Relax{T}{\F}{x_1,\ldots,x_t} = \Eunderone{z_{t+1}\sim D_{t+1},\ldots z_T \sim D_{T}}\En_{\epsilon}   \left\|C \sum_{i=t+1}^T \epsilon_i z_i - \sum_{i=1}^t x_i \right\|_* +\, 4\, \sum_{i=t+1}^T \P\left( {\mathcal E}_t^c \right)~. \label{eq:newrel}
\end{align}
Hence as mentioned in Equation \eqref{eq:sum_cond_exp_bdd_by_relax} we can conclude that the expected regret of the randomized strategy that plays $f_t(R_t)$ on round $t$ is bounded as
$$
\E{\Reg_T} \le C\ \En_{z_{1:T}} \En_{\epsilon} \left\|\sum_{t=1}^T \epsilon_t z_t\right\|_* + 4\ \sum_{t=1}^T  \P\left( {\mathcal E}_t^c \right) \ .
$$

Now we claim that the update in Equation~\eqref{eq:fpllong} is same as the one in Equation~\eqref{eq:fplup} given in the theorem statement and so the above regret bound is true for the update provided in the theorem.  To prove this, we first show that the $f_t(R_t)$ given in Equation~\eqref{eq:fpllong} is on a vertex of the $\ell_1$ ball. To see this note that we can rewrite the minimization as 
$$
\argmin{s : \{\pm 1\}^d} \argmin{g : \forall i \in [d], g[i] \ge 0, \sum_{i=1}^d g[i] \le 1 } \left\{\sigma_t \sum_{i=1}^d  g[i] +  \sigma_t s[j^*_t] g[j^*_t] \sign(R_t[j^*_t])  + \sum_{i=1}^d s[i] g[i] M_t[i] \right\}
$$
and $f_t(R_t) = (s[1] g[1], \ldots,s[d] g[d])$. That is vector $s$ is the sign vector, $\sign(f_t(R_t))$, and vector $g$ is the magnitude vector, $|f_t|$. Further note that given $s \in \{\pm 1\}^d$, the minimization problem in terms of $g$ is linear in $g$. Hence the solution 
will be at a vertex of the set $\{g : \forall i \in [d] , g[i] \ge 0, \sum_{i=1}^d g[i] \le 1\}$ as its a linear optimization problem. Hence either $g = 0$ or $g = e_i$ for some $i \in [d]$. However the solution is clearly not $f_t(R_t) = 0$ as the minimum has to at least be negative unless $M_t$ and $R_t$ are both 0. Thus we see that $g = e_i$ for some $i$ and so $f_t(R_t)$ is of form $s[i] e_i$ and so $g$ is on the vertex of the $\ell_1^N$ ball. Hence we conclude that 
update in Equation~\eqref{eq:def_general_fpl_with_R} can be rewritten as $f_t(R_t) = s_t e_{i_t}$ where
\begin{align*}
(i_t,s_t) & = \argmin{ i \in [d], s \in \{\pm 1\}} \left\{\sigma_t  \ind{i \ne j^*_t} +  \sigma_t s \ind{i_t = j^*_t} \sign(R_t[j^*_t]) + s M_t[i] \right\}
\end{align*}
Let $i^*_t = \argmax{i \in [d], i \ne j^*_t} |M_t[i]|$ it is easy to see that the $f_t(R_t) = s_t e_{i_t}$ is given as follows : 
\begin{align*}
f_t(R_t) = \left\{ \begin{array}{cl}
- \sign(M_t[i^*_t]) e_{i^*_t} & \textrm{if }\sigma_t - |M_t[i^*_t]| <  - \left|\sigma_t\ \sign(R_t[j^*_t]) + M_t[j^*_t]\right| \\
- \sign(\sigma_t R_t[j^*_t] + M_t[j^*]) e_{j^*_t}  & \textrm{otherwise}
\end{array}
\right.
\end{align*}
Hence we have shown that the update in Equation~\eqref{eq:fplup} is admissible w.r.t. relaxation in Equation~\eqref{eq:newrel} and so enjoys the expected regret bound :
$$
\E{\Reg_T} \le C\ \En_{z_{1:T}} \En_{\epsilon} \left\|\sum_{t=1}^T \epsilon_t z_t\right\|_* + 4\ \sum_{t=1}^T  \P\left( {\mathcal E}_t^c \right) \ ,
$$
thus proving the theorem.
\end{proof}

\begin{proof}[\textbf{Proof of Corollary~\ref{cor:fpl_simplex_update}}]
For the case when $\F$ is the simplex, since for each $f \in \F$ and each $i \in [d]$, $f[i] \ge 0$, if we add an arbitrary number $B$ to each coordinate of $x_t \in [-1,1]^d$, the regret remains unchanged, that is, 
\begin{align*}
 \sum_{t=1}^T  \ip{f_t}{x_t} - \inf_{f \in \F} \sum_{t=1}^T \ip{f}{x_t} =  \sum_{t=1}^T  \ip{f_t}{x_t + B\, \mathbf{1} } - \inf_{f \in \F} \sum_{t=1}^T \ip{f}{x_t + B\, \mathbf{1}}
\end{align*}
where $\mathbf{1} = (1,\ldots,1)\in \reals^d$. Hence, let us consider adding to each coordinate of every $x_t$ a large  constant $B<0$ (for instance think of $B < - e^{T^2}$ or smaller), and set $\tilde{x}_t = x_t + B \mathbf{1}$ and $\tilde{M}_t = M_t + B \mathbf{1}$. Notice that with predictable process given by $\tilde{M}_t$ and with adversary playing $\tilde{x}_t$ we still have that $\norm{\tilde{x}_t - \tilde{M_t}} = \norm{z_t} \le \sigma_t$. We now claim that the algorithm for the $\ell_1$ ball from the previous section operating on $\tilde{x}_t$'s has the following properties: it (a) produces solutions within simplex, (b) does not require the knowledge of $B$, and (c) attains a regret bound that does not depend on $B$. We will further also show that this solution is the one given in Equation~\eqref{eq:fplsimplexup} of the Corollary statement.

Let us first begin by noting that when we look at the linear game on input sequence $\tilde{x}_1,\ldots, \tilde{x}_T$, even when we take $\F$ to be all of the $\ell_1$ ball, the comparator will in fact be in the positive orthant. To see this note that since $x_t \in [-1,1]$, each $\tilde{x}_t$ is in the negative orthant. Hence, 
$$
- \inf_{i \in [d]} \sum_{t=1}^T \ip{e_i}{\tilde{x}_t}  = - \inf_{f : \norm{f}_1 \le 1} \sum_{t=1}^T \ip{f}{\tilde{x}_t} = \norm{\sum_{t=1}^T \tilde{x}_t}_{\infty}
$$ 
If we further show that each $f_t$ picked by algorithm in Theorem~\ref{thm:fpl_update} is also in the simplex then we effectively show that the algorithm from previous section can be adapted to play on the simplex by simply adding this large negative number to each coordinate of $x_t$'s. Further the randomized algorithm also enjoys the same regret bound provided in previous section and since the regret bound only depended on magnitude of $z_t = x_t - M_t = \tilde{x}_t - \tilde{M}_t$, we can conclude that the regret bound only depends on $\sigma_t$ and is independent of $B$.

Notice that $\argmax{i \in [d]} |\tilde{M_t}[i]| = \argmax{i \in [d]} -\tilde{M_t}[i] = \argmax{i \in [d]} - M_t[i] - B = \argmin{i \in [d]} M_t[i] = i^*_t$. Similarly we also have that $\argmax{i \in [d]} |\tilde{R_t}[i]| = j^*_t$ where $\tilde{R}_t = \sum_{i=1}^{t-1} \tilde{x}_i- C \sum_{i=t+1}^T \epsilon_i z_i + \tilde{M}_t$.
Now note that the algorithm of the previous section for the game where adversary plays $\tilde{x}_t$ is given by 
\begin{align*}
	f_t =  \left\{ \begin{array}{cl}
- \sign(\tilde{M}_t[i^*_t]) e_{i^*_t} & \textrm{if }\sigma_t - |\tilde{M}_t[i^*_t]| <  - \left|\sigma_t\ \sign(\tilde{R}_t[j^*_t]) + \tilde{M}_t[j^*_t]\right| \\
- \sign(\sigma_t \tilde{R}_t[j^*_t] + \tilde{M}_t[j^*]) e_{j^*_t}  & \textrm{otherwise}
\end{array}
\right.
\end{align*}
Since $B$ is a very large negative constant, we have that $\sign(\tilde{R}_t[j^*_t])  = \sign(\tilde{M}_t[j^*_t]) = -1$ and that $|\tilde{M}_t[i^*_t]| = -\tilde{M}_t[i^*_t] = - M_t[i^*_t]  - B$ and similarly, $\left|\sigma_t\ \sign(\tilde{R}_t[j^*_t]) + \tilde{M}_t[j^*_t]\right| = \sigma_t  - \tilde{M}[j^*_t]  = \sigma_t  - M_t[j^*_t] - B$. Therefore, we can rewrite $f_t$'s as
\begin{align*}
	f_t =  \left\{ \begin{array}{cl}
 e_{i^*_t} & \textrm{if } 2 \sigma_t < M_t[j^*_t] - M_t[i^*_t] \\
e_{j^*_t}  & \textrm{otherwise}
\end{array}
\right.
\end{align*}
We conclude that the randomized algorithm for the $\ell_1/\ell_\infty$ case from the previous section on the sequence given by $\tilde{x}_t$ produces $f_t$'s in the simplex. Further regret of the algorithm for on sequence $x_1,\ldots,x_T$ is same as its regret on $\tilde{x}_1,\ldots,\tilde{x}_T$ and this regret is bounded as
$$
\E{\Reg_T} \le C\ \En_{z_{1:T}} \En_{\epsilon} \left\|\sum_{t=1}^T \epsilon_t z_t\right\|_* + 4\ \sum_{t=1}^T  \P\left( {\mathcal E}_t^c \right) \ .
$$
This concludes the proof of the corollary. Notice that throughout we assumed $B$ is a negative constant with large enough magnitude so that for any $t$, $\sign(\tilde{R}_t) = -1$ (or at least this is true with very high probability). However since the result did not depend on $B$ nor does the final algorithm we can simply take $B$ to have magnitude tending to $\infty$ so that $\sign(\tilde{R}_t) = -1$ almost surely. 

\end{proof}

\begin{proof}[\textbf{Proof of Lemma~\ref{lem:banditlstar}}]
In view of Lemma~\ref{lem:self_conc_full_info}, for any $f^*\in\F$
\begin{align*}
	\sum_{t=1}^T \inner{h_t,\tilde{x}_t} - \sum_{t=1}^T \inner{f^*,\tilde{x}_t} &\leq \eta^{-1}\cR(f^*) + 2\eta\sum_{t=1}^T (\|\tilde{x}_t\|_t^*)^2 \\
	&= \eta^{-1}\cR(f^*) + 2\eta\sum_{t=1}^T n^2 (\inner{f_t, x_t})^2 \left(\left\|\varepsilon_t \lambda_{i_t}^{1/2} \ev{i_t}\right\|_t^*\right)^2\\
	&\le \eta^{-1}\cR(f^*) + 2 s\ \eta n^2 \sum_{t=1}^T  \inner{f_t, x_t} \left(\left\|\varepsilon_t \lambda_{i_t}^{1/2} \ev{i_t}\right\|_t^*\right)^2\\
	&\le \eta^{-1}\cR(f^*) + 2 s\ \eta n^2 \sum_{t=1}^T  \inner{f_t, x_t}\ .
\end{align*}
It is easy to verify that $\tilde{x}_t$ is an unbiased estimate of $x_t$ and $\E f_t = h_t$. Thus, 
\begin{align*}
	\En\left[ \sum_{t=1}^T \inner{f_t,x_t} - \sum_{t=1}^T \inner{f^*,x_t} \right]
	&= \En\left[ \sum_{t=1}^T \inner{h_t,x_t} - \sum_{t=1}^T \inner{f^*,x_t} \right]\\
	&=\En\left[ \sum_{t=1}^T \inner{h_t,\tilde{x}_t} - \sum_{t=1}^T \inner{f^*,\tilde{x}_t} \right]   \\
	&\leq \eta^{-1}\cR(f^*) +  2 s\ \eta n^2 \E{\sum_{t=1}^T  \inner{f_t, x_t}} \ .
\end{align*}
Hence we can conclude that
\begin{align*}
\E{\sum_{t=1}^T  \inner{f_t, x_t}} \le \frac{1}{1 - (2 s n^2) \eta } \left(\inf_{f \in \F}\sum_{t=1}^T \inner{f,x_t} +  \eta^{-1}\cR(f^*)  \right)
\end{align*}

\end{proof}

\begin{proof}[\textbf{Proof of Lemma \ref{lem:non-stoch-multiarmed}}]
We are interested in solving the multi-armed bandit problem using the self-concordant barrier method so we can get a regret bound in terms of the loss of the optimal arm. We do this in two steps, first we provide an algorithm for linear bandit problem over the simplex. That is we provide an algorithm for the case when learner plays on each round $q_t \in \Delta([d])$, adversary plays loss vector $x_t \in [0,s]^d$ and learner observes $\ip{q_t}{x_t}$ at the end of the round. Next we show that this bandit algorithm over the simplex can be converted into a multi-armed bandit algorithm. To this end let us first develop a linear bandit algorithm over the simplex based on self-concordant barrier algorithm (SCRiBLe).

\paragraph{Bandit algorithm over simplex:}
Note that one can rewrite the loss of any $q \in \Delta([d])$ over any $x \in [0,s]^d$ as
\begin{align*}
\ip{q}{x} & = \ip{q[1:d-1]}{x[1:d-1]} + (1 - \ip{q[1:d-1]}{\mbf{1}}) x[d] \\
&= \ip{q[1:d-1]}{x[1:d-1] - \mbf{1} x[d]} +  x[d]\\
&= \ip{(q[1:d-1], 1)}{(x[1:d-1] - \mbf{1} x[d] , x[d])}
\end{align*}
Since the above we have for any distribution over the $d$ arms $q$, and any loss vector $x$, we see that solving the linear bandit problem where learner picks from simplex and adversary picks from $[0,s]^d$ is equivalent to the linear bandit game where learner picks vectors from set $\F'$ and adversary picks vectors from set $\X'$ where
$$
\F' = \left\{(f,1) : f \in \reals^{d-1} \textrm{ s.t. }\forall i \in [d-1], f[i] \ge 0 , \sum_{i=1}^{d-1} f[i] \le 1\right\}
$$
and $\X' = \left\{(x[1:d-1] - \mbf{1} x[d] , x[d]) : x \in \X\right\}$.  Now we claim that the function  $\cR(f) = - \sum_{i=1}^{d-1} \log(f[i]) - \log(1 - \sum_{i=1}^{d-1} f[i])$ is a self-concordant barrier of the set $\F'$. To see this first note that the function $\tilde{\cR}(f[1:d-1]) = - \sum_{i=1}^{d-1} \log(f[i]) - \log(1 - \sum_{i=1}^{d-1} f[i])$ is a self-concordant barrier on the set $\{ f \in \reals^{d-1} : \forall i \in [d-1] f[i] \ge 0  , \sum_{i=1}^{d-1} f[i] \le 1\}$. Now since the function $\cR$ is simply the same as the function  $\tilde{\cR}$ applied only on the first $d-1$ coordinates of the input it is easy to see that $\cR$ is a self concordant barrier on $\F'$. Hence using Lemma \ref{lem:banditlstar} we can conclude that for the SCRiBLe algorithm with this reduction with any choice of $\eta > 0$ and any $q^* \in \Delta([d])$, 
\begin{align}
\E{\sum_{t=1}^T  \inner{q_t, x_t}} &\le \frac{1}{1 - (2 s d^2) \eta } \left(\sum_{t=1}^T \inner{q^*,x_t} +  d {\eta}^{-1} \max_{i \in [d]} \log(1/q^*[i]) \right) \notag \\
& \le \frac{1}{1 - (2 s d^2) \eta } \left(\inf_{q \in \Delta([d])}\sum_{t=1}^T \inner{q,x_t} + 1 +  d {\eta}^{-1}\log(d T) \right) \notag  \\
& = \frac{1}{1 - (2 s d^2) \eta } \left(\inf_{j \in [d]}\sum_{t=1}^T \inner{e_{j},x_t} + 1 +  d {\eta}^{-1}\log(d T) \right) \label{eq:simpbandbnd}
\end{align}
where the last step obtained by picking $q^* = (1 - 1/T) e_{j^*} + \sum_{i \ne j^*} (1/(d-1) T) e_i$ with $j^* = \argmin{j \in [d]} \sum_{t=1}^T \ip{e_j}{x_t}$.

 Thus we have a linear bandit algorithm over the simplex with the bound given in Equation \eqref{eq:simpbandbnd}. Now we claim that this algorithm can be used for solving multi-armed bandit problem.

\paragraph{Using linear bandit algorithm over simplex for multi-armed bandit problem:}
 We claim that the algorithm we have developed for the simplex case can be used for the multi-armed bandit problem. To see this note first that for any choice of $q_1,\ldots,q_T \in \Delta([d])$ and any choice of $x_1,\ldots,x_T$, 
\begin{align*}
\E{\sum_{t=1}^T \ip{q_t}{x_t} } - \inf_{q \in \Delta([d])} \ip{q}{x_t} & = \E{\sum_{t=1}^T \Es{j_t \sim q_t}{\ip{e_{j_t}}{x_t}} } - \inf_{i \in [d]} \ip{e_i}{x_t}\\
& = \E{\sum_{t=1}^T \ip{e_{j_t}}{x_t}  - \inf_{i \in [d]} \ip{e_i}{x_t}}
\end{align*}
Hence this shows that if we have an algorithm that outputs $q_1,\ldots,q_T$ then on each round by sampling the arm to pick from $q_t$ we get the same regret bound. However note that to run a bandit algorithm over the simplex we needed to be able to observe $\ip{q_t}{x_t}$, while in reality we only observe $\ip{e_{j_t}}{x_t}$. There is an easy remedy for this. Note that we needed to observe $\ip{q_t}{x_t}$ only to produce the unbiased estimate $\tilde{x}_t := d \left(\inner{q_t, x_t} \right)\varepsilon_t \lambda_{i_t}^{1/2} \cdot \ev{i_t}$. However,  $d \left(\inner{q_t, x_t} \right)\varepsilon_t \lambda_{i_t}^{1/2} \cdot \ev{i_t} = \Es{j_t \sim q_t}{d \left(\inner{e_{j_t}, x_t} \right)\varepsilon_t \lambda_{i_t}^{1/2} \cdot \ev{i_t}}$. Hence,  $d \left(\inner{e_{j_t}, x_t} \right)\varepsilon_t \lambda_{i_t}^{1/2} \cdot \ev{i_t}$ is also an unbiased estimate of $\tilde{x}_t$ and so the algorithm can simply use $\ip{e_{j_t}}{x_t}$ to build the estimates while enjoying the same bound in expectation. Thus, SCRiBLe for multi-armed bandit enjoys the bound
$$
\En \left\{ \sum_{t=1}^T \inner{e_{j_t}, x_t} \right\} \leq \frac{1}{1-4 \eta s d^2}\left(\inf_{j\in[d]}\sum_{t=1}^T \inner{e_{j}, x_t} + d \eta^{-1}\log (d T)\right)
$$
which concludes the proof.
\end{proof}

 \section*{Acknowledgements}
We gratefully acknowledge the support of NSF under grants CAREER DMS-0954737 and CCF-1116928, as well as Dean's Research Fund.

\bibliographystyle{plain}
\bibliography{paper}

\begin{thebibliography}{10}

\bibitem{abernethy2008competing}
J.~Abernethy, E.~Hazan, and A.~Rakhlin.
\newblock Competing in the dark: An efficient algorithm for bandit linear
  optimization.
\newblock In {\em Proceedings of the 21st Annual Conference on Learning Theory
  (COLT)}, volume~3, page~3, 2008.

\bibitem{AbeRak09}
J.~Abernethy and A.~Rakhlin.
\newblock Beating the adaptive bandit with high probability.
\newblock Technical Report UCB/EECS-2009-10, EECS Department, University of
  California, Berkeley, Jan 2009.

\bibitem{abernethy2012interior}
J.D. Abernethy, E.~Hazan, and A.~Rakhlin.
\newblock Interior-point methods for full-information and bandit online
  learning.
\newblock {\em Information Theory, IEEE Transactions on}, 58(7):4164--4175,
  2012.

\bibitem{AueCesFreSch03nonstochastic}
P.~Auer, N.~Cesa-Bianchi, Y.~Freund, and R.~E. Schapire.
\newblock The nonstochastic multiarmed bandit problem.
\newblock {\em SIAM J. Comput.}, 32(1):48--77, 2003.

\bibitem{BarHazRak07}
P.L. Bartlett, E.~Hazan, and A.~Rakhlin.
\newblock Adaptive online gradient descent.
\newblock {\em Advances in Neural Information Processing Systems}, 20:65--72,
  2007.

\bibitem{beck2003mirror}
A.~Beck and M.~Teboulle.
\newblock Mirror descent and nonlinear projected subgradient methods for convex
  optimization.
\newblock {\em Operations Research Letters}, 31(3):167--175, 2003.

\bibitem{PLG}
N.~Cesa-Bianchi and G.~Lugosi.
\newblock {\em Prediction, Learning, and Games}.
\newblock Cambridge University Press, 2006.

\bibitem{cesa2007improved}
N.~Cesa-Bianchi, Y.~Mansour, and G.~Stoltz.
\newblock Improved second-order bounds for prediction with expert advice.
\newblock {\em Machine Learning}, 66(2):321--352, 2007.

\bibitem{Chiangetal12}
C.-K. Chiang, T.~Yang, C.-J. Lee, M.~Mahdavi, C.-J. Lu, R.~Jin, and S.~Zhu.
\newblock Online optimization with gradual variations.
\newblock In {\em COLT}, 2012.

\bibitem{hazan2009better}
E.~Hazan and S.~Kale.
\newblock Better algorithms for benign bandits.
\newblock In {\em Proceedings of the twentieth Annual ACM-SIAM Symposium on
  Discrete Algorithms}, pages 38--47. Society for Industrial and Applied
  Mathematics, 2009.

\bibitem{hazan2010extracting}
E.~Hazan and S.~Kale.
\newblock Extracting certainty from uncertainty: Regret bounded by variation in
  costs.
\newblock {\em Machine learning}, 80(2):165--188, 2010.

\bibitem{KalVem05}
A.~Kalai and S.~Vempala.
\newblock Efficient algorithms for online decision problems.
\newblock {\em J. Comput. Syst. Sci.}, 71(3):291--307, 2005.

\bibitem{NemTod08}
A.S. Nemirovski and M.J. Todd.
\newblock Interior-point methods for optimization.
\newblock {\em Acta Numerica}, 17(1):191--234, 2008.

\bibitem{lecturenotes08}
A.~Rakhlin.
\newblock Lecture notes on online learning, 2008.
\newblock Available at {\small
  \url{http://www-stat.wharton.upenn.edu/~rakhlin/papers/online_learning.pdf}}.

\bibitem{RakShaSri12arxiv}
A.~Rakhlin, O.~Shamir, and K.~Sridharan.
\newblock Relax and localize: From value to algorithms.
\newblock {\em CoRR}, abs/1204.0870, 2012.
\newblock Submitted.

\bibitem{RakSriTew10nips}
A.~Rakhlin, K.~Sridharan, and A.~Tewari.
\newblock Online learning: Random averages, combinatorial parameters, and
  learnability.
\newblock In {\em NIPS}, 2010.
\newblock Available at http://arxiv.org/abs/1006.1138.

\bibitem{RakSriTew11nips}
A.~Rakhlin, K.~Sridharan, and A.~Tewari.
\newblock Online learning: Stochastic, constrained, and smoothed adversaries.
\newblock In {\em NIPS}, 2011.
\newblock Available at http://arxiv.org/abs/1104.5070.

\end{thebibliography}

\end{document}